\def\eqref#1{equation~\ref{#1}}
\def\1{\bm{1}}
\DeclareMathAlphabet{\mathsfit}{\encodingdefault}{\sfdefault}{m}{sl}
\SetMathAlphabet{\mathsfit}{bold}{\encodingdefault}{\sfdefault}{bx}{n}
\theoremstyle{plain}
\newtheorem{definition}{Definition}[section]
\newtheorem{Theorem}{\textbf{Theorem}}
\newtheorem{Lemma}{\textbf{Lemma}}
\newtheorem{Corollary}{\textbf{Corollary}}
\newtheorem*{Remark}{\textbf{Remark}}
\newtheorem{Assumption}{\textbf{Assumption}}
\newtheorem{Example}{Example}
\newtheorem{Case}{Case}
\title{Online Reward-Weighted Fine-Tuning of Flow Matching with Wasserstein Regularization}
\author{Jiajun Fan$^{1 *}$, Shuaike Shen$^{2 \dagger}$, Chaoran Cheng$^1$, Yuxin Chen$^{3 \dagger}$, Chumeng Liang$^{4 \dagger}$, Ge Liu$^{1 *}$ \\
$^1$ University of Illinois Urbana-Champaign, $^2$ Zhejiang University.\\
$^3$ Tsinghua University,  $^4$ University of Southern California. \\
$^1$ \texttt{\{jiajunf3, chaoran7, geliu\}@illinois.edu} \\
$^{2,3,4}$ \texttt{\{shenshuaike256, itsme.yxchen, caradryan2022\}@gmail.com} \\
}
\begin{document}

\renewcommand*{\thefootnote}{\fnsymbol{footnote}}
\footnotetext[1]{Corresponding authors.}
\footnotetext[2]{Work done as research interns at University of Illinois Urbana-Champaign.}

\maketitle

\begin{abstract}
Recent advancements in reinforcement learning (RL) have achieved great success in fine-tuning diffusion-based generative models. However, fine-tuning continuous flow-based generative models to align with arbitrary user-defined reward functions remains challenging, particularly due to issues such as policy collapse from overoptimization and the prohibitively high computational cost of likelihoods in continuous-time flows. In this paper, we propose an easy-to-use and theoretically sound RL fine-tuning method, which we term Online Reward-Weighted Conditional Flow Matching with Wasserstein-2 Regularization (ORW-CFM-W2). Our method integrates RL into the flow matching framework to fine-tune generative models with arbitrary reward functions, without relying on gradients of rewards or filtered datasets. By introducing an online reward-weighting mechanism, our approach guides the model to prioritize high-reward regions in the data manifold. To prevent policy collapse and maintain diversity, we incorporate Wasserstein-2 (W2) distance regularization into our method and derive a tractable upper bound for it in flow matching, effectively balancing exploration and exploitation of policy optimization. We provide theoretical analyses to demonstrate the convergence properties and induced data distributions of our method, establishing connections with traditional RL algorithms featuring Kullback-Leibler (KL) regularization and offering a more comprehensive understanding of the underlying mechanisms and learning behavior of our approach. Extensive experiments on tasks including target image generation, image compression, and text-image alignment demonstrate the effectiveness of our method, where our method achieves optimal policy convergence while allowing controllable trade-offs between reward maximization and diversity preservation.
\end{abstract}

\section{Introduction}
Generative models have achieved remarkable success in producing high-fidelity data across various domains, including text and images \citep{instructed_rlhf, sd3}. Among these, continuous flow-based models have gained attention for their more concise and flexible design of the ODE-based denoising process to model complex data distributions \citep{otcfm, fmgm}. However, fine-tuning such flow-based models to align with arbitrary user-defined reward objectives remains challenging. While recent advancements in reinforcement learning (RL) have demonstrated considerable success in fine-tuning diffusion-based generative models \citep{dpo}, their application to continuous flow-based models remains underexplored. 
The primary challenges lie in lacking computationally efficient divergence measurement and tractable methods to avoid overoptimization \citep{ddpo} and policy collapse in fine-tuning continuous-time flow matching models. Particularly, traditional policy gradient methods \citep{ddpo} struggle in this continuous domain due to the high computation cost of computing the exact ODE likelihood for continuous normalizing flows (CNFs), which requires solving intricate transport dynamics and tracking probability flows with model divergence calculation across time (See App. \ref{app: likelihood cal}).

Furthermore, existing diffusion-based RLHF methods, such as DPO \citep{dpo} and DDPO \citep{ddpo}, face significant limitations when applied to CNFs. DPO relies on pairwise reward comparisons and filtered datasets, making it unsuitable for optimizing arbitrary reward functions.  DDPO adopts a PPO-style policy optimization that depends on frequent likelihood calculations, which are computationally intractable for CNFs.  Moreover, methods for diffusion often rely on tractable ELBO likelihood approximation or MDP formulation, which does not apply to CNFs. To tackle these challenges, we propose a novel RL fine-tuning framework that bypasses the need for explicit likelihood estimation by focusing on \textit{reward-weighted objectives} tailored specifically for continuous flow-based models, which we call Reward-Weighted Conditional Flow Matching (RW-CFM).  Our approach significantly reduces computational overhead while maintaining optimal performance.

\begin{wrapfigure}{r}{0.64\columnwidth}
    \vspace{-1.0em}
    \includegraphics[width=0.64\columnwidth]{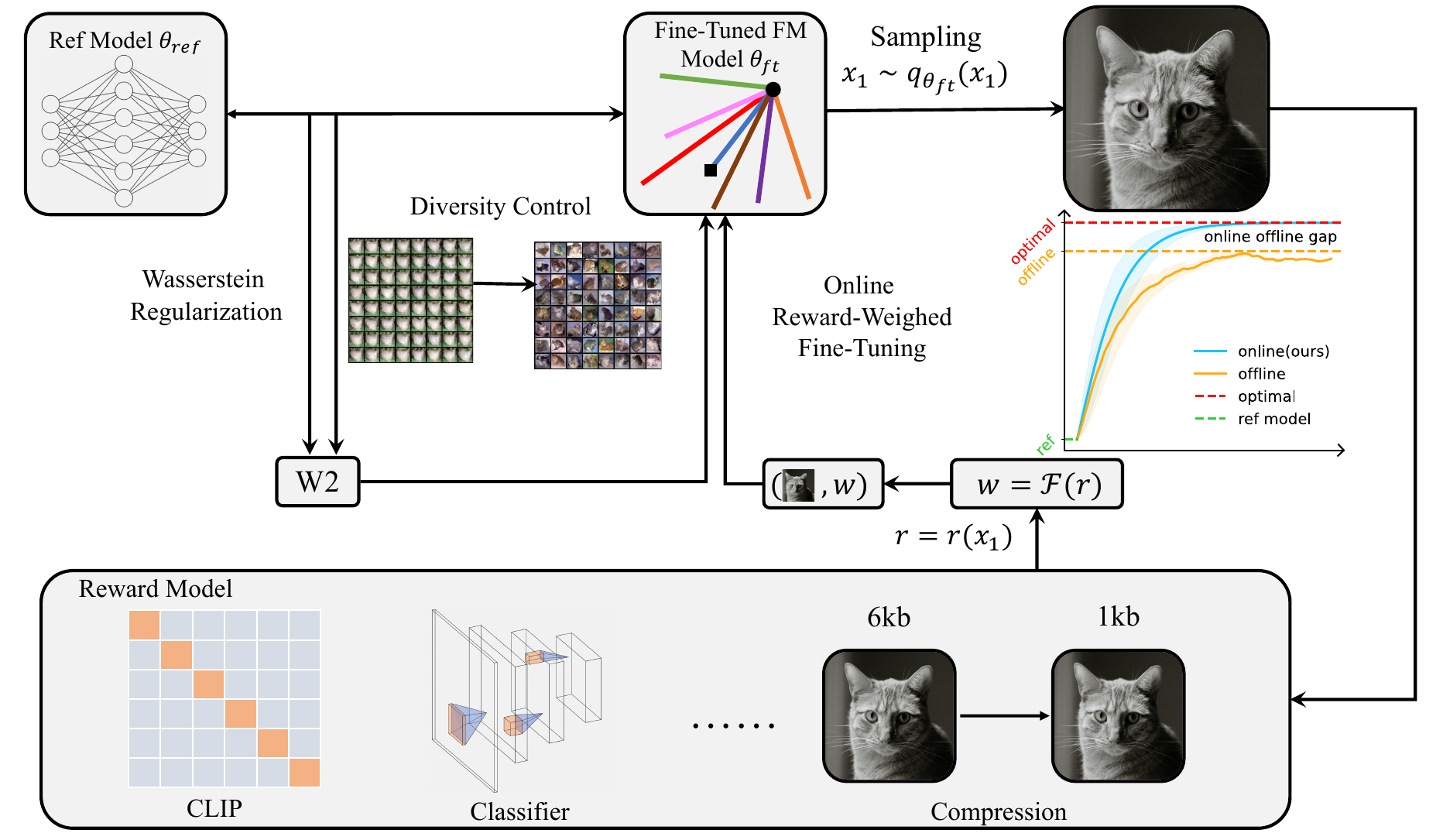}
    \vspace{-2.0em}
    \caption{A General Architecture of Our Method.}
    \label{fig:main_pipeline}
    \vspace{-1.0em}
\end{wrapfigure}

In the meanwhile, many existing reward-weighted RL fine-tuning methods require train on offline datasets manually collected by humans \citep{ReFT} or derived from pre-trained models \citep{offline}.  While this approach ensures stable learning, it restricts the model's ability to explore optimally in an online setting, inducing what is known as the \textit{online-offline gap} \citep{on_off_gap} (See Fig. \ref{fig:main_pipeline} and Figs. \ref{fig: tau control mnist}). This limitation reduces the model's flexibility and can lead to sub-optimal convergence. In contrast, online RL enables the generative model to continuously update its training distribution, allowing for greater adaptability and diverse exploration within the reward space. Despite the success in traditional RL domains, the convergence behavior of online RL in fine-tuning generative models, particularly continuous flow-based models, has not yet been widely studied.  To leverage the advantages of online RL for generative models, we propose an online reward-weighted fine-tuning method called Online Reward-Weighted Conditional Flow Matching (ORW-CFM). In this paper, we theoretically demonstrate that in online reward-weighted settings, the continuous updates of the training data distribution of the fine-tuned generative model based on reward feedback can cause the learned policy to collapse into a delta distribution (i.e., becoming extremely exploitative; see Lemma \ref{lemma: limiting case}). Similar phenomena have been experimentally observed in mode collapse of GAN  \citep{gan2014} and over-optimization in diffusion models \citep{ddpo}. Nevertheless, this issue has not yet been fully addressed or theoretically demonstrated in previous works \citep{ddpo}.

To address these challenges and avoid policy collapse, we introduce a divergence regularization approach to bound the distance between the fine-tuned model and the pre-trained reference model into our online RL method. This regularization ensures that the policy updates remain within a controllable and stable range, preventing collapse while still allowing the model to explore and improve. One of the major challenges in fine-tuning CNFs lies in finding a computationally tractable distance measure between different flow matching models, especially considering the KL divergence used in classic RL methods is inefficient in CNFs. To overcome this, we instead propose to use the Wasserstein-2 (W2) distance \citep{wgan} to measure the distance between the fine-tuned model and the reference model. In this paper, we derive a tractable upper bound of W2 distance in flow matching (See Theorem \ref{theorem: W2 Bound for Flow Matching}) and incorporate it into our online RL method, inducing a reward-distance trade-off in fine-tuning flow matching models. We demonstrate both theoretically and empirically that W2 distance regularization allows for a controllable and stable fine-tuning process, enabling our agent to balance the reward objective and generative diversity.

In summary, for the first time, we propose an online RL method to fine-tune continuous flow-based generative models towards arbitrary reward objectives with sufficient theoretical guarantees and detailed analysis of the convergence behavior in both online and offline settings. Our approach can leverage classifier-guided reward models, compression rate, and text-image similarity score with CLIP models \citep{clip_models}, eliminating the need for time-consuming likelihood calculations that involve the integration of the model divergence across the entire continuous trajectory. Our novel framework significantly reduces computational costs and enhances the feasibility of using RL to fine-tune continuous flow-based generative models. The main contributions of our paper are:

\begin{itemize}
    \item \textbf{A Novel RL-Based Fine-Tuning Framework for CNFs}: We propose the Online Reward-Weighted Conditional Flow Matching (ORW-CFM) method, a novel RL approach for fine-tuning continuous flow-based generative models. Our method enables optimization with user-defined reward functions without the need for likelihood calculation \citep{ddpo}, filtered dataset \citep{dpo} or differentiable reward \citep{Draft}, expanding the applicability of flow-based models in various downstream generative tasks.
    
    \item \textbf{Tractable Wasserstein Regularization}: To prevent policy collapse and maintain the diversity of the fine-tuned models, we introduce Wasserstein-2 (W2) distance regularization into our online method, and derive a tractable upper bound of it in flow matching. This regularization effectively balances exploration and exploitation by bounding the distance between the fine-tuned model and the pre-trained reference model, inducing a theoretically and empirically controllable convergent behavior of fine-tuning flow matching models.

    \item \textbf{Theoretical Analysis and RL Connections:} We provide comprehensive theoretical analyses of our method, including convergence properties and the induced data distributions. We establish connections between our approach and traditional RL algorithms with KL regularization, offering new insights into fine-tuning flow matching generative models.

    \item \textbf{Empirical Validation:} Through extensive experiments on tasks such as target image generation, image compression, and text-image alignment, we empirically demonstrate the effectiveness of our method. Our approach achieves optimal policy convergence, allowing controllable trade-offs between reward maximization and generative diversity.
\end{itemize}

\section{Related Works}
\paragraph{Conditional Flow Matching} Conditional Flow Matching (CFM) \citep{fmgm,otcfm} and related methods have been pivotal in transforming simple data distributions into complex, task-specific ones. \cite{fmgm} and \cite{otcfm} focus on training flow-based models by conditioning the generative flow on known distributions, allowing for more precise control over the generative process. However, while these models achieve great success in generation tasks, how to fine-tune FM models to fit arbitrary user-defined objectives has not yet been widely studied.

\paragraph{Fine-tuning From Human Feedback} Recent works typically leverage DPO \citep{dpo} or policy gradient RL methods \citep{ddpo,dpok} to achieve RLHF \citep{instructed_rlhf}. However, since the likelihood of the generative policy cannot be easily calculated in continuous-time flow models, none of these approaches can be easily adapted to fine-tune flow matching models. Additionally, DPO \citep{dpo} relies heavily on a filtered dataset, which does not work with a general reward model, while DPOK \citep{dpok} relies on KL divergence, which is computationally inefficient and lacks effective evidence lower bound (ELBO) in FM.

\paragraph{Fine-Tuning Generative Models with RL}
RL has been increasingly adopted to fine-tune generative models, allowing them to adapt to specific downstream tasks by optimizing for user-defined reward objectives. \cite{ddpo,dpok,offline} used RL to fine-tune diffusion models to align generated data with task-specific rewards. \cite{offline} proposed an offline reward-weighted method,  while \cite{ddpo} proposed an online version without divergence regularization, which either failed to converge into optimal policy  or collapsed into an extremely greedy policy without diversity \citep{ddpo}. Other reward-weighted methods like ReFT, ReST \citep{ReFT,Rest} are almost  limited to the offline settings with given datasets and lack theoretical analysis of the convergent behavior.  Importantly, how to adopt online RL methods to fine-tune FM models has not yet been widely studied in both theory and practice.

\section{Preliminaries}

\subsection{Flow Matching Models}
Flow Matching (FM) is a promising technique for training generative models by transforming a simple initial distribution $p\left(x_0\right)$ like Gaussian noises, into a complex target distribution $p\left(x_1\right)$, which represents real-world data \citep{sd3}. This transformation works continuously over time, governed by a time-dependent vector field $u_t(x)$ and the flow ordinary differential equation (ODE):$\frac{d x}{d t}=u_t(x)$. The goal of FM is to learn a vector field that transports samples from the base distribution to the target distribution by minimizing the difference between the learned vector field and the true dynamics of the probability flow. This process can be  formalized by the Flow Matching loss: $ \mathcal{L}_{\mathrm{FM}}(\theta)=\mathbb{E}_{t \sim \mathcal{U}(0,1), x \sim p_t(x)}[\|v_\theta(t, x)-u_t(x)\|^2]$.

However, the original FM objective is generally intractable because the time-varying distribution $p_t(x)$ and the true vector field $u_t(x)$ are often intractable. To address this, Conditional Flow Matching (CFM) has been proposed \citep{fmgm}, which simplifies the task by conditioning the flow on target samples to derive a tractable objective. The CFM loss is defined as follows:
\begin{equation}
\label{equ: cfm loss}
    \mathcal{L}_{\mathrm{CFM}}(\theta)=\mathbb{E}_{t \sim \mathcal{U}(0,1), x_1 \sim q\left(x_1\right), x \sim p_t\left(x \mid x_1\right)}\left[\left\|v_\theta(t, x)-u_t\left(x \mid x_1\right)\right\|\right]^2
\end{equation}
wherein $u_t\left(x \mid x_1\right)$ becomes tractable by defining explicit conditional probability paths from $x_0$ to $x_1$, such as OT-paths \citep{otcfm} or linear interpolation paths \citep{fmgm}. In this paper, we adopt OT-paths for all experiments based on TorchCFM \citep{torchcfm2} and Diffusers \citep{von-platen-etal-2022-diffusers}.

\subsection{Reward Weighted Regression}
Reward Weighted Regression (RWR) \citep{rwr,ddpo} is an approach used in RL for policy optimization. In this framework, actions that yield higher rewards are assigned greater importance, and the policy is updated by re-weighting actions based on the rewards they produce. The key idea behind RWR is to adjust the probability of selecting actions proportionally to their associated rewards, encouraging the agent to focus on high-reward actions.

Specifically, the policy $\pi(a \mid s)$ is updated by re-weighting the actions via $\pi_{\mathrm{new}}(a \mid s) \propto \pi_{\mathrm{old}}(a \mid s) \cdot \exp (\tau * r(s, a))$, where $r(s, a)$ is the reward for taking action $a$ in state $s$, and $\tau$ is a temperature parameter that controls the trade-off between exploration and exploitation. Higher rewards lead to higher probabilities for the corresponding actions, allowing the agent to improve its policy over time by favoring actions that maximize cumulative rewards: $    \pi^*=\arg \max _\pi \mathbb{E}_\pi[\sum_{t=0}^T \gamma^t r(s_t, a_t)]$.

\section{Methods}

\subsection{Problem Statement}

In this paper, we aim to fine-tune continuous flow-based models using a reward-weighted RL method \citep{rwr,awr} to optimize a user-defined reward function $r\left(x_1\right)$. Specifically, our objective is to find the optimal generative policy model, denoted as $\pi^*$, which maximizes the expected reward throughout data generation:
\begin{equation}
\label{equ: max reward objectives}
    \pi^*=\arg \max _\pi J(\pi)=\arg \max _\pi \mathbb{E}_{x_1 \sim \pi\left(x_1\right)}\left[r\left(x_1\right)\right]
\end{equation}
where the policy $\pi$ refers to the parameterized flow matching data generative model $p^{\theta}(x_1)$, determining how data points (e.g., images) are generated, which is learned solely from scalar rewards, without ground-truth labels or filtered dataset. In fine-tuning tasks, we may also consider constrained optimization, where the distance between the pre-trained model and fine-tuned model will be constrained \citep{instructed_rlhf}. Considering the computational inefficiency of KL in continuous flow models, we instead propose a Wasserstein-2 distance to achieve that.

\subsection{Fine-Tuning Flow Matching Models with Offline Reward-Weighting}

To find the optimal data generation policy in \eqref{equ: max reward objectives}, in this section, we extend the standard CFM framework by introducing a reward-weighting mechanism to focus the model’s learning on more important or highly rewarded regions of the data space, which thus improve its performance towards user-defined reward objectives $r(x_1)$.

Let $\mathcal{X} \subseteq \mathbb{R}^d$ be the data space, and let $q(x_1)$ be the probability density function (pdf) of the random variable $x_1 \in \mathcal{X}$. We consider a continuous-time flow over $t \in [0, 1]$, where $t$ is sampled from the uniform distribution $\mathcal{U}[0,1]$. The conditional pdf of $x$ at time $t$, given $x_1$, is denoted by $p_t(x \mid x_1)$. The true conditional vector field is $u_t(x \mid x_1)$, and the model’s learned vector field is $v_t(x; \theta)$, where $\theta$ represents the model parameters. In CFM \citep{otcfm}, the goal is to learn $v_t(x; \theta)$ such that it closely approximates $u_t(x \mid x_1)$ with equal importance via \eqref{equ: cfm loss}. Based on that, we introduce a weighting function $w(x_1) \propto r(x_1): \mathcal{X} \rightarrow [0, \infty)$ into CFM loss, inducing a reward-weighted CFM method (RW-CFM) as \eqref{equ: RW-CFM loss}. Then, we can have the Theorem \ref{theorem: rwr-cfm} (Proof in App. \ref{app sec: offline RW CFM}):

\begin{Theorem}
\label{theorem: rwr-cfm}
Let $w: \mathcal{X} \rightarrow[0, \infty)$ be a measurable weighting function such that $0<Z=$ $\int_{\mathcal{X}} w\left(x_1\right) q\left(x_1\right) d x_1<\infty$ and $w(x_1) \propto r(x_1)$. Define the reward weighted CFM loss function:
\begin{equation}
\label{equ: RW-CFM loss}
    \mathcal{L}_{\mathrm{RW-CFM}}(\theta)=\mathbb{E}_{t \sim \mathcal{U}[0,1], x_1 \sim q\left(x_1\right), x \sim p_t\left(x \mid x_1\right)}\left[w\left(x_1\right)\left\|v_t(x;\theta)-u_t\left(x \mid x_1\right)\right\|^2\right]
\end{equation}
where $w\left(x_1\right)$ is a weighting function, $q\left(x_1\right)$ is the original data distribution (e.g., data distribution induced by well-learned pre-trained models), $p_t\left(x \mid x_1\right)$ is the conditional distribution at time $t, v_t(x;\theta)$ is the model's vector field, and $u_t\left(x \mid x_1\right)$ is the true vector field conditioned on $x_1$. Then, training the model using this RW-CFM loss leads the model to learn a new data distribution:

\begin{equation}
\setlength{\abovedisplayskip}{0.5pt}
\setlength{\belowdisplayskip}{0.5pt}
    p^{\text {new }}\left(x_1\right)=\frac{w\left(x_1\right) q\left(x_1\right)}{Z}
\end{equation}

where $Z=\int w\left(x_1\right) q\left(x_1\right) d x_1$ is the normalization constant.
\end{Theorem}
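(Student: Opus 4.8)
The plan is to reduce the reward-weighted objective to a plain CFM objective against a \emph{re-weighted} target distribution, and then invoke the standard marginalization guarantee of flow matching. The first step is a pure change of measure: since $w(x_1)q(x_1)=Z\,p^{\mathrm{new}}(x_1)$ with $Z=\int w(x_1)q(x_1)\,dx_1$ a finite positive constant independent of $\theta$, the loss in \eqref{equ: RW-CFM loss} rewrites as
\begin{equation}
\mathcal{L}_{\mathrm{RW-CFM}}(\theta)=Z\cdot\mathbb{E}_{t\sim\mathcal{U}[0,1],\,x_1\sim p^{\mathrm{new}}(x_1),\,x\sim p_t(x\mid x_1)}\left[\left\|v_t(x;\theta)-u_t(x\mid x_1)\right\|^2\right].
\end{equation}
Since $Z>0$ does not depend on $\theta$, the minimizers of $\mathcal{L}_{\mathrm{RW-CFM}}$ coincide exactly with those of the ordinary CFM loss whose target data distribution is $p^{\mathrm{new}}$ rather than $q$.

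Next I would make the reduction self-contained by characterizing the minimizer explicitly rather than only citing a black box. Fixing $(t,x)$ and treating $v_t(\cdot)$ as an unconstrained function in the sufficiently-expressive limit, the inner objective is a weighted least-squares problem in $v$; setting its gradient to zero gives the weighted conditional average
\begin{equation}
v_t^{*}(x)=\frac{\int u_t(x\mid x_1)\,p_t(x\mid x_1)\,p^{\mathrm{new}}(x_1)\,dx_1}{\int p_t(x\mid x_1)\,p^{\mathrm{new}}(x_1)\,dx_1},
\end{equation}
which is exactly the marginal vector field of the mixture path $p_t^{\mathrm{new}}(x)=\int p_t(x\mid x_1)\,p^{\mathrm{new}}(x_1)\,dx_1$.

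Finally I would close the argument with the fundamental flow-matching identity: because each conditional field $u_t(\cdot\mid x_1)$ generates $p_t(\cdot\mid x_1)$, the marginal field $v_t^{*}$ satisfies the continuity equation $\partial_t p_t^{\mathrm{new}}+\nabla\cdot(p_t^{\mathrm{new}}\,v_t^{*})=0$. Hence integrating the ODE $\dot{x}=v_t^{*}(x)$ transports the (unchanged) base marginal to $p_t^{\mathrm{new}}$ for every $t$, and evaluating at $t=1$ with the endpoint concentration $p_1(x\mid x_1)=\delta(x-x_1)$ yields the terminal marginal $p^{\mathrm{new}}(x)=w(x)q(x)/Z$, which is the claimed induced distribution.

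I expect the main obstacle to be this last continuity-equation step rather than the algebra: verifying that the weighted conditional average $v_t^{*}$ genuinely generates $p_t^{\mathrm{new}}$ requires the marginalization lemma of conditional flow matching, which interchanges $\partial_t$ with the $x_1$-integral and relies on each $u_t(\cdot\mid x_1)$ transporting $p_t(\cdot\mid x_1)$. I would either cite this lemma directly or reproduce the short divergence computation, taking care with integrability so that differentiation under the integral sign and the $t\to1$ delta limit are justified.
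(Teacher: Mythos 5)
Your proof is correct, and its opening move is exactly the paper's: both absorb $w(x_1)\,q(x_1)=Z\,p^{\mathrm{new}}(x_1)$ into the sampling distribution and observe that the constant $Z>0$ is independent of $\theta$, so the RW-CFM loss equals $Z$ times a plain CFM loss whose data distribution is $p^{\mathrm{new}}$ instead of $q$. Where you diverge is in how the argument is closed. The paper essentially stops at this reduction: it asserts that a model trained with the CFM loss under data distribution $p^{\mathrm{new}}$ ``learns to generate'' $p^{\mathrm{new}}$, backing this up with its heuristic induced-data-distribution framework (the general theorem that a per-sample loss $\tilde L(x_1)$ induces $p_{\mathrm{new}}(x_1)\propto q(x_1)\exp(-\gamma\tilde L(x_1))$, followed by the well-learned-model limit $\tilde L(x_1)\to 0$). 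You instead make that final step rigorous and self-contained: you derive the pointwise least-squares minimizer $v_t^{*}$ as the weighted conditional average of $u_t(\cdot\mid x_1)$, identify it as the marginal vector field of the mixture path $p_t^{\mathrm{new}}(x)=\int p_t(x\mid x_1)\,p^{\mathrm{new}}(x_1)\,dx_1$, and invoke the CFM marginalization lemma (the continuity-equation argument of Lipman et al.) to conclude that integrating the learned ODE transports the unchanged base marginal to $p_t^{\mathrm{new}}$, hence to $p^{\mathrm{new}}=wq/Z$ at $t=1$. Your route buys an actual distribution-level guarantee grounded in standard flow-matching theory, at the cost of the expressivity and regularity caveats you correctly flag (unconstrained minimization over $v$, differentiation under the integral, and the $t\to1$ delta limit); the paper's route is shorter but its last step is an appeal to an informal principle rather than a derivation.
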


Based on Theorem \ref{theorem: rwr-cfm}, it is obvious that the model trained with the RW-CFM loss will focus on high-reward regions compared to the original data distribution $q(x_1)$. However, since the training data is sampled from a static dataset $q(x_1)$, which is equivalent to an offline RL setting \citep{ddpo}, the model may not find the optimal data distribution $p^{*}$ (i.e., optimal policy $\pi^*$) that maximizes the expected reward (i.e., online and offline gap, see Fig. \ref{fig: tau control mnist}). To address this, we propose using the fine-tuned flow matching models as the data sampling policy (i.e., $q(x_1)=p^{\theta}(x_1)$), enabling an online generation of training data and thereby inducing an online RW-CFM method.

\subsection{Fine-Tuning Flow Matching Models with Online Reward-Weighting}

As discussed, to mitigate the online-offline gap and achieve better performance, we can instead adopt an online RL algorithm, where the fine-tuned FM models are used as a \textbf{data generation/sampling policy} to continuously generate new training data and explore high-value regions, inducing what we call the Online Reward-Weighted Conditional Flow Matching (ORW-CFM) method:

\begin{equation}
\label{equ: orw-cfm loss}
    \mathcal{L}_{\text{ORW-CFM}}(\theta)=\mathbb{E}_{t \sim \mathcal{U}[0,1], x_1 \sim p^{\theta}\left(x_1\right), x \sim p_t\left(x \mid x_1\right)}\left[w\left(x_1\right)\left\|v_t(x;\theta)-u_t\left(x \mid x_1\right)\right\|^2\right],
\end{equation}

In the online setting, the model can adaptively sample from the current learned data distribution, and fine-tune itself based on the newly generated samples, inducing an online policy iteration that promotes more efficient exploration of the high-reward areas in the data space. In general, the data distribution learned by the ORW-CFM method follows Theorem \ref{theorem: Online Reward Weighted CFM} (Proof in App. \ref{app sec: online RW-CFM}):

\begin{Theorem}[Online Reward Weighted CFM]
\label{theorem: Online Reward Weighted CFM}
    Let $q\left(x_1\right)$ be the initial data distribution, and $w\left(x_1\right)$ be a non-negative weighting function integrable with respect to $q\left(x_1\right)$, and proportional to user-defined reward $r(x_1)$. Assume that at each epoch $n$, the model $v_\theta$ perfectly learns the distribution implied by the ORW-CFM loss in \eqref{equ: orw-cfm loss}. Then, the learned data distribution after $N$ epochs is given by:

\begin{equation}
    q_\theta^N\left(x_1\right)=\frac{w\left(x_1\right)^N q\left(x_1\right)}{Z_N}
\end{equation}

where $Z_N=\int_{\mathcal{X}} w\left(x_1\right)^N q\left(x_1\right) d x_1$ is the normalization constant ensuring $q_\theta^N\left(x_1\right)$ is a valid probability distribution.
\end{Theorem}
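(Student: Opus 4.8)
The plan is to observe that the online procedure is simply an iterated application of the one-shot offline update established in Theorem~\ref{theorem: rwr-cfm}, and then to close the argument by induction on the epoch index $N$. The crucial structural fact is that at epoch $n+1$ the sampling distribution appearing in the ORW-CFM loss of \eqref{equ: orw-cfm loss} is precisely the distribution $q_\theta^{n}$ that the model learned at the previous epoch; under the assumption that $v_\theta$ perfectly fits the distribution implied by this loss, epoch $n+1$ therefore reproduces exactly the offline RW-CFM update with $q_\theta^{n}$ playing the role of the ``initial'' data distribution $q$ in Theorem~\ref{theorem: rwr-cfm}.

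First I would fix the base case. For $N=0$ we have $q_\theta^{0}=q$ by definition, and for $N=1$ Theorem~\ref{theorem: rwr-cfm} applied to $q$ directly gives $q_\theta^{1}(x_1)=w(x_1)q(x_1)/Z_1$ with $Z_1=\int_{\mathcal{X}} w(x_1)q(x_1)\,dx_1$, matching the claimed formula. Next I would carry out the inductive step: assuming $q_\theta^{n}(x_1)=w(x_1)^{n}q(x_1)/Z_n$, I apply Theorem~\ref{theorem: rwr-cfm} with $q_\theta^{n}$ as the sampling density. The theorem yields a new distribution proportional to $w(x_1)\,q_\theta^{n}(x_1)$; substituting the inductive hypothesis makes this proportional to $w(x_1)^{n+1}q(x_1)$. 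Since the result must be a normalized probability density, the proportionality constant is forced to be the reciprocal of $\int_{\mathcal{X}} w(x_1)^{n+1}q(x_1)\,dx_1$, which is exactly $Z_{n+1}$. This gives $q_\theta^{n+1}(x_1)=w(x_1)^{n+1}q(x_1)/Z_{n+1}$ and completes the induction.

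The only point requiring care is the bookkeeping of normalization constants. A naive computation produces a product of the per-epoch normalizers (the $Z_n$ from the inductive hypothesis times a fresh normalizer from the current application of Theorem~\ref{theorem: rwr-cfm}), and one must verify that this product collapses to the single constant $Z_{n+1}=\int_{\mathcal{X}} w^{n+1}q\,dx_1$. This collapse is automatic: because each $q_\theta^{n+1}$ is by construction a probability density integrating to one, the denominator is uniquely determined as $\int_{\mathcal{X}} w^{n+1}q\,dx_1$, so the intermediate factors must cancel.

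The remaining technical obligation --- which I expect to be the main, though mild, obstacle --- is to guarantee that the hypotheses of Theorem~\ref{theorem: rwr-cfm} remain valid at every epoch, i.e.\ that $0<Z_n=\int_{\mathcal{X}} w(x_1)^{n}q(x_1)\,dx_1<\infty$ for all $n\le N$, so that each intermediate distribution is well defined. This reduces to an integrability condition on the powers $w^{n}$ against $q$, analogous to the finiteness assumption $0<Z<\infty$ in Theorem~\ref{theorem: rwr-cfm}; it holds trivially when $w$ is bounded (for instance when the reward is bounded), and more generally must be assumed or verified for the specific reward at hand. I would also note, as a sanity check tying back to Lemma~\ref{lemma: limiting case}, that as $N\to\infty$ the factor $w(x_1)^N$ concentrates the mass of $q_\theta^{N}$ on the set of reward-maximizers, making transparent why the unregularized online scheme drives the policy toward a delta distribution.
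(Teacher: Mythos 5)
Your proposal is correct and follows essentially the same route as the paper's own proof: induction on the epoch index, applying the offline result (Theorem~\ref{theorem: rwr-cfm}) at each epoch with the previously learned distribution $q_\theta^{n}$ as the sampling distribution, and collapsing the accumulated per-epoch normalizers into the single constant $Z_N$ by using the fact that each $q_\theta^{n}$ must integrate to one. Your treatment is in fact slightly tidier than the paper's (which tracks the product $\prod_k Z_k$ and only resolves it in a follow-up corollary), and your explicit requirement that $0<Z_n<\infty$ hold at every epoch is a hypothesis the paper leaves implicit.
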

Based on Theorem \ref{theorem: Online Reward Weighted CFM}, we can easily derive the limiting behavior of ORW-CFM method (Proof in App. \ref{app sec: online RW-CFM}) as the training converges (i.e., the learned generative policy collapses into an extremely exploitative/greedy policy without diversity as the case where $\alpha=0$ in Fig. \ref{fig: alpha control mnist}):
\begin{Lemma}[Limiting Case]
\label{lemma: limiting case}
    We now consider the limiting cases where $N \rightarrow \infty$:
    
    If $w\left(x_1\right) \propto r(x_1)>0$ for all $x_1 \in \mathcal{X}$ and attains its maximum rewards at $x_1^*$, then as $N \rightarrow \infty$, the learned data distribution $q_\theta^N\left(x_1\right)$ converges to a Dirac delta function centered at $x_1^*$ :

\begin{equation}
    \lim _{N \rightarrow \infty} q_\theta^N\left(x_1\right)=\delta\left(x_1-x_1^*\right)
\end{equation}
\end{Lemma}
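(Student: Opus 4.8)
The plan is to treat this as a concentration (Laplace-type) argument and to interpret the stated limit in the weak, distributional sense: rather than a pointwise limit, I would show that for every open neighborhood $U$ of $x_1^*$ the escaped mass $\int_{\mathcal{X}\setminus U} q_\theta^N(x_1)\,dx_1$ tends to $0$, which is exactly weak-$*$ convergence of the measures $q_\theta^N$ to $\delta(x_1-x_1^*)$. Starting from the closed form $q_\theta^N(x_1)=w(x_1)^N q(x_1)/Z_N$ supplied by Theorem \ref{theorem: Online Reward Weighted CFM}, I would first set $M=w(x_1^*)=\sup_{x_1} w(x_1)$ and divide numerator and denominator by $M^N$, writing $q_\theta^N(x_1)=(w(x_1)/M)^N q(x_1)\big/\int (w(x_1)/M)^N q(x_1)\,dx_1$. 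Since $w(x_1)/M\le 1$ with equality only at the maximizer, the integrand decays geometrically away from $x_1^*$, which is the mechanism driving the collapse.

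Next I would make this decay quantitative. Fix an arbitrary neighborhood $U$ of $x_1^*$ and set $M_U=\sup_{x_1\in\mathcal{X}\setminus U} w(x_1)$; the heart of the proof is the strict gap $M_U<M$. Granting this, I bound the numerator over the complement by $\int_{\mathcal{X}\setminus U} w^N q \le M_U^N$, and I bound the normalizer from below by restricting to a smaller neighborhood $U'\subseteq U$ on which continuity gives $w(x_1)\ge M-\eta$ and the support condition gives $q(U')>0$, so that $Z_N\ge (M-\eta)^N q(U')$. Dividing yields
\begin{equation}
\int_{\mathcal{X}\setminus U} q_\theta^N(x_1)\,dx_1 \le \frac{1}{q(U')}\left(\frac{M_U}{M-\eta}\right)^N.
\end{equation}
Choosing $\eta$ small enough that $M-\eta>M_U$ (possible precisely because $M_U<M$) makes the base strictly less than one, so the right-hand side vanishes as $N\to\infty$. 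Hence all mass concentrates in $U$, and since $U$ is arbitrary, $q_\theta^N$ converges weakly to $\delta(x_1-x_1^*)$.

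The main obstacle is establishing the strict separation $M_U<M$, which is where the implicit regularity assumptions must be made explicit: I would need $x_1^*$ to be the unique maximizer, $w$ to be (upper semi-)continuous so that the near-maximal level sets shrink to $\{x_1^*\}$, the point $x_1^*$ to lie in the support of $q$ so that $q(U')>0$, and either compactness of $\mathcal{X}$ or a tail/decay condition on $w$ to prevent mass from escaping to infinity or toward a sequence of points whose $w$-values approach $M$ without a maximizer being attained there. Under these conditions the geometric bound closes the argument cleanly; the underlying intuition is the softmax/Boltzmann one, where exponentiating the reward to the $N$-th power acts like sending an inverse temperature to infinity and collapses the distribution onto its mode.
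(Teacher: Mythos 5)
Your proof is correct (under the regularity assumptions you flag) and takes a genuinely different route from the paper's. The paper argues pointwise on densities: it sets $\epsilon(x_1)=w(x_1)/w(x_1^*)$, writes $q_\theta^N(x_1)=[w(x_1^*)]^N\epsilon(x_1)^N q(x_1)/Z_N$, observes $\epsilon(x_1)^N\to 0$ for $x_1\neq x_1^*$, and then approximates $Z_N\approx [w(x_1^*)]^N q(x_1^*)$ to conclude the density tends to $0$ off the maximizer and to $1$ at it. You instead prove weak-$*$ convergence by an escape-of-mass bound: for any neighborhood $U$ of $x_1^*$ you compare $\sup_{\mathcal{X}\setminus U}w$ against a lower bound on $Z_N$ obtained from a smaller neighborhood $U'$ with $q(U')>0$, getting a geometric ratio $\bigl(M_U/(M-\eta)\bigr)^N\to 0$. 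Your version buys rigor: in the non-atomic setting the paper's normalization step is not justified, since $\int \epsilon(x_1)^N q(x_1)\,dx_1\to 0$ rather than $q(x_1^*)$, and a sequence of densities converging to a Dirac mass must blow up at $x_1^*$ rather than tend to $1$, so the paper's pointwise conclusion is best read as a heuristic. The price you pay is having to make explicit the assumptions the paper leaves implicit — unique maximizer, upper semicontinuity of $w$, $x_1^*\in\operatorname{supp}(q)$, and compactness of $\mathcal{X}$ or a tail condition — without which the strict gap $M_U<M$ (and hence the concentration) can genuinely fail.
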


From Lemma \ref{lemma: limiting case}, in the online setting, as training progresses, the ORW-CFM method may converge to the optimal policy that yields maximum reward without any diversity, potentially leading to a policy collapse problem (See Figs. \ref{fig: alpha control mnist}, \ref{fig: main ab study in Text-Image Alignment} and \ref{fig: text cifar alpha} where $\alpha=0$):

\begin{Corollary}[Policy Collapse/Overoptimization]
\label{corollary: main Overoptimization}
Overemphasizing high-reward regions can lead to a lack of diversity in generated samples, similar to mode collapse in GANs \citep{gan2014}, overoptimization in fine-tuning diffusion models \citep{ddpo}, or policy collapse in RL \citep{sac,rlsutton}. 
\end{Corollary}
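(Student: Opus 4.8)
The plan is to obtain the corollary as an immediate consequence of the limiting behavior already established in Lemma~\ref{lemma: limiting case}, the only real work being to commit to a concrete notion of ``diversity'' so that the qualitative claim becomes a precise degeneracy statement. First I would recall that, absent regularization, the online iterates $q_\theta^N(x_1)=w(x_1)^N q(x_1)/Z_N$ from Theorem~\ref{theorem: Online Reward Weighted CFM} converge weakly to $\delta(x_1-x_1^*)$ as $N\to\infty$, where $x_1^*$ is the (assumed unique) reward maximizer. Since the sampling policy is literally $\pi=p^\theta(x_1)=q_\theta^N(x_1)$, any degeneracy of $q_\theta^N$ is a degeneracy of the generative policy itself.

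Next I would quantify diversity through a functional that vanishes on point masses and is (semi)continuous under this convergence, for instance the covariance $\Cov_{q_\theta^N}[x_1]$, the effective support, or the differential entropy $H(q_\theta^N)=-\int q_\theta^N\log q_\theta^N\,dx_1$. For the covariance one writes $\Cov_{q_\theta^N}[x_1]=\int (x_1-\mu_N)(x_1-\mu_N)^\top q_\theta^N(x_1)\,dx_1$ with mean $\mu_N\to x_1^*$, and uses the concentration of mass near $x_1^*$ together with dominated convergence to conclude $\Cov_{q_\theta^N}[x_1]\to 0$; equivalently the effective support shrinks to the singleton $\{x_1^*\}$ and the differential entropy diverges to $-\infty$. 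Any of these certifies that all generated samples collapse onto $x_1^*$, i.e. the policy produces no variety.

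Finally I would translate this degeneracy into the stated analogies: the vanishing of spread of $p^\theta(x_1)$ under unregularized reward pressure is exactly the mechanism behind GAN mode collapse \citep{gan2014}, diffusion overoptimization \citep{ddpo}, and RL policy collapse \citep{sac,rlsutton}, each being the phenomenon of a generative or behavioral distribution concentrating onto a single mode. This both justifies the corollary and motivates the Wasserstein-2 regularization introduced later, whose role is precisely to keep $\Cov_{q_\theta^N}[x_1]$ bounded away from zero by tethering the fine-tuned model to the reference.

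The main obstacle I expect is not the limiting argument, which is essentially immediate from Lemma~\ref{lemma: limiting case}, but making the informal phrase ``lack of diversity'' rigorous: one must fix a diversity functional and verify enough regularity (continuity for variance/support, only upper semicontinuity for entropy) that weak convergence to $\delta$ forces the functional to its degenerate value. Differential entropy is the cleanest qualitatively but is unbounded below and merely upper semicontinuous, so I would present the variance/support collapse as the primary formal claim and record the entropy divergence as a corollary of it.
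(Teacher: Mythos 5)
Your proposal is correct and takes essentially the same route as the paper: the paper also treats this corollary as an immediate consequence of Lemma~\ref{lemma: limiting case} (the unregularized online iterates $q_\theta^N$ converge to $\delta(x_1-x_1^*)$, which by definition has no diversity), and states it without any further formal proof. Your additional step of fixing a concrete diversity functional (vanishing covariance, shrinking effective support, diverging differential entropy) and checking its behavior under weak convergence to a point mass is extra rigor beyond what the paper records, but it refines rather than changes the underlying argument.
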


Based on  Lemma \ref{lemma: limiting case} and Theorem \ref{theorem: Online Reward Weighted CFM}, assuming an exponential weighting function $w(x_1)=\exp(\tau * r(x_1))$ case, we can have $w\left(x_1\right)^N=\exp(N *\tau * r(x_1))$, where $\tau$ can control the convergence/collapse speed of learned policy, and $\tau \to \infty$ also induces a policy collapse problem (See Fig. \ref{fig: tau control mnist}).  To mitigate the policy collapse problem, we introduce Wasserstein-2 regularization into our method, which can bound the distance between the current learned model and the pre-trained model. This helps maintain diversity and prevents the model from collapsing into an extremely greedy policy, inducing an exploration-exploitation trade-off in fine-tuning flow matching models.

\subsection{Wasserstein-2 Regularization in Flow Matching}

To address the  policy collapse problem in the ORW-CFM method, we introduce Wasserstein-2 (W2) regularization, which helps maintain diversity by bounding the distance between the current learned model and a pre-trained reference model. This regularization induces an exploration-exploitation trade-off, allowing the model to avoid focusing solely on high-reward regions at the expense of diversity. In general, we can write the Wasserstein-2 Distance as follows \citep{wgan}:
\begin{definition}[Wasserstein-2 Distance]
    Given two probability measures $\mu$ and $\nu$ on $\mathbb{R}^n$, the squared Wasserstein- 2 distance between $\mu$ and $\nu$ is defined as:
\begin{equation}
    W_2^2(\mu, \nu)=\inf _{\gamma \in \Pi(\mu, \nu)} \int_{\mathbb{R}^n \times \mathbb{R}^n}\|x-y\|^2 d \gamma(x, y)
\end{equation}
where $\Pi(\mu, \nu)$ denotes the set of all couplings of $\mu$ and $\nu$; that is, all joint distributions $\gamma$ on $\mathbb{R}^n \times \mathbb{R}^n$ with marginals $\mu$ and $\nu$.
\end{definition}

However, it is intractable to directly introduce W2 distance into our loss function. Therefore, we instead derive and introduce its upper bound as our distance regularizer (Proof in App. \ref{app sec: W2 distance}):
\begin{Theorem}[W2 Bound for Flow Matching]
\label{theorem: W2 Bound for Flow Matching}
We consider two flow matching models parameterized by $\theta_1$ and $\theta_2$, inducing time-evolving data distributions $p_t^{\theta_1}(x)$ and $p_t^{\theta_2}(x)$, respectively. The models define vector fields $v^{\theta_1}(t, x)$ and $v^{\theta_2}(t, x)$ that transport an initial distribution $p_0(x)$ to final distributions $p_1^{\theta_1}(x)$ and $p_1^{\theta_2}(x)$ at time $t=1$. Assume that $v^{\theta_2}(t, x)$ is Lipschitz continuous in $x$ with Lipschitz constant L. Then, the squared Wasserstein-2 distance between the distributions $p_1^{\theta_1}$ and $p_1^{\theta_2}$ induced by the flow matching models at time $t=1$ satisfies:
\begin{equation}
    W_2^2\left(p_1^{\theta_1}, p_1^{\theta_2}\right) \leq e^{2 L} \int_0^1 \mathbb{E}_{x \sim p_s^{\theta_1}}\left[\left\|v^{\theta_1}(s, x)-v^{\theta_2}(s, x)\right\|^2\right] d s
\end{equation}
\end{Theorem}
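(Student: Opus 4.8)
The plan is to build an explicit coupling of $p_1^{\theta_1}$ and $p_1^{\theta_2}$ by flowing a single shared initial point under both ODEs, to control the divergence of the two resulting trajectories by a Gr\"onwall argument, and then to convert the pathwise estimate into the stated expectation over $p_s^{\theta_1}$ via the pushforward (change-of-variables) identity.

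First I would fix an initial point $x_0 \sim p_0$ and let $\phi^1_t(x_0)$ and $\phi^2_t(x_0)$ denote the solutions of the flow ODEs $\dot x = v^{\theta_i}(t,x)$ with $\phi^i_0(x_0) = x_0$ (existence and uniqueness follow from the Lipschitz hypothesis on $v^{\theta_2}$, together with the analogous regularity of $v^{\theta_1}$). Since each flow pushes $p_0$ forward to $p_1^{\theta_1}$ and $p_1^{\theta_2}$ respectively, the pair $(\phi^1_1(x_0), \phi^2_1(x_0))$ with $x_0 \sim p_0$ is an admissible coupling, so that $W_2^2(p_1^{\theta_1}, p_1^{\theta_2}) \le \mathbb{E}_{x_0 \sim p_0}\big[\|\phi^1_1(x_0) - \phi^2_1(x_0)\|^2\big]$. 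This step reduces the theorem to a deterministic pathwise bound.

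Next I would set $\delta_t = \phi^1_t(x_0) - \phi^2_t(x_0)$ with $\delta_0 = 0$, and write $\dot\delta_t = \big(v^{\theta_1}(t,\phi^1_t) - v^{\theta_2}(t,\phi^1_t)\big) + \big(v^{\theta_2}(t,\phi^1_t) - v^{\theta_2}(t,\phi^2_t)\big)$. The first bracket is the vector-field mismatch evaluated along the first trajectory, and the second is bounded by $L\|\delta_t\|$ using the Lipschitz assumption on $v^{\theta_2}$. Hence $\tfrac{d}{dt}\|\delta_t\| \le \|v^{\theta_1}(t,\phi^1_t) - v^{\theta_2}(t,\phi^1_t)\| + L\|\delta_t\|$, and Gr\"onwall's inequality with integrating factor $e^{-Lt}$ yields $\|\delta_1\| \le e^{L}\int_0^1 \|v^{\theta_1}(s,\phi^1_s) - v^{\theta_2}(s,\phi^1_s)\|\,ds$. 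Squaring and applying Cauchy--Schwarz on the unit interval then gives $\|\delta_1\|^2 \le e^{2L}\int_0^1 \|v^{\theta_1}(s,\phi^1_s) - v^{\theta_2}(s,\phi^1_s)\|^2\,ds$, which already displays the constant $e^{2L}$.

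Finally I would take the expectation over $x_0 \sim p_0$, interchange expectation and the time integral by Fubini, and use that the law of $\phi^1_s(x_0)$ is exactly $p_s^{\theta_1}$ (the definition of the time-evolving distribution as the pushforward of $p_0$ along the first flow) to replace $\mathbb{E}_{x_0}[\,\cdot\,]$ by $\mathbb{E}_{x \sim p_s^{\theta_1}}[\,\cdot\,]$, producing the claimed inequality. I expect the main obstacle to be the technical care around the non-smoothness of $t \mapsto \|\delta_t\|$ at instants where $\delta_t$ vanishes; I would handle this either by working instead with $\tfrac{d}{dt}\|\delta_t\|^2 = 2\langle \delta_t, \dot\delta_t\rangle$ and the integral form of Gr\"onwall, or by regularizing via $\sqrt{\|\delta_t\|^2 + \eps}$ and sending $\eps \to 0$. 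The remaining points, namely well-posedness of the flows and the validity of the pushforward identity, are secured respectively by the Lipschitz hypothesis and by how $p_s^{\theta_1}$ is defined.
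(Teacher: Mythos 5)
Your proposal is correct and follows essentially the same route as the paper's proof: the same coupling $(\phi_1^{\theta_1}(x_0),\phi_1^{\theta_2}(x_0))$ of a shared initial sample, the same decomposition of $\dot\delta_t$ into a vector-field mismatch along the first trajectory plus a Lipschitz term, the same Gr\"onwall integrating-factor argument, and the same Jensen/Cauchy--Schwarz step followed by the pushforward identity to land on $\mathbb{E}_{x\sim p_s^{\theta_1}}$. If anything, you are more careful than the paper on the technical point of non-differentiability of $t\mapsto\|\delta_t\|$ at zeros, which the paper's argument silently ignores.
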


Since the integral can be approximated by Monte Carlo sampling, this bound allows us to constrain the discrepancy between the learned and reference models, effectively preventing the model from collapsing into a greedy policy. By controlling the Wasserstein-2 distance, the model maintains a balance between exploring new areas of the data space and exploiting known high-reward regions in a constrained neighborhood from the pre-trained model, thus preserving diversity.

\subsection{Fine-Tuning Flow Matching Models with W2 Regularization}
The offline version of the reward-weighted method can be considered as the first step of the online version \citep{ddpo}. For simplicity, we only derive the ORW-CFM method with W2 distance regularization (i.e., ORW-CFM-W2 method) in this section, and left the offline version (i.e., RW-CFM with W2 distance regularization, namely RW-CFM-W2 method) in App. \ref{app: sec RW-CFM with W2 Distance Bound} for ease of reading.

To further improve the stability of the ORW-CFM method and prevent policy collapse, we incorporate W2 regularization into our method, which has not been widely studied \citep{ddpo}. This regularization introduces a balance between exploration and exploitation by penalizing divergence from the reference model, thus maintaining diversity in the learned model. The W2 distance bound can be  directly incorporated  into the ORW-CFM loss, inducing an ORW-CFM-W2 method:
\begin{equation}
\label{equ: orw-cfm-w2 loss}
\begin{aligned}
            \mathcal{L}_{\text{ORW-CFM-W2}} =  \mathbb{E}_{t \sim U(0,1),x_1 \sim q(x_1;\theta_{\text{ft}}), x \sim p_t(x|x_1)} [&
    w\left(x_1\right) \left\|v_t(x ; \theta_{\text{ft}})-u_t\left(x \mid x_1\right)\right\|^2 \\
    & + 
    \alpha * \left\|v^{\theta_{\text{ft}}}(x, t)-v^{\theta_{\text{ref}}}(x, t)\right\|^2 ],
\end{aligned}
\end{equation}
wherein $\alpha$ is a trade-off coefficient, $v^{\theta_{\text{ft}}}(x, t) = v_t(x ; \theta_{\text{ft}})$  and $v^{\theta_{\text{ref}}}(x, t)$ is the reference model. Then, the learned data distribution after n epochs follows the Theorem \ref{theorem: induced data distribution of ORW-CFM-W2} (Proof in App. \ref{app: sec ORW-CFM with W2 Distance Bound}):

\begin{Theorem}
\label{theorem: induced data distribution of ORW-CFM-W2}
    Under the online RL setup with ORW-CFM-W2 loss, the data distribution after n epochs evolves according to the following rules:
    \begin{equation}
        q_\theta^n\left(x_1\right) \propto\left[w\left(x_1\right) q_\theta^{n-1}\left(x_1\right) \exp \left(-\beta D^{n-1}\left(x_1\right)\right)\right],
    \end{equation}
where, $D^{n-1}\left(x_1\right)=\mathbb{E}_{t, x \sim p_t\left(x \mid x_1\right)}\left[\left\|v^{\theta^{n-1}}(t, x)-v^{\theta_{\text{ref}}}(t, x)\right\|^2\right]$, $\beta=\gamma \alpha, \gamma>0$ is a scaling constant, $\theta^{n-1}$ denotes the fine-tuned model parameters after epoch $n-1$.
\end{Theorem}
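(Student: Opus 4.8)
The plan is to build directly on Theorem~\ref{theorem: rwr-cfm} and Theorem~\ref{theorem: Online Reward Weighted CFM} by first reinterpreting the pure reward-weighting update as the solution of a Kullback--Leibler-regularized reward-maximization problem, and then showing that the added W2 term perturbs this variational problem by a linear penalty in the per-sample divergence $D^{n-1}$. Concretely, I would first observe that the one-step map $q_\theta^{n-1}\mapsto q_\theta^{n}\propto w\, q_\theta^{n-1}$ established in Theorem~\ref{theorem: Online Reward Weighted CFM} is exactly the minimizer of $\KL(q\,\|\,q_\theta^{n-1}) - \E_q[\log w(x_1)]$ over probability densities $q$: differentiating this functional with a Lagrange multiplier for the normalization constraint and setting the derivative to zero yields $\log q = \log q_\theta^{n-1} + \log w + \mathrm{const}$, i.e.\ $q \propto w\,q_\theta^{n-1}$. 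This reframing is the engine of the whole argument, because it turns the multiplicative reweighting of flow matching into an additive (free-energy) objective to which further penalties can simply be appended.

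Next I would incorporate the W2 regularizer. For a fixed generated sample $x_1$, the second term of the ORW-CFM-W2 loss in \eqref{equ: orw-cfm-w2 loss}, averaged over $t\sim\mathcal{U}(0,1)$ and $x\sim p_t(x\mid x_1)$, is precisely $\alpha\, D^{n-1}(x_1)$ with $D^{n-1}(x_1)=\E_{t,x\sim p_t(x\mid x_1)}[\|v^{\theta^{n-1}}(t,x)-v^{\theta_{\text{ref}}}(t,x)\|^2]$, where the fine-tuned field is evaluated at the previous iterate $\theta^{n-1}$ (a lagged, proximal-style evaluation, which is what produces the superscript $n-1$ on $D$). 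Taking the outer expectation over $x_1\sim q_\theta^{n-1}$ then contributes an additive term $\E_q[\beta D^{n-1}(x_1)]$ to the variational objective, where the constant $\beta=\gamma\alpha$ absorbs the proportionality $\gamma>0$ between the loss coefficient $\alpha$ and the effective strength of the distributional penalty. The augmented problem becomes
\begin{equation}
q_\theta^{n}=\argmin_{q}\ \KL(q\,\|\,q_\theta^{n-1}) - \E_q[\log w(x_1)] + \beta\,\E_q[D^{n-1}(x_1)],
\end{equation}
and solving it by the same Gibbs/Lagrangian computation gives $\log q = \log q_\theta^{n-1} + \log w(x_1) - \beta D^{n-1}(x_1) + \mathrm{const}$, i.e.\ the claimed update $q_\theta^{n}(x_1)\propto w(x_1)\,q_\theta^{n-1}(x_1)\exp(-\beta D^{n-1}(x_1))$.

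The main obstacle is justifying the second step rigorously, i.e.\ that appending the W2 term to the flow-matching loss really corresponds to adding the linear penalty $\beta\,\E_q[D^{n-1}]$ to the effective free-energy objective rather than deforming the minimizer in a more complicated way. The difficulty is that the regularizer acts on vector fields along the conditional paths, and a given point $(x,t)$ receives contributions from many different $x_1$ through the posterior $p_t(x_1\mid x)\propto p_t(x\mid x_1)q_\theta^{n-1}(x_1)$, so the pointwise minimizer of the summed quadratic is a \emph{mixture} of the reward-weighted and reference fields rather than a clean reweighting. I would handle this by linearizing the penalty around the previous iterate $\theta^{n-1}$ (which freezes $D$ at $D^{n-1}$ and makes it a fixed function of $x_1$ independent of the optimization variable $q$), so that to first order the regularizer enters the per-sample weighting exactly as a reward offset $r(x_1)\mapsto r(x_1)-\gamma D^{n-1}(x_1)$; pushing this offset through the exponential weighting $w=\exp(\tau r)$ of the RWR update then yields the multiplicative factor $\exp(-\beta D^{n-1})$ and pins down $\beta=\gamma\alpha$. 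Verifying that this linearization is the correct leading-order behavior, and that the normalization constant stays finite under the hypotheses on $w$ and $q$, is the part that requires the most care.
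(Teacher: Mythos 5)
Your proposal is sound at the same level of rigor as the paper's own argument, but it takes a genuinely different route. The paper proves this theorem by induction over epochs, invoking its heuristic ``induced data distribution'' principle (Theorem~\ref{theorem: Induced Data Distribution}): training on data sampled from $q_\theta^{n-1}$ with per-sample loss $\tilde{L}^n(x_1)=w(x_1)L^n_{\text{cfm}}(x_1)+\alpha D^{n-1}(x_1)$ is posited to induce $q_\theta^n \propto q_\theta^{n-1}\exp(-\gamma\tilde{L}^n)$, after which the multiplicative $w$ factor is extracted via the weighted-prior/importance-sampling reading (Corollary~\ref{corollary: Weighted Prior Distribution}) and the fitting term $L^n_{\text{cfm}}$ is assumed negligible for a well-learned model. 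You instead characterize the one-step update as the Gibbs solution of the KL-regularized problem $\min_q \KL\left(q\,\|\,q_\theta^{n-1}\right)-\E_q[\log w]+\beta\,\E_q\left[D^{n-1}\right]$, solved by functional differentiation with a normalization multiplier. Notably, this is essentially the argument the paper itself gives in its ``RL Perspectives of ORW-CFM-W2'' appendix (Theorems~\ref{theorem: RL perspective of RWR} and~\ref{theorem: RL perspective of RWR-W2} and the corollary following them), but there it is offered as an a-posteriori interpretation rather than as the proof of this theorem. Your route buys a cleaner derivation of the multiplicative structure (avoiding the paper's awkward step in which $\exp(-\gamma\, w\, L_{\text{cfm}})$ must somehow become a multiplicative factor of $w$) and makes the connection to KL-regularized policy iteration explicit; the paper's route keeps the statement formally tied to the training loss and to its general induced-distribution machinery. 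Neither route rigorously bridges minimization of the vector-field loss in \eqref{equ: orw-cfm-w2 loss} over $\theta$ to the distribution-level variational problem: the paper's bridge is Theorem~\ref{theorem: Induced Data Distribution}, justified only by analogy to RLHF, energy-based models, and Bayesian inference, while yours is the proposed linearization; both equally rely on freezing the W2 penalty at the lagged iterate $\theta^{n-1}$, which is exactly what the superscript on $D^{n-1}$ in the statement encodes. Your explicit observation that the pointwise minimizer of the summed quadratics is a mixture of the reward-weighted and reference fields (so exact loss minimization does not literally produce the claimed tilted density) identifies a real gap that the paper glosses over, and your proximal-style resolution is a legitimate, arguably more candid, way to close it.
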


The W2 regularization term prevents the model from solely 
 collapsing into high-reward regions and ensures diversity in the learned policy (See cases where $\alpha>0$ in Figs.\ref{fig: alpha control mnist}, \ref{fig: reward-distance trade-off cifar com} and \ref{fig: text cifar alpha}), also inducing the exploration-exploitation trade-off.

In practice, we often use an exponential function or Boltzmann distribution to formulate the weighting function as $w(x_1)=\exp(\tau * r(x_1))$, where $\tau$ is an entropy coefficient that controls the entropy of the induced policy \citep{sac}. Thus, we can derive the induced data distribution for the exponential case under the ORW-CFM-W2 loss (Proof in App. \ref{app: sec exp}):

\begin{Theorem}
\label{theorem: exp case}
    Given $w\left(x_1\right)=\exp \left(\tau* r\left(x_1\right)\right)$, the induced data distribution after $N$ epochs of training under ORW-CFM-W2 Loss as \eqref{equ: orw-cfm-w2 loss} is:
\begin{equation}
    q_\theta^N\left(x_1\right) \propto \exp \left(\tau N  r\left(x_1\right)-\beta \sum_{n=1}^N D^{n-1}\left(x_1\right)\right) q\left(x_1\right)
\end{equation}
where, $D^{n-1}\left(x_1\right)=\mathbb{E}_{t, x \sim p_t\left(x \mid x_1\right)}\left[\left\|v^{\theta^{n-1}}(t, x)-v^{\theta_{\text {ref }}}(t, x)\right\|^2\right]$, $\beta=\gamma \alpha$, $\gamma>0$ is a scaling constant,  $\theta^{n-1}$ denotes the model parameters after epoch $n-1$.
\end{Theorem}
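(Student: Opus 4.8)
The plan is to obtain the claimed closed form by unrolling the one-step recursion already established in Theorem~\ref{theorem: induced data distribution of ORW-CFM-W2}. That theorem gives, for a general weighting function, the update rule $q_\theta^n(x_1) \propto w(x_1)\, q_\theta^{n-1}(x_1)\, \exp(-\beta D^{n-1}(x_1))$, with initial condition $q_\theta^0(x_1) = q(x_1)$. Since the statement only asserts the distribution up to a normalizing constant, I would treat each epoch's normalization constant as absorbed into the proportionality symbol and focus solely on tracking the functional dependence on $x_1$. The whole argument is therefore a telescoping (equivalently, an induction on $N$) rather than a fresh derivation.

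First I would substitute the exponential weighting $w(x_1) = \exp(\tau r(x_1))$ into the recursion, so that the multiplicative reward factor contributed at every epoch is exactly $\exp(\tau r(x_1))$. I would then induct on $N$. For the base case $N=1$, the recursion immediately yields $q_\theta^1(x_1) \propto \exp(\tau r(x_1)) \exp(-\beta D^0(x_1))\, q(x_1)$, which matches the claimed form. For the inductive step, assuming $q_\theta^{N-1}(x_1) \propto \exp\!\big(\tau (N-1) r(x_1) - \beta \sum_{n=1}^{N-1} D^{n-1}(x_1)\big)\, q(x_1)$, I would multiply by the epoch-$N$ factor $\exp(\tau r(x_1)) \exp(-\beta D^{N-1}(x_1))$ and combine exponents, obtaining $\exp\!\big(\tau N r(x_1) - \beta \sum_{n=1}^{N} D^{n-1}(x_1)\big)\, q(x_1)$ up to proportionality.

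The key bookkeeping step is the additive accumulation inside the exponent: each epoch contributes one reward term $\tau r(x_1)$ and one regularization term $-\beta D^{n-1}(x_1)$ evaluated at the parameters $\theta^{n-1}$ frozen at the start of that epoch, so after $N$ epochs the reward terms sum to $\tau N r(x_1)$ while the penalty terms sum to $\beta \sum_{n=1}^{N} D^{n-1}(x_1)$. I do not expect any deep calculation to be the obstacle; the main care needed is the indexing of the penalty terms. I must confirm that the discrepancy appearing at epoch $n$ is $D^{n-1}$, namely the previous epoch's model measured against the reference, so that the running sum ranges over $D^0, D^1, \ldots, D^{N-1}$, i.e. $\sum_{n=1}^N D^{n-1}$. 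I would also verify that the product of the per-epoch normalization constants is a genuine $x_1$-independent scalar and may therefore be discarded under $\propto$, which holds because each such constant is an integral over $x_1$.
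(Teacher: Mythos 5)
Your proposal is correct and follows essentially the same route as the paper: the paper also proves this by unrolling the one-step recursion of Theorem~\ref{theorem: induced data distribution of ORW-CFM-W2} (packaged there as Corollary~\ref{corollary: Recursive Formulation of ORW-CFM-W2}, giving $q_\theta^N(x_1) \propto [w(x_1)]^N q(x_1) \exp\left(-\beta \sum_{n=1}^N D^{n-1}(x_1)\right)$) and then substituting $w(x_1)=\exp(\tau r(x_1))$. Whether one substitutes the exponential weight before or after the induction is immaterial, and your attention to the penalty-term indexing and the $x_1$-independence of the normalization constants matches the paper's bookkeeping.
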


Based on this, we can derive the learning behavior of ORW-CFM-W2 under two limiting cases:

\begin{Case}[Dominant Regularization Term: $\alpha \to \infty$ or $\tau=0$] If $D^{n-1}\left(x_1\right)$ grows rapidly with $N$ or $\alpha$ is large, since $\beta \propto \alpha$,  the regularization term dominates, namely $\beta \sum_{n=1}^N D^{n-1}\left(x_1\right) \gg \tau* r\left(x_1\right)$. The induced distribution is heavily penalized for deviations from the reference model, potentially leading to a distribution similar to the initial distribution $q\left(x_1\right)$ (See $\tau=0$ in Fig. \ref{fig: tau control mnist}).
\end{Case}

\begin{Case}[Dominant Reward Term: $\alpha=0$ or $\tau \to \infty$]
If $D^{n-1}\left(x_1\right)$ grows slowly with $N$ or $\alpha$ is small, since $\beta \propto \alpha$, the reward term dominates the regularization term, namely $\tau* r\left(x_1\right) \gg \beta \sum_{n=1}^N D^{n-1}\left(x_1\right)$. The induced distribution $q_\theta^N\left(x_1\right)$ concentrates on the $x_1$ maximizing $r\left(x_1\right)$, similar to the case without considering $D^{n-1}\left(x_1\right)$ as Lemma \ref{lemma: limiting case} (See $\alpha=0$ in Fig. \ref{fig: alpha control mnist}).
\end{Case}

Though the Boltzmann distribution is commonly used for the weighting function to handle the normalization constant $Z$, the derivation is similar to the exponential form. For completeness, we provide the derivation of the Boltzmann form weighting function in App. \ref{app: sec Boltzmann case} for ease of reading.

Interestingly, we can interpret our ORW-CFM-W2 method from the perspective of policy iteration in RL \citep{rlsutton}, which can connect our method to KL-constrained policy optimization \citep{instructed_rlhf,awr}.  See App. \ref{app: sec rl perspective of orw-cfm-w2} for more detailed discussion.

\section{Experiments}
\label{sec: exp}
We empirically evaluate our ORW-CFM-W2 method in \eqref{equ: orw-cfm-w2 loss} by fine-tuning small-scale FM models with a U-net architecture from TorchCFM \citep{otcfm} and large-scale FM models like Stable Diffusion 3 (SD3) \citep{sd3} from diffusers \citep{von-platen-etal-2022-diffusers} for reproducibility. In all experiments, we use the exponential weighting function described in Theorem \ref{theorem: exp case}. We also explore the impact of each component of our method and conducted detailed analyses of convergent behavior in various hyperparameter settings. More experimental details can be found in App. \ref{sec:app Experiment Details} and \ref{app: Hyper-parameters} with our Pseudocode in App. \ref{app: Pseudocode}. In general, we focus on the following questions:
\begin{itemize}
    \item \textbf{Optimal Policy Convergence:}  Can our online RL methods converge to an optimal generative data distribution that maximizes various user-defined reward objectives as we derived in Theorem \ref{theorem: Online Reward Weighted CFM} and Lemma \ref{lemma: limiting case} while the offline baseline method converges to a sub-optimal? 
    \item \textbf{Reward-Diversity Trade-off:}  Does W2 regularization effectively prevent policy collapse by maintaining a balance between reward maximization and diversity as Theorem \ref{theorem: exp case}?
    \item \textbf{Stable and Controllable Policy Optimization:} Can our method achieve a stable and controllable fine-tuning process of FM models in different settings, and control the convergent behavior of learned policy by adjusting entropy coefficient $\tau$ and W2 coefficient  $\alpha$?
\end{itemize}

\begin{figure}[!ht]
\vspace{-0.1in}
	\centering
 \subfigure[Reward Curve]{
 \includegraphics[width=0.275\linewidth]{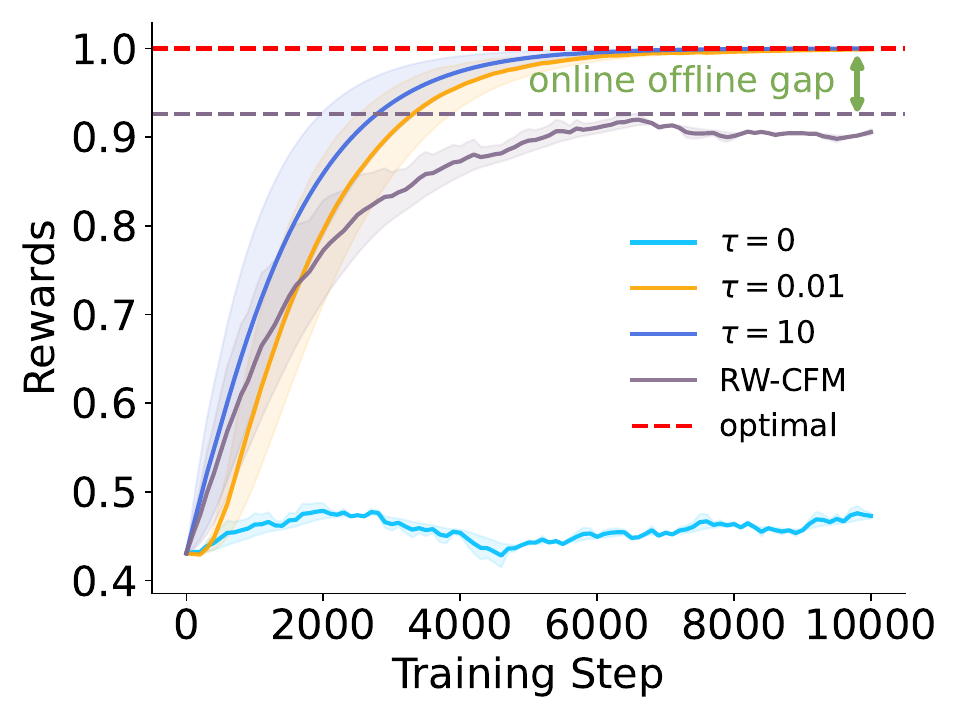}
    }
  \subfigure[$\tau=0$]{
 \includegraphics[width=0.2\linewidth]{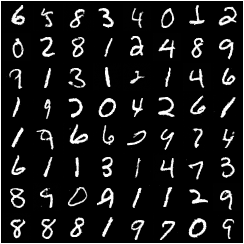}
    }
 \subfigure[$\tau=0.01$]{
 \includegraphics[width=0.2\linewidth]{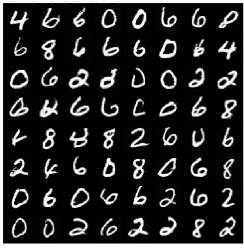}
    }
    \subfigure[$\tau=10$]{
 \includegraphics[width=0.2\linewidth]{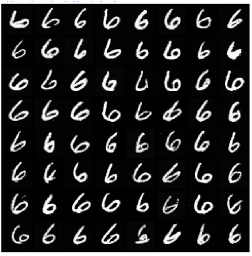}
    }
	\caption{Learning curve and generated images in target image generation task with different $\tau$ while $\alpha=0$. As $\tau$ increases, the convergent policy becomes increasingly greedy, and the diversity decreases. $\tau=0$ remains the similar distribution as pre-trained  model (See Theorem \ref{theorem: exp case}).}
	\label{fig: tau control mnist}
\end{figure}

\begin{figure}[!ht]
\vspace{-0.1in}
	\centering
    \subfigure[Reward Curve]{
 \includegraphics[width=0.275\linewidth]{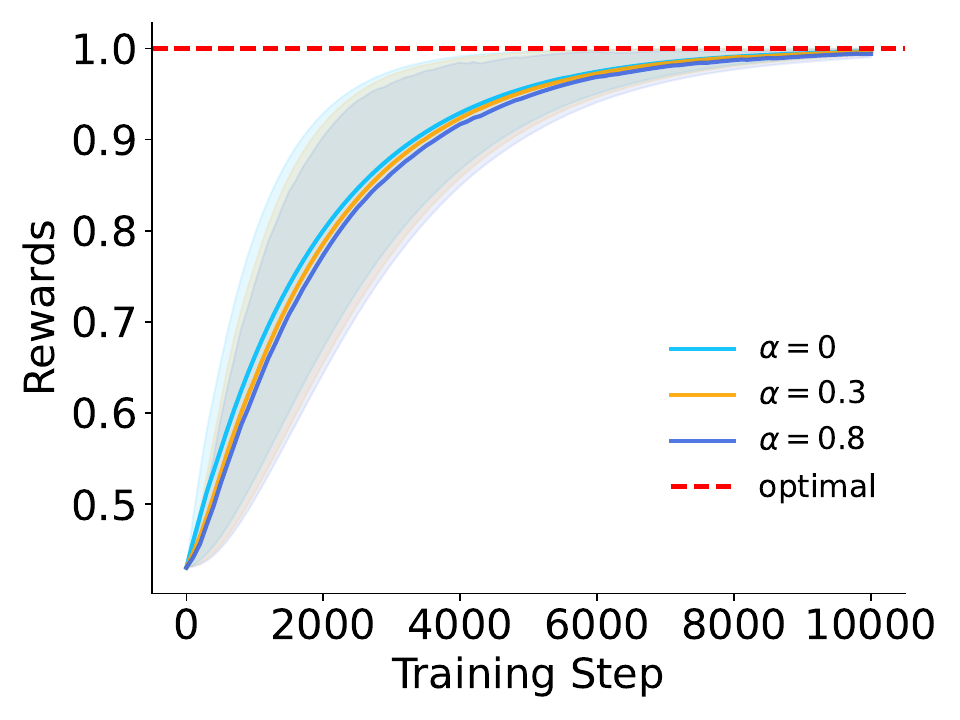}
    }
   \subfigure[$\alpha=0$]{
 \includegraphics[width=0.2\linewidth]{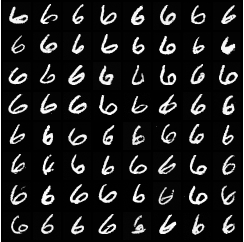}
    }
  \subfigure[$\alpha=0.3$]{
 \includegraphics[width=0.2\linewidth]{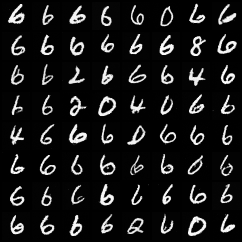}
    }
 \subfigure[$\alpha=0.8$]{
 \includegraphics[width=0.2\linewidth]{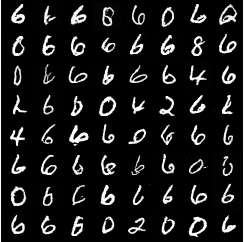}
    }
	\caption{Learning curve and generated images in target image generation task with different $\alpha$ while $\tau=10$. From $\alpha=0$ to $\alpha=0.8$,  the diversity of the convergent generative distribution   increases without sacrificing  too much performance. $\alpha=0$ collapse to  a Delta Distribution as Lemma \ref{lemma: limiting case}.}
	\label{fig: alpha control mnist}
 \vspace{-0.1in}
\end{figure}

\subsection{Target Image Generation}

At first, we evaluate our ORW-CFM method, with and without W2 regularization, by fine-tuning a pre-trained model of MNIST \citep{mnist} to generate only even numbers using reward signals. The reward function is defined as $r\left(x_1\right)=p_{\text {even }}\left(x_1\right)-p_{\text {odd }}\left(x_1\right)$, calculated by a pre-trained binary classifier. An optimal expected return of 1 indicates the model exclusively generates even numbers.

Figs. \ref{fig: tau control mnist} and \ref{fig: alpha control mnist} demonstrate that our method quickly reaches near-optimal performance across different settings of the entropy coefficient $\tau$ and regularization coefficient $\alpha$. From Fig. \ref{fig: tau control mnist}, increasing $\tau$ leads to a more greedy, reward-maximizing policy at the cost of diversity, consistent with the theoretical limits in Theorem \ref{theorem: exp case} and Lemma \ref{lemma: limiting case}, while $\tau=0$ remains a similar performance as the pre-trained model. Besides, from Fig. \ref{fig: alpha control mnist}, W2 regularization controlled by $\alpha$ remains diverse by controlling how far the fine-tuned model diverges from the pre-trained model, enabling a balance between exploration and exploitation. In general, ORW-CFM, especially with W2 regularization, achieves a stable fine-tuning process via maximizing the rewards in a constrained neighborhood of the pre-trained model, inducing a flexible balance between reward and diversity, avoiding extremely exploitative policies.

\subsection{Image Compression}

\begin{figure}[!ht]
\vspace{-0.1in}
	\centering
   \subfigure[$\alpha=0$]{
 \includegraphics[width=0.2\linewidth]{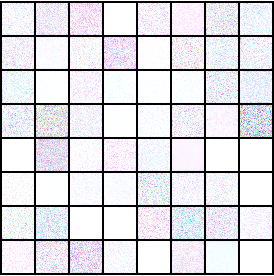}
    }
  \subfigure[$\alpha=0.5$]{
 \includegraphics[width=0.2\linewidth]{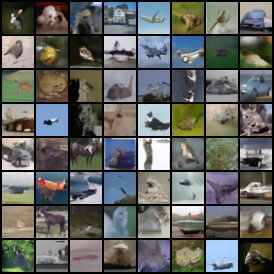}
    }
 \subfigure[$\alpha=10$]{
 \includegraphics[width=0.2\linewidth]{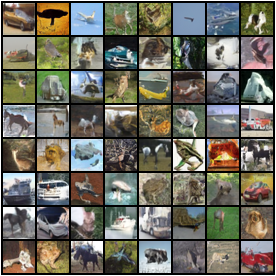}
    }
       \subfigure[pre-trained model]{
 \includegraphics[width=0.2\linewidth]{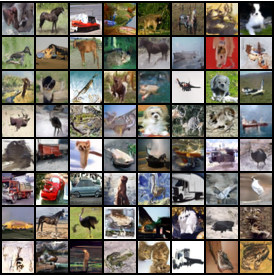}
    }

    \subfigure[Reward Curve]{
 \includegraphics[width=0.3\linewidth]{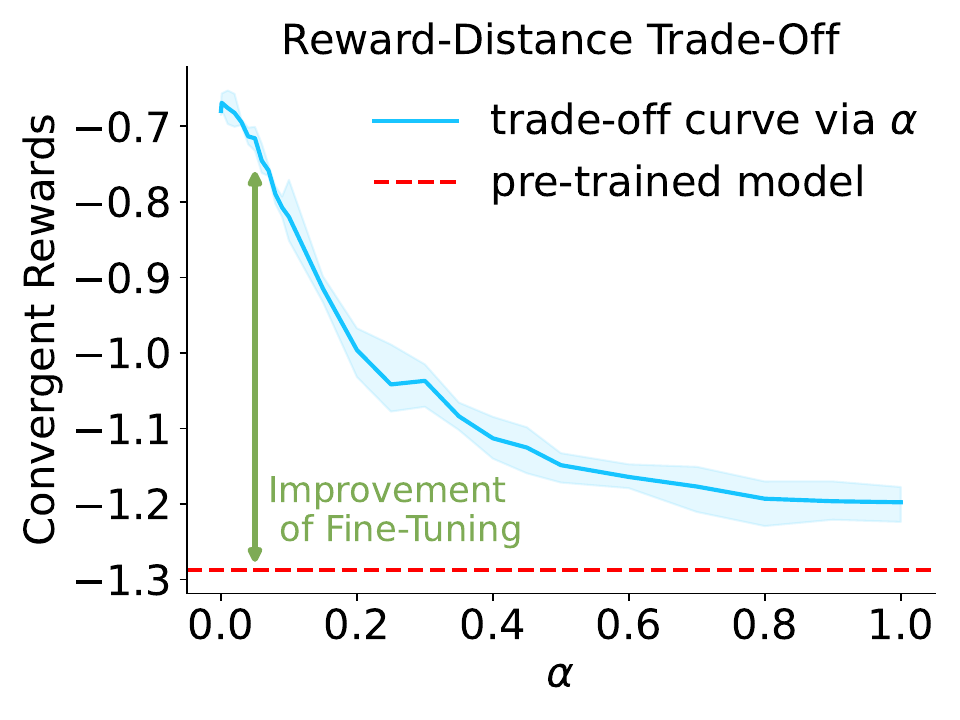}
    }
  \subfigure[W2 Curve]{
 \includegraphics[width=0.3\linewidth]{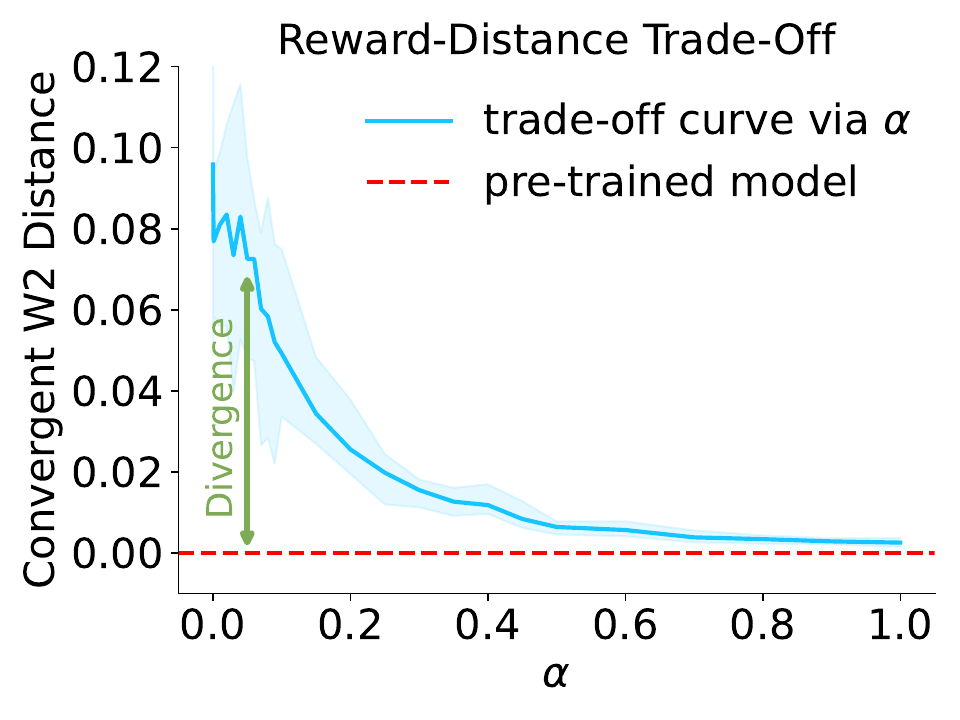}
    }
    \vspace{-0.1in}
	\caption{Reward-Diversity Trade-off and W2 Distance Control via $\alpha$ in Image Compression Task \citep{ddpo} of CIFAR-10 \citep{cifar10} with $\tau=1$. W2 distance is estimated by its upper bound (see App. \ref{app sec: W2 distance}). Each point in (e) and (f) corresponds to one group of experiments with $\alpha$ varying from 0 to 1. As $\alpha$ increases, the final reward decays while distance between the fine-tuned model and the reference model become closer, inducing a controllable fine-tuning process.}
	\label{fig: reward-distance trade-off cifar com}
\end{figure}

The effect of W2 regularization and exploration-exploitation trade-off is particularly evident in the image compression task using the model pre-trained in CIFAR-10, as shown in Figs. \ref{fig: reward-distance trade-off cifar com}. In Fig. \ref{fig: reward-distance trade-off cifar com}, we see that the introduction of W2 regularization prevents the policy from collapsing into an extremely greedy solution. As $\alpha$ increases, the model explores optimal solutions within a constrained neighborhood of the pre-trained model,  preserving diversity while still optimizing the reward.

The reward-distance trade-off curve in Fig. \ref{fig: reward-distance trade-off cifar com} further highlights this balance. Varying $\alpha$ provides an explicit trade-off between maximizing the reward and minimizing the divergence from the pre-trained model. As $\alpha$ decreases, the policy becomes more exploitative, potentially collapsing into a Delta distribution, as described in Lemma \ref{lemma: limiting case}. Conversely, increasing $\alpha$ encourages broader exploration of the data space, ensuring a more diverse generative policy. Therefore, proper W2 regularization effectively balances exploration and exploitation, avoiding policy collapse.

\begin{figure}[!ht]
	\centering
 \includegraphics[width=0.65\linewidth, height=0.2\textheight]{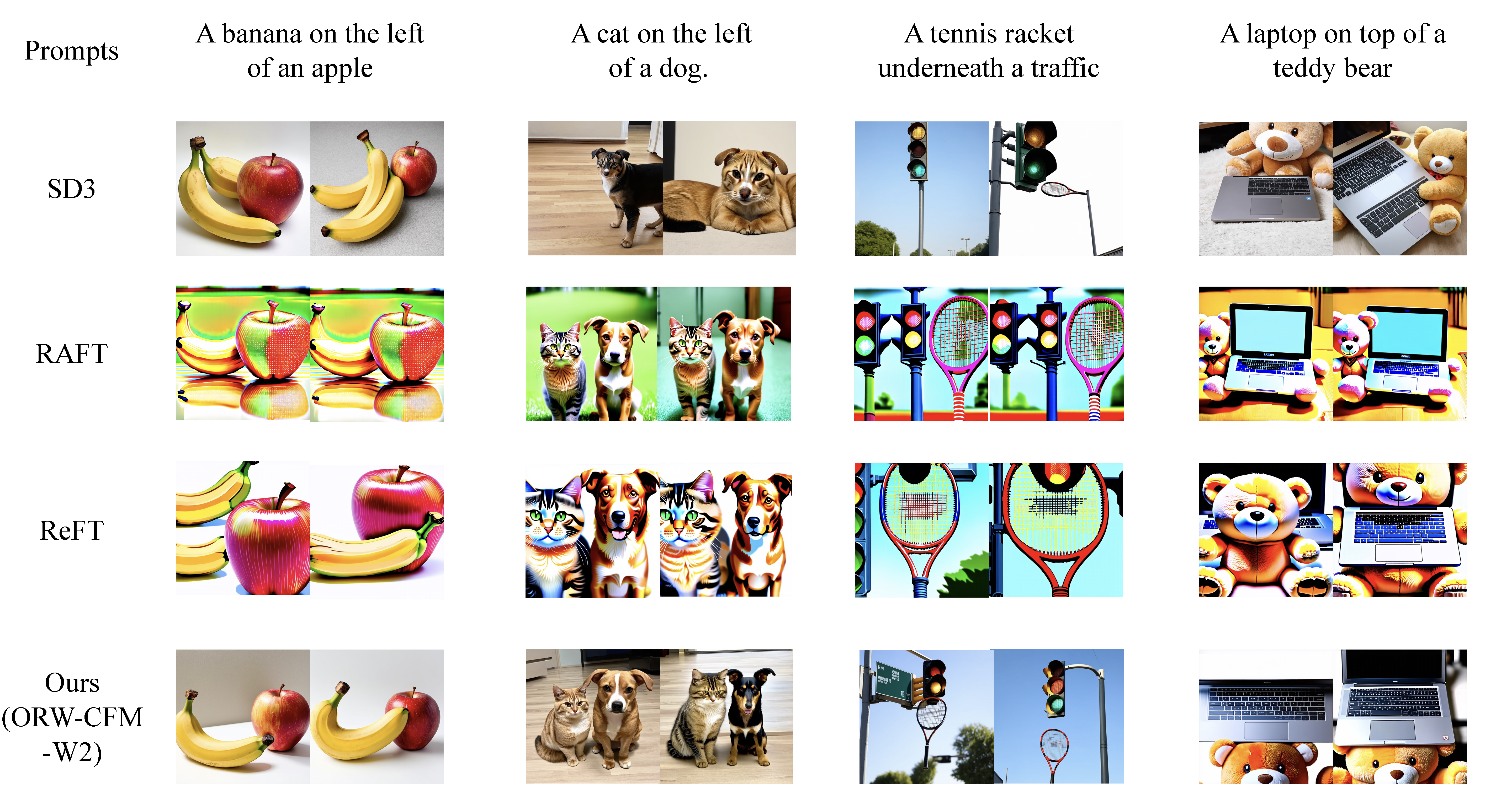}
	\caption{General Comparison of Different Fine-tuning Methods on SD3 via CLIP Rewards.}
    \label{fig: main Text-Image Alignment with Spatial Understanding}
\end{figure}

\subsection{Text-Image Alignment}

\paragraph{Aligning Large-Scale Flow Matching Models} To further validate our method's effectiveness, we evaluate our ORW-CFM-W2 method on fine-tuning Stable Diffusion 3 (SD3) \citep{sd3}. Besides, we adopt LoRA \citep{lora} for efficient fine-tuning. As shown in Figure \ref{fig: main Text-Image Alignment with Spatial Understanding}, we compare our method against RAFT \citep{Raft} and ReFT \citep{ReFT} on spatial relationship prompts used in DPOK \citep{dpok}. Our approach demonstrates better positional relationship control while maintaining image quality and semantic coherence. On challenging prompts like "a banana on the left of an apple" and "a cat on the left of a dog", our method consistently generates images with correct object placement and natural appearances, outperforming previous approaches in terms of both alignment accuracy and generation quality. Results on these complex compositional prompts further demonstrate our approach's capability in handling multiple semantic constraints while maintaining generation diversity. See App. \ref{app: Text-Image Alignment Experiments of Stable Diffusion 3} for more details and additional results.

\begin{figure}[!ht]
	\centering
 \includegraphics[width=0.65\linewidth, height=0.2\textheight]{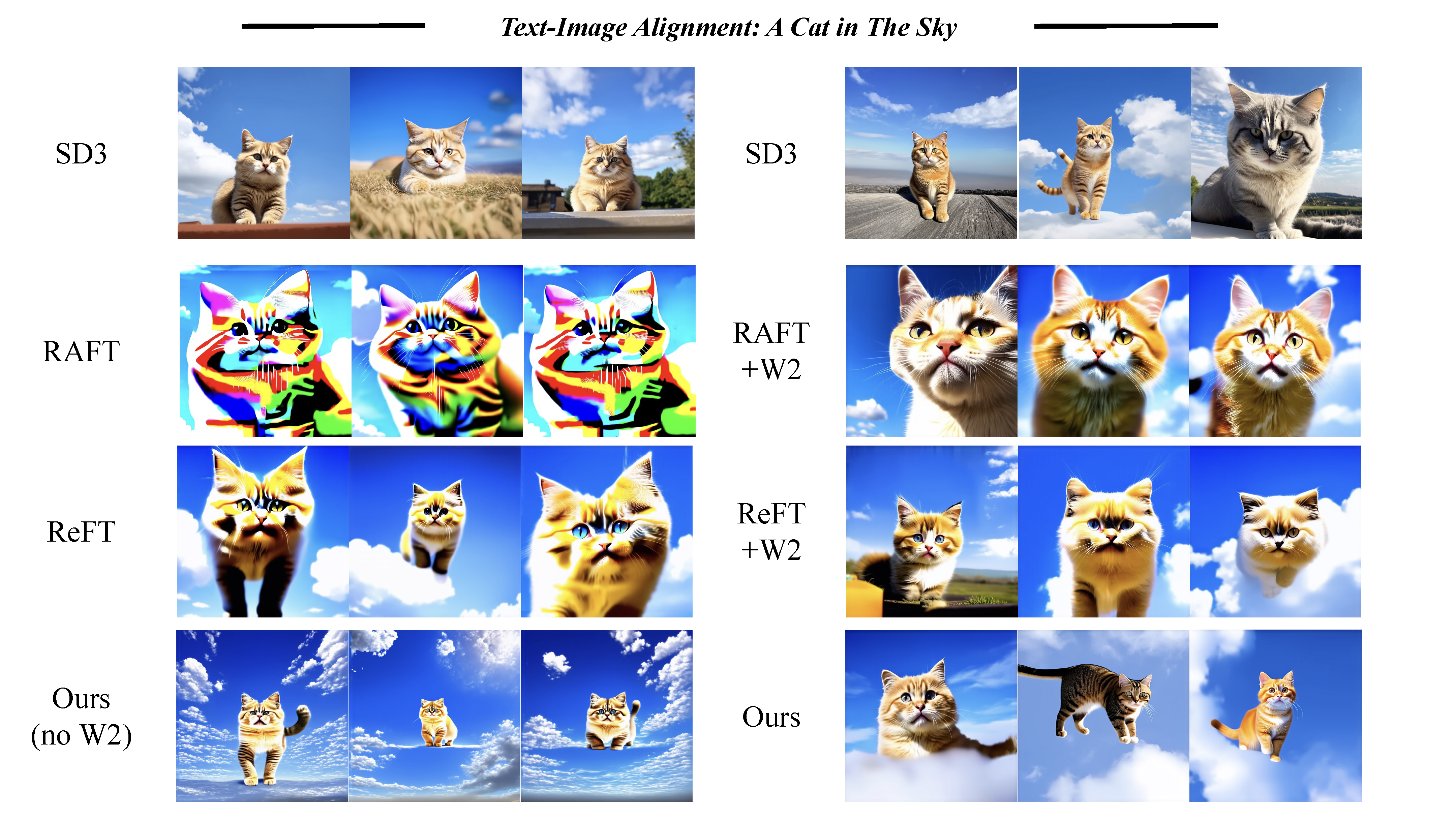}
	\caption{Ablation Studies on Text-Image Alignment Tasks of SD3. All use CLIP rewards.}
    \label{fig: main ab study in Text-Image Alignment}
\end{figure}

\paragraph{Ablation Results} 
The ablation studies in Figure \ref{fig: main ab study in Text-Image Alignment} demonstrate several key findings that align with our theoretical predictions. Our ORW-CFM method, even without W2 regularization ($\alpha=0$), achieves better semantic alignment than other online methods like RAFT and ReFT, though all methods without W2 regularization exhibit clear signs of policy collapse, generating nearly identical images despite achieving high rewards - empirically validating our theoretical prediction in Lemma \ref{lemma: limiting case}. The introduction of W2 regularization (RAFT+W2, ReFT+W2, and our ORW-CFM-W2 method, $\alpha=1$) successfully prevents this collapse while maintaining performance, validating our tractable W2 distance bound in Theorem \ref{theorem: W2 Bound for Flow Matching}. Our method achieves the best balance between reward optimization and diversity preservation, with generated images showing consistent semantic understanding of "cat in sky" while maintaining natural variations in compositions. See App. \ref{app: Policy Collapse in Text-Image Alignment} for more details.

\section{Conclusion}
In this paper, we present the first theoretically-grounded online RL framework for fine-tuning flow matching models - Online Reward-Weighted Conditional Flow Matching with Wasserstein-2 Regularization (ORW-CFM-W2) without requiring calculations of ELBO, exact likelihood or KL divergence, which are intractable in ODE-based flow matching models. Our work provides the first theoretical analysis of convergent behavior in flow matching online fine-tuning, proving that unregularized methods inevitably converge to a Delta distribution (Lemma \ref{lemma: limiting case}), inducing policy collapse that maximizes rewards at the cost of diversity. This theoretical discovery motivates our second major contribution - the derivation of a tractable Wasserstein-2 regularization that can be directly computed from vector fields, providing an efficient way to prevent collapse while maintaining diversity. Extensive experiments across target image generation, image compression, and text-image alignment demonstrate that our method achieves optimal convergence while maintaining stable learning, with ablation studies validating both the online reward-weighting mechanism for optimal convergence and W2 regularization for diversity preservation. See App. \ref{app: Discussion and Limitation} for more discussions.

\bibliography{iclr2025_conference}
\bibliographystyle{iclr2025_conference}

\clearpage
\appendix

\section{Text-Image Alignment Experiments of Stable Diffusion 3}
\label{app: Text-Image Alignment Experiments of Stable Diffusion 3}

In this section, to further empirically demonstrate our performance on fine-tuning large-scale generative models like the stable diffusion series \citep{sd3}, we conduct extensive experiments on fine-tuning Stable Diffusion 3 (SD3) \citep{sd3}, which adopts a flow matching architecture \citep{fmgm,sd3}. For reproducibility, we built our code on the open-sourced diffuser codebase \citep{von-platen-etal-2022-diffusers}, and adopted LoRA \citep{lora} and fp16 precision to reduce GPU memory requirements. Unless otherwise specified, we use $\alpha=1, \tau=0.1$ for all experiments in this section.

To verify the effectiveness of our method, we primarily use the raw CLIP score (i.e., logit) \citep{clip} as our rewards for text-image alignment for SD3 fine-tuning. We also demonstrate our method's adaptability by extending it to various other text-image alignment reward models, including HPS-V2 \citep{hpsv2}, Pick Score \citep{pic_a_score}, and Alpha Clip \citep{Alpha_clip}. Notably, our method learns entirely from self-generated data \citep{Raft,ai_collapse_nature} without requiring manually collected \citep{instructed_rlhf} or filtered datasets \citep{dpo}, while effectively preventing policy collapse through our theoretically-derived W2 regularization (See App. \ref{app sec: W2 distance}).

\subsection{Text-Image Alignment with Spatial Understanding}
\label{app: Text-Image Alignment with Spatial Understanding}

\begin{figure}[!ht]
	\centering
 \includegraphics[width=\linewidth]{figure/app/sd3/dpok/positional/postional_prompts_final.png}
	\caption{General comparison of positional relationship understanding between different fine-tuning methods on SD3 \citep{sd3}. We compare our proposed ORW-CFM-W2 method against RAFT \citep{Raft}, ReFT (online variants, training on self-generated data) \citep{ReFT}, and the original SD3 model \citep{sd3} across various positional relationship prompts from DrawBench \citep{drawbench} used in DPOK \citep{dpok}. Each column shows generations for a specific prompt testing spatial understanding (e.g., ``left of'', ``on top of'', ``underneath''). The results demonstrate that our method achieves better positional relationship control while maintaining image quality and semantic coherence. This suggests that our Wasserstein-2  regularized online reward-weighted approach effectively fine-tunes flow-based models to better understand and generate complex spatial relationships between objects. Model generations shown are sampled with the same random seeds across methods for fair comparison.}
    \label{fig: Text-Image Alignment with Spatial Understanding}
\end{figure}

In the first experiments, we aim to improve the model's understanding of spatial relationships between objects, a challenging task that requires both semantic understanding and precise positional control.

To demonstrate the effectiveness of our method, we use the raw CLIP score (i.e., logits) \citep{clip,ImageReward,Raft} directly as our text-image alignment reward model, eliminating the need for probability conversion or complex reward shaping \citep{r2d2}.  This straightforward approach further demonstrates our method's ability to optimize directly from raw similarity scores, making it more practical for real-world applications \citep{Raft,clip}.

Figure \ref{fig: Text-Image Alignment with Spatial Understanding} (i.e., Same as Figure \ref{fig: main Text-Image Alignment with Spatial Understanding}, but with more detailed discussion for ease of reading.) compares our method against several baselines: RAFT \citep{Raft}, ReFT (online variants, training on self-generated data) \citep{ReFT}, and the original SD3 model. The results demonstrate several key advantages of our approach:

\begin{enumerate}
    \item Spatial Semantic Coherence/Alignment: Our method shows superior understanding of spatial relationships (``left of'', ``on top of'', ``underneath'') while maintaining high image quality. This validates our theoretical analysis that ORW-CFM-W2 can effectively optimize arbitrary reward functions without compromising generation quality.
    
    \item Controlled Optimization: As shown in  Figure \ref{fig: alpha control mnist} and Figure \ref{fig: Text-Image Alignment with Spatial Understanding}, online RL methods trained on self-generated data \citep{Raft,ReFT,ddpo,ai_collapse_nature} normally face great challenges of policy collapse, where models converge to limited, homogeneous outputs that maximize rewards but lack diversity - a phenomenon theoretically characterized in Lemma \ref{lemma: limiting case}. To address this challenge, we introduce W2 regularization, which effectively prevents over-optimization and policy collapse (Lemma \ref{lemma: limiting case}). Our theoretical analysis in Theorem \ref{theorem: exp case} demonstrates that this regularization enables a controlled trade-off between reward maximization and output diversity (See Figure \ref{fig: reward-distance trade-off cifar com}). The experimental results validate this prediction, showing consistent spatial relationships while maintaining natural variations in object appearances, styles, and compositions.
    
    \item Easy-to-Use and Stable Fine-tuning Method: Our method achieves these improvements without requiring filtered datasets \citep{dpo}, likelihood calculations \citep{ddpo} or differentiable rewards \citep{adjoint_matching}, demonstrating the practical advantages of our reward-weighted flow matching framework. This aligns with our theoretical framework that bypasses the computational challenges of likelihood estimation in continuous flow models.
\end{enumerate}

In general, these results further validate our theoretical contributions in a practical, large-scale setting. The successful fine-tuning of SD3 demonstrates that our method can effectively balance between reward maximization and diversity preservation, as predicted by our reward-distance trade-off analysis (Theorem \ref{theorem: induced data distribution of ORW-CFM-W2} and \ref{theorem: exp case}). Furthermore, the stable learning behavior supports our theoretical analysis of the convergence properties under W2 regularization. These experiments also highlight the practical significance of our two key innovations: the online reward-weighting mechanism and the W2 regularization. The online approach allows the model to continuously explore and improve spatial understanding, while the W2 regularization prevents the collapse into simplistic or degenerate solutions - a common challenge in fine-tuning large generative models \citep{Raft,ddpo,ai_collapse_nature}.

\subsection{Ablation Studies And Policy Collapse Analysis}
\label{app: Policy Collapse in Text-Image Alignment}

\begin{figure}[!ht]
	\centering
 \includegraphics[width=\linewidth]{figure/app/sd3/cat/multi_baseline/cat_multi_method.png}
	\caption{Comparison of different fine-tuning methods on fine-tuning SD3 for the prompt ``a cat in the sky'' (reward is also the CLIP score \cite{clip,ImageReward,Raft}). The results demonstrate that policy collapse frequently occurs during online RL fine-tuning \citep{ddpo,Raft,ai_collapse_nature}. The baseline SD3 \cite{sd3} shows limited spatial/semantic understanding for unusual prompts (i.e., hard to see in our normal life). While RAFT \citep{Raft} and ReFT \citep{ReFT} without W2 regularization achieve high rewards (e.g., successful cat placement), they exhibit clear signs of policy collapse, generating nearly identical images with limited diversity. Our method without W2 regularization (Ours no W2) similarly suffers from policy collapse (also see Figure \ref{fig: tau control mnist}), aligning with our theoretical prediction of the convergent behavior in Lemma \ref{lemma: limiting case}. In contrast, methods incorporating W2 regularization (RAFT+W2, ReFT+W2, and Ours) maintain generation diversity while successfully positioning cats in the sky. Our approach is particularly good at balancing reward optimization with diversity preservation, demonstrating the effectiveness of our theoretically-derived W2 regularization bound (Theorem \ref{theorem: W2 Bound for Flow Matching}) in preventing policy collapse while maintaining high-quality generations. The results empirically validate our theoretical analysis of the reward-diversity trade-off in Theorem \ref{theorem: exp case}.}
    \label{fig: Policy Collapse in Text-Image Alignment}
\end{figure}

In the second part of SD3 fine-tuning experiments, to empirically validate our theoretical analysis of policy collapse in online RL fine-tuning as Lemma \ref{lemma: limiting case} and demonstrate the effectiveness of our W2 regularization via ablation, we further fine-tune SD3 models to align out-of-distribution (OOD) prompt "a cat in the sky". This setup provides an ideal test case for examining both the policy collapse phenomenon and our method's ability to maintain diversity while optimizing rewards in different online methods. The ablation results have been concluded in Figure \ref{fig: Policy Collapse in Text-Image Alignment}.

\paragraph{Effectiveness of Online Reward Weighting (ORW-CFM)} The ablation studies in Figure \ref{fig: Policy Collapse in Text-Image Alignment} demonstrate that our ORW-CFM method, even without W2 regularization ($\alpha=0$), achieves better semantic alignment than other online methods like RAFT \citep{Raft} and ReFT \citep{ReFT}. This superior performance validates our theoretical framework for online reward-weighted fine-tuning of flow matching models (See Lemma Optimal Convergent Behavior in Lemma \ref{lemma: limiting case}). The method successfully optimizes for the desired semantic content, achieving higher-quality generations that better match the prompt "cat in sky" compared to baseline approaches.

\paragraph{Policy Collapse Phenomenon} However, in Figure \ref{fig: Policy Collapse in Text-Image Alignment}, all methods without W2 regularization (RAFT, ReFT, and Ours without W2) exhibit clear signs of policy collapse \citep{ai_collapse_nature,ddpo}, generating nearly identical images despite achieving high rewards. This empirically validates our theoretical prediction in Lemma \ref{lemma: limiting case} that unregularized online reward-weighted methods naturally converge to a delta distribution at the maximum reward point, sacrificing diversity for reward maximization.

\textbf{Effectiveness of W2 Regularization} Based on Figure \ref{fig: Policy Collapse in Text-Image Alignment} and Table \ref{tab:cat_ab_study}, the introduction of W2 regularization (RAFT+W2, ReFT+W2, and our ORW-CFM-W2 method, $\alpha=1$) demonstrates the effectiveness of our theoretically-derived tractable W2 distance bound from Theorem \ref{theorem: W2 Bound for Flow Matching}. This regularization successfully prevents policy collapse while maintaining performance across all methods, enabling controlled divergence from the reference model. The consistent improvement in output diversity across different baseline methods validates our theoretical framework for controlling model behavior through W2 regularization in flow matching architectures.

\paragraph{Optimal Balance Through ORW-CFM-W2} In Figure \ref{fig: Policy Collapse in Text-Image Alignment}, our ORW-CFM-W2 method achieves the best balance between reward optimization and diversity preservation - generated images show consistent high-level semantic understanding of "cat in sky" while maintaining natural variations in cat appearances, sky conditions, and overall compositions. While both RAFT+W2 and ReFT+W2 show diversity improvements from W2 regularization, their outputs still exhibit less variation in style and composition compared to ORW-CFM-W2. This improved performance aligns with our theoretical analysis in Theorem \ref{theorem: exp case} regarding the reward-diversity trade-off in the exponential weighting case. The results also validate our optimization framework's ability to achieve stable convergence without collapsing into extreme exploitation.

\paragraph{Theoretical and Practical Validation} These experimental results in Figure \ref{fig: Policy Collapse in Text-Image Alignment} provide strong empirical validation of both key components of our approach. First, they demonstrate the ORW-CFM method's ability to achieve optimal policy convergence, though with potential collapse if unregularized (as predicted by our theoretical analysis of online reward-weighted learning). Second, they show how W2 regularization effectively prevents this collapse while preserving output quality and diversity - a capability derived from our tractable W2 distance bound for flow matching. The comprehensive improvement across different baseline methods \citep{Raft,ReFT} illustrates how our theoretical framework enables stable fine-tuning that maximizes rewards while maintaining the rich generative capabilities of flow-based models \citep{fmgm,sd3}.

\begin{table}[!ht]
\caption{ Performance and Diversity comparison of different fine-tuning methods on text-image alignment using SD3. 'CLIP Score' measures alignment with the text prompt using the CLIP model's similarity score (higher is better). 'Diversity Score' quantifies the diversity of generated images by measuring the mean pairwise distance between CLIP embeddings of generated samples (higher indicates more diverse outputs). We use Euclidean distance to calculate the distance between any two embeddings when calculating mean pairwise distance. Our method achieves the highest CLIP score while maintaining strong diversity comparable to the base SD3 model. The addition of W2 regularization ('+W2' variants) helps preserve generation diversity across all methods, with our full approach (Ours) achieving the best balance between alignment (35.46) and diversity (4.21) compared to baseline methods RAFT and ReFT. All scores are calculated by averaging across 64 samples.}
\begin{center}
\begin{tabular}{lccc}
\toprule
Method & CLIP Score $\uparrow$  & Diversity Score $\uparrow$&  \\
\midrule
\multicolumn{3}{c}{\textit{Effectiveness of ORW-CFM without W2}}\\
SD3 (Baseline)     & 28.69  &  \textbf{4.78}    \\
ORW-CFM (Ours w/o W2) & \textbf{33.63} & 3.47  \\
RAFT    & 29.30 & 2.05    \\ 
ReFT    & 29.32 & 3.26   \\
\midrule
\multicolumn{3}{c}{\textit{Effectiveness of W2 regularization}}\\
ORW-CFM-W2 (Ours)    & \textbf{35.46} &  \textbf{4.21} \\
RAFT + W2 & 30.88 & 2.81  \\
ReFT + W2 & 32.03 & 3.63  \\
\bottomrule
\end{tabular}
\end{center}
\label{tab:cat_ab_study}
\end{table}

\subsection{Adaptability Across Different Reward Models}

\begin{figure}[!ht]
	\centering
 \includegraphics[width=\linewidth]{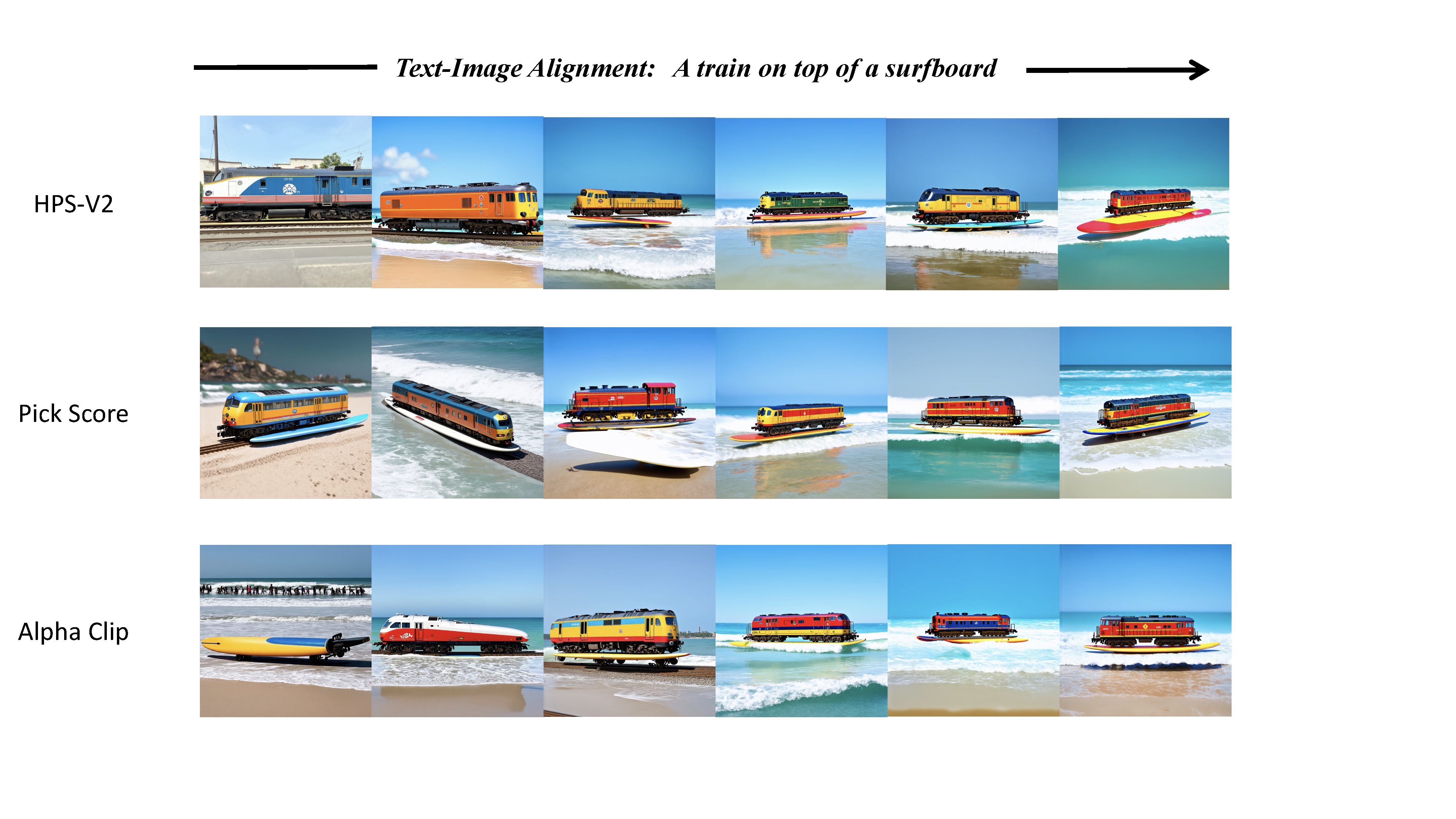}
	\caption{Comparison of fine-tuning methods for the prompt ``a train on top of a surfboard'', demonstrating our method's adaptability across different reward models (HPS-V2 \cite{hpsv2}, Pick Score \cite{pic_a_score}, and Alpha Clip scores \cite{Alpha_clip}). The results validate our theoretical analysis that ORW-CFM-W2 can effectively optimize diverse reward functions while maintaining generation quality through W2 regularization. 1) Top row: Using HPS-V2 reward shows our method successfully aligns locomotive placement while preserving realistic wave interactions. 2) Middle row: Pick Score reward further demonstrates robust spatial understanding across varying train models and surfboard configurations. 3) Bottom row: Alpha Clip reward showcases consistent performance even with different text-image alignment rewards, validating the reward-agnostic nature/property of our approach. This comprehensive evaluation across multiple reward models empirically confirms our theoretical prediction that the W2 regularization (Theorem \ref{theorem: W2 Bound for Flow Matching}) enables stable fine-tuning regardless of the underlying reward mechanism. The consistent high-quality generations across all three reward models demonstrate the practical advantages of our reward-weighted flow matching framework, which achieves strong performance without requiring filtered datasets \citep{dpo}, likelihood calculations \citep{ddpo},  reward-specific optimizations \citep{Raft,ReFT}, or differentiable rewards \citep{adjoint_matching}. These results particularly highlight how our method's reward-diversity trade-off (Theorem \ref{theorem: exp case}) generalizes effectively across different reward models while preventing policy collapse (Lemma \ref{lemma: limiting case}).}
    \label{fig: Adaptability Across Different Reward Models}
\end{figure}

In addition to the three reward functions in Section \ref{sec: exp}, to demonstrate the broad applicability and reward-agnostic nature/property of our methods, we further evaluate our performance using three different text-image alignment reward models: HPS-V2 \citep{hpsv2}, Pick Score \citep{pic_a_score}, and Alpha Clip \citep{Alpha_clip}. Using the challenging Spatial prompt ``a train on top of a surfboard,'' we examine how our method adapts to different reward signals/models while maintaining generation quality and semantics coherence.

In general, the results shown in Figure \ref{fig: Adaptability Across Different Reward Models} demonstrate several key strengths of our approach:

\textbf{Reward Adaptability:} Our method shows consistent performance improvement across all three reward models, successfully positioning trains on surfboards while maintaining realistic scene composition. This empirically validates our theoretical framework's ability to optimize arbitrary reward functions without requiring reward-specific modifications \citep{Raft} or filtered datasets \citep{dpo}. The high-quality generations across different reward models demonstrate that our W2 regularization (Theorem \ref{theorem: W2 Bound for Flow Matching}) provides effective stabilization regardless of the underlying reward mechanism (It is worth mentioning that appropriately increasing the $\alpha$ value can be used to adjust the distance from the reference model, which is also demonstrated in Figure \ref{fig: reward-distance trade-off cifar com}).

\textbf{Spatial Understanding:} Across all reward models, our method demonstrates robust spatial understanding:

\begin{enumerate}
    \item \textbf{HPS-V2:} Achieves precise locomotive placement while capturing realistic wave interactions.
    \item \textbf{Pick Score:} Maintains consistent spatial relationships across varying train models and surfboard configurations.
    \item \textbf{Alpha Clip:} Shows stable performance even with a different similarity metric framework.
\end{enumerate}

\textbf{Diversity Preservation:} Consistent with our theoretical analysis in Theorem \ref{theorem: exp case}, the generated images maintain natural variations in train appearances, surfboard designs, and ocean conditions while adhering to the spatial constraints. This demonstrates how our W2 regularization effectively prevents policy collapse (Lemma \ref{lemma: limiting case}) regardless of the reward models used.

\textbf{Stable Fine-tuning: }The consistent quality/alignment improvement across reward models validates our theoretical contribution regarding the convergence behavior of ORW-CFM-W2 (See App. \ref{app: sec ORW-CFM with W2 Distance Bound}). As predicted by our analysis of the exponential weighting case (Theorem \ref{theorem: exp case}), the method achieves stable learning without collapsing into extreme exploitation, even when faced with different reward scales and architectures. It is also worth mentioning that the reward model is very important for the success of the fine-tuning task \citep{Raft,ddpo,dpo,ImageReward}, so how to select the best reward model still needs further exploration. 

All in all, the successful adaptation across multiple reward models showcases how our online reward-weighting mechanism and W2 regularization work together to enable robust, stable fine-tuning without requiring reward-specific optimizations. This generality makes our method particularly valuable for real-world applications where different reward models might be needed for different tasks or domains \citep{ddpo,ai_collapse_nature}.

Furthermore, these results address a key challenge in flow matching fine-tuning: \textbf{the ability to optimize diverse reward objectives without compromising generation quality or diversity while training on self-generated data}. Our method's consistent performance improvement across reward models demonstrates that the reward-distance trade-off formulation (See Theorem \ref{theorem: exp case}) provides a easy-to-use and theoretically-grounded approach to balancing task-specific optimization with output diversity, regardless of how the reward is computed.

\subsection{Compositional and Complex Semantic Understanding}
\label{app: Compositional Semantic Understanding}

\begin{figure}[!ht]
	\centering
 \includegraphics[width=\linewidth]{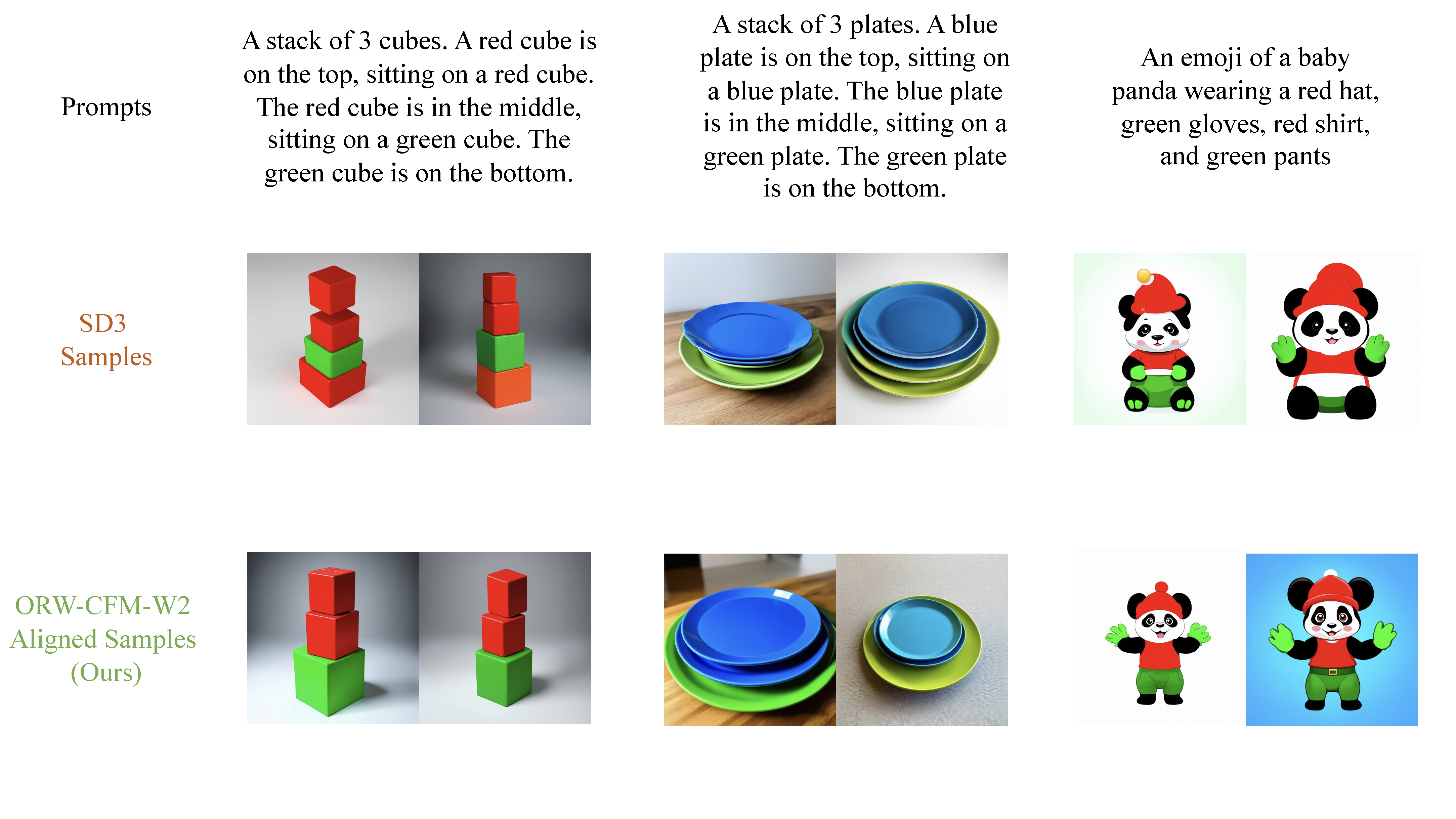}
	\caption{ Additional evaluation of ORW-CFM-W2's semantic understanding and compositional generation capabilities on complex structured prompts from DrawBench \citep{drawbench} used in DPOK \citep{dpok}. Each column demonstrates our method's ability to handle multi-faceted requirements including spatial relationships, color specifications (red, green), and detailed object attributes. The figure compares baseline SD3 \citep{sd3} samples with our ORW-CFM-W2 results across three challenging compositional prompts: stacked colored cubes, stacked colored plates, and a customized panda emoji. Our method shows consistently better semantic alignment while maintaining generation diversity, validating our theoretical framework's ability to prevent mode collapse \citep{ai_collapse_nature} through W2 regularization.}
    \label{fig: Compositional Semantic Understanding}
\end{figure}

In the fourth experiments of fine-tuning SD3 models \citep{sd3}, to further validate our method's capability in handling complex semantic relationships and demonstrate the practical benefits of our theoretical framework, we evaluate ORW-CFM-W2 on a set of challenging compositional prompts that test multiple aspects of semantic understanding. These experiments not only demonstrate our method's effectiveness on large-scale flow matching models like SD3, but also empirically validate our theoretical predictions about controlled optimization and diversity preservation. We use CLIP score as our text-image alignment reward model \citep{clip,ImageReward,Raft}.

\textbf{Compositional Attribute Control:} As shown in Figure \ref{fig: Compositional Semantic Understanding}, our method successfully handles prompts with multiple interleaved requirements spanning \textbf{spatial relationships, color specifications, and object attributes}. In the cube-stacking example, the model demonstrates precise control over both spatial ("on top", "in the middle", "on the bottom") and color attributes (two red cubes on a green cube), while maintaining physically plausible stacking arrangements. This validates that our ORW-CFM-W2 can effectively achieve fine-grained fine-tuning without falling into degenerate solutions.

\textbf{Spatial Color Relationships:} The stacked plates example particularly highlights our method's ability to jointly optimize multiple constraints - positional relationships (top, middle, bottom) and color attributes (blue plates on top/middle, green plate on bottom). The consistent high-quality results validate our theoretical prediction that our methods enables stable fine-tuning even when dealing with complex, spatially-dependent color constraints.

\textbf{Multi-attribute Character Specification:} The panda emoji example demonstrates fine-grained control over multiple clothing attributes (red hat, green gloves, red shirt, green pants) while maintaining the core semantic concept and cartoon-style consistency. This success in handling detailed attribute specifications without semantic drift empirically supports our analysis of the reward-diversity trade-off (Theorem \ref{theorem: induced data distribution of ORW-CFM-W2}), showing how our method balances attribute-level precision with overall coherence.

In general, across all test cases, our method achieves notably better semantic alignment compared to the baseline SD3 samples while maintaining image quality and stylistic diversity. This improvement is particularly evident in cases requiring precise attribute combinations or spatial relationships, where baseline samples often exhibit attribute mixing or spatial inconsistencies. The results demonstrate that our theoretically-grounded approach effectively addresses the policy collapse problem (Lemma \ref{lemma: limiting case}) through W2 regularization, enabling precise semantic control without sacrificing generation diversity.
These results offer strong empirical validation of our theoretical framework. The successful handling of complex compositional prompts demonstrates that our online reward-weighting mechanism and W2 regularization work together to enable robust, stable fine-tuning of large-scale flow matching models. This generality makes our method particularly valuable for real-world applications requiring precise semantic control and attribute specification while maintaining natural variations in the generated outputs.
All in all, the consistent performance across diverse prompt types demonstrates our ability to optimize multiple semantic objectives without compromising generation quality or diversity. Our method's success in handling complex compositional prompts \citep{drawbench,dpok} while maintaining coherent generations provides strong empirical support for our theoretical contributions while demonstrating the practical utility of our approach in real-world applications.

\clearpage

\section{Background}

\subsection{Reinforcement Learning}

\subsubsection{RL Formulation}
In RL \citep{rlsutton}, an agent interacts with an environment by taking actions $a \in \mathcal{A}$ in states $s \in \mathcal{S}$, aiming to maximize cumulative rewards over time. The agent's behavior is governed by a policy $\pi(a \mid s)$, which represents the probability of taking action $a$ given state $s$. The main goal of RL is to optimize the policy to maximize the expected return:

\begin{equation}
    J(\pi)=\mathbb{E}_\pi\left[\sum_{t=0}^T \gamma^t r\left(s_t, a_t\right)\right]
\end{equation}

where $\gamma \in[0,1]$ is the discount factor that prioritizes early rewards, $T$ is the time horizon (which could be infinite), $r\left(s_t, a_t\right)$ is the reward at time step $t$.

The core problem in RL is how to learn the optimal policy $\pi^*(a \mid s)$ that maximizes the expected return $J(\pi)$. A special case of RL, where there is no states but only actions for each agent, is known as the multi-armed bandit problem \citep{rlsutton}.

\subsubsection{Reward-Weighted Regression}

In Reward-Weighted Regression (RWR), the policy update is done by re-weighting the probability of actions based on the rewards they produce. The updated policy is given by:

\begin{equation}
    \pi_{\text {new }}(a \mid s) \propto \pi_{\text {old }}(a \mid s) \cdot \exp (\tau * r(s, a))
\end{equation}

where $\tau$ is a temperature parameter controlling the trade-off between exploration and exploitation.

\subsubsection{Advantage-Weighted Regression}
Advantage-Weighted Regression (AWR) refines RWR by weighting actions based on their advantage, which measures how much better an action is compared to the expected value of other actions:

\begin{equation}
    \pi_{\text {new }}(a \mid s) \propto \pi_{\text {old }}(a \mid s) \cdot \exp (\tau A(s, a))
\end{equation}

where $A(s, a)=r(s, a)-V(s)$ is the advantage function.

In practice, we can also call the data-collection policy, the behavior policy $\mu$ \citep{rlsutton,ppo,awr}, and the optimal target policy $\pi^{\star}$.

Then, the update rules of AWR becomes:
\begin{equation}
        \pi_{\text {new }}(a \mid s) \propto \mu(a \mid s) \cdot \exp (\tau A(s, a))
\end{equation}

And the optimal policy is therefore given by \citep{awr},
\begin{equation}
    \pi^*(\mathbf{a} \mid \mathbf{s})=\frac{1}{Z(\mathbf{s})} \mu(\mathbf{a} \mid \mathbf{s}) \exp \left(\tau *A^\mu\left(\mathbf{s}_t, \mathbf{a}_t\right)\right)
\end{equation}
where $Z(s)$ is the partition function that normalizes
the conditional action distribution.

\subsection{Flow Matching}
\subsubsection{Flow Matching Formulation}
Flow Matching \citep{fmgm} is a methodology for training continuous normalizing flows, which are generative models that transform a simple base distribution into a complex target distribution through a continuous, invertible mapping. The core idea of FM is to define a continuous flow that gradually transports samples from an initial distribution $p_0(x)$ (usually a simple distribution like a standard Gaussian) to a target distribution $p_1(x)$ (the data distribution we aim to model).

This transformation is governed by an ordinary differential equation:

\begin{equation}
    \frac{d x}{d t}=u_t(x)
\end{equation}

where $x \in \mathbb{R}^d$ is a data point in $d$-dimensional space, $t \in[0,1]$ is the time parameter, and $u_t(x)$ is a time-dependent vector field that dictates the dynamics of the flow at each point in time and space.

The solution to this ODE defines a flow $\phi_t(x)$ that maps a point $x$ at time $t$ to another point at time $t+\delta t$. The transformation over the entire time interval $[0,1]$ effectively transports the entire distribution $p_0(x)$ to $p_1(x)$.

The evolution of the probability density $p_t(x)$ over time is described by the continuity equation:

\begin{equation}
    \frac{\partial p_t(x)}{\partial t}+\nabla \cdot\left(p_t(x) u_t(x)\right)=0\label{eqn:continuity}
\end{equation}

where $\nabla$. denotes the divergence operator.
The goal in FM is to learn the vector field $u_t(x)$ such that the flow induced by $u_t(x)$ transforms $p_0(x)$ into $p_1(x)$ as $t$ progresses from 0 to 1.

\subsubsection{Conditional Flow Matching}

Conditional Flow Matching \citep{fmgm} is an extension of FM that introduces conditioning to simplify the learning process. Instead of modeling the entire distribution transformation directly, CFM focuses on modeling the conditional dynamics of the flow given a target sample $x_1$ from the data distribution.

In CFM, we consider pairs $\left(x_0, x_1\right)$, where $x_0 \sim p_0(x)$ and $x_1 \sim p_1(x)$. The conditional flow transports $x_0$ to $x_1$ over time $t \in[0,1]$ using a vector field $u_t\left(x \mid x_1\right)$ that depends on $x_1$ :

\begin{equation}
    \frac{d x}{d t}=u_t\left(x \mid x_1\right)
\end{equation}

The objective in CFM is to learn a parametric model $v_\theta(t, x)$ that approximates $u_t\left(x \mid x_1\right)$. The loss function used to train the model is:

\begin{equation}
    \mathcal{L}_{\mathrm{CFM}}(\theta)=\mathbb{E}_{t, x_1, x}\left[\left\|v_\theta(t, x)-u_t\left(x \mid x_1\right)\right\|^2\right]
\end{equation}

where the expectation is taken over $t \sim \mathcal{U}(0,1), x_1 \sim p_1(x)$, and $x \sim p_t\left(x \mid x_1\right)$, the conditional distribution at time $t$ given $x_1$.

By conditioning on $x_1$, CFM reduces the complexity of the learning problem, as the model only needs to learn how to transform $x_0$ to $x_1$ for given pairs, rather than modeling the entire joint distribution.

\subsubsection{OT-CFM}

Our baseline code is built upon torchcfm \citep{otcfm}, where we use OT-CFM loss to learn flow matching models (i.e., OT-paths). Optimal Transport theory provides a framework for finding the most efficient way to transport one probability distribution to another \citep{otcfm}. OT-CFM integrates optimal transport into the conditional flow matching framework to define vector fields based on optimal transport maps.

In OT-CFM \citep{otcfm}, the vector field $u_t\left(x \mid x_1\right)$ is derived from the optimal transport plan that minimizes the cost of transporting $p_0(x)$ to $p_1(x)$. The cost is typically defined in terms of the expected squared Euclidean distance between transported points.

By using optimal transport, OT-CFM ensures that the flow from $x_0$ to $x_1$ follows the most efficient path, which can improve the convergence and quality of the generative model. The OT-CFM loss function remains similar to the CFM loss but incorporates the optimal transport-based vector field:

\begin{equation}
    \mathcal{L}_{\mathrm{OT}-\mathrm{CFM}}(\theta)=\mathbb{E}_{t, x_1, x}\left[\left\|v_\theta(t, x)-u_t^{\mathrm{OT}}\left(x \mid x_1\right)\right\|^2\right]
\end{equation}

where $u_t^{\mathrm{OT}}\left(x \mid x_1\right)$ represents the optimal transport vector field.

\subsubsection{Likelihood Calculation}
\label{app: likelihood cal}

As a special case of CNF, flow matching models share the benefit of exact likelihood calculation. Specifically, the learned vector field induces a probability path induced by the pushforward:
\begin{equation}
    p_t:=(\psi_t)_*p_0
\end{equation}
where $p_0$ is the initial noise distribution and $\psi_t(x):=x_t$ is the flow governed by the learned vector field and the flow ODE as $\frac{dx_t}{dt}=v_t(x_t)$. The continuity equation in \eqref{eqn:continuity} allows us to calculate such change-of-probability term with the estimation of the model divergence as \citep{fmgm}:
\begin{equation}
    \log p_1(x_1)=\log p_0(x_0) + \int_1^0 \operatorname{div}(v_t)(x_t)dt
\end{equation}
The trajectory is generated reverse through time from $t=1$ (arbitrary data) to $t=0$ via the learned vector field. Direct calculation of the model divergence $\operatorname{div}(v_t)$ is prohibitively expensive. Instead, Hutchinson's trace estimator is used to efficiently approximate the model divergence as \citep{hutchinson1989stochastic}:
\begin{equation}
    \operatorname{div}(v_t)(x)=\mathbb{E}_{\varepsilon\sim\mathcal{N}(0,I)}[\varepsilon^\top \nabla v_t(x)\varepsilon]
\end{equation}

In contrast to diffusion models that rely on variational bounds that can be calculated in closed form as the summation of KL divergence \citep{ho2020denoising}, \textbf{the likelihood for flow-based models requires expensive simulation and the calculation of vector-Jacobian product to estimate the model divergence}. Therefore, the likelihood-based RL formulation for diffusion models in \citet{ddpo,ppo,dpo,dpok} is non-trivial and  computationally expensive for flow models.

\clearpage

\clearpage
\section{Proofs}

\newtheorem*{nitheorem}{Theorem}

\subsection{Induced Data Distribution}
\label{app: induced data distribution}

In practice, in generative model, we are normally very interested in the induced data distribution $p_{new}(x_1)$ after learning under a design loss function $\mathcal{L}$. In order to facilitate the subsequent derivation, in this section, we first derive a general Theorem of Induced Data Distribution.

\begin{Theorem}[Induced Data Distribution]
\label{theorem: Induced Data Distribution}
Given the loss function:

\begin{equation}
    \mathcal{L}_{\text {total }}(\theta)=\mathbb{E}_{x_1 \sim q\left(x_1\right)}\left[\tilde{L}\left(x_1\right)\right]
\end{equation}

where $\tilde{L}\left(x_1\right)$ is any per-sample loss function, the induced data distribution over $x_1$ is:

\begin{equation}
    p_{\text {new }}\left(x_1\right) \propto q\left(x_1\right) \exp \left(-\gamma \tilde{L}\left(x_1\right)\right)
\end{equation}

with $\gamma>0$ being a positive constant.
\end{Theorem}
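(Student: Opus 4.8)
The plan is to characterize $p_{\text{new}}$ not as the naive minimizer of the unregularized expected loss — which would collapse to a point mass at $\argmin_{x_1}\tilde L(x_1)$ and so carry no distributional content — but as the solution of an \emph{entropy-regularized} variational problem that keeps the induced distribution anchored to the base distribution $q$. Concretely, I would model the training as implicitly trading off lowering the expected per-sample loss against staying close to $q$, so that $p_{\text{new}}$ solves
\[
p_{\text{new}} = \argmin_{p}\left\{ \mathbb{E}_{x_1 \sim p}[\tilde L(x_1)] + \frac{1}{\gamma}\,\KL(p \,\|\, q) \right\}
\]
over all probability densities $p$ on $\mathcal{X}$, where $1/\gamma>0$ is the regularization strength. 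This is the standard reward-weighted-regression / KL-control viewpoint invoked elsewhere in the paper, and it makes the positivity of $\gamma$ in the statement precisely the inverse temperature of the implicit regularizer.

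First I would set up the constrained problem with the single normalization constraint $\int_{\mathcal{X}} p(x_1)\,dx_1 = 1$ and form the Lagrangian
\[
\mathcal{J}(p,\lambda) = \int_{\mathcal{X}} p(x_1)\tilde L(x_1)\,dx_1 + \frac{1}{\gamma}\int_{\mathcal{X}} p(x_1)\log\frac{p(x_1)}{q(x_1)}\,dx_1 + \lambda\!\left(\int_{\mathcal{X}} p(x_1)\,dx_1 - 1\right).
\]
Next I would take the (functional) derivative with respect to $p(x_1)$ and set it to zero, yielding the pointwise stationarity condition $\tilde L(x_1) + \frac{1}{\gamma}\big(\log p(x_1) - \log q(x_1) + 1\big) + \lambda = 0$. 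Solving for $p(x_1)$ gives $p(x_1) = C\,q(x_1)\exp\!\big(-\gamma\tilde L(x_1)\big)$, where the constant $C = \exp(-1-\gamma\lambda)$ is fixed by the normalization constraint, which is exactly $p_{\text{new}}(x_1) \propto q(x_1)\exp(-\gamma\tilde L(x_1))$.

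To finish I would confirm that this stationary point is the global minimizer rather than a saddle: since $\KL(p\|q)$ is strictly convex in $p$ and $\mathbb{E}_p[\tilde L]$ is linear in $p$, the objective $\mathcal{J}$ is strictly convex on the space of densities, so the unique stationary point is the minimizer; I would also note that the normalizer $Z = \int_{\mathcal{X}} q(x_1)\exp(-\gamma\tilde L(x_1))\,dx_1$ is finite whenever $\exp(-\gamma\tilde L)$ is $q$-integrable (e.g.\ when $\tilde L$ is bounded below).

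The main obstacle is conceptual rather than computational: the theorem as stated does not display the regularizer, so the crux of a faithful proof is \emph{justifying} that the loss-minimizing training dynamics should be modeled by the KL-regularized objective — equivalently, identifying the implicit inverse temperature $\gamma$ with the effective regularization induced by finite-capacity or early-stopped optimization. Once that modeling choice is made explicit, the remainder is the routine Gibbs-variational calculation above, and the general form then specializes directly (e.g.\ to the reward weighting $w(x_1)\propto\exp(-\gamma\tilde L(x_1))$) in the downstream theorems.
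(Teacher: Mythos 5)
Your proposal is mathematically correct, and it is best viewed as a rigorous filling-in of the paper's first proof rather than a reproduction of it. The paper actually offers three arguments, all of which are modeling assertions rather than derivations: an RL perspective that simply \emph{cites} the RLHF literature for the tilted form $p^*(x_1)\propto q(x_1)\exp(\tau r(x_1))$ and substitutes $r=-\tilde L$, $\tau=\gamma$; an energy-based-model perspective that identifies $\tilde L$ with an energy and $q$ with a base measure; and a Bayesian perspective that identifies $\exp(-\gamma\tilde L)$ with a likelihood. None of these carries out the optimization that produces the Gibbs form. Your route — posing the KL-regularized problem $\min_p\,\mathbb{E}_{x_1\sim p}[\tilde L(x_1)]+\tfrac{1}{\gamma}D_{\mathrm{KL}}(p\,\|\,q)$, forming the Lagrangian with the normalization constraint, taking the functional derivative, and certifying global optimality by strict convexity — is exactly the calculation the paper defers to its later RL-perspective theorem (where it maximizes $\mathbb{E}_q[r]-\tfrac{1}{\lambda}D_{\mathrm{KL}}(q\,\|\,q^{n-1})$ and obtains $q^n\propto q^{n-1}e^{\lambda r}$ by the same variational argument), so your proof is fully consistent with the paper's framework while being self-contained; it additionally buys uniqueness of the minimizer and an explicit integrability condition for the normalizer, which the paper never states. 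You are also right to flag the genuine conceptual gap, and it is a gap in the paper as much as in any proof attempt: the theorem's hypothesis is only the unregularized loss $\mathbb{E}_{x_1\sim q}[\tilde L(x_1)]$, whose minimization over $\theta$ by itself induces no tilted distribution at all, so the result only follows once one \emph{assumes} that training is equivalent to the KL-regularized (equivalently EBM or Bayesian) problem with inverse temperature $\gamma$ — an assumption the paper makes implicitly through its three "perspectives" and that your write-up makes explicit.
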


\begin{proof}[Proof of Theorem \ref{theorem: Induced Data Distribution}, from RL Perspective]
In fact, most previous RLHF \citep{zie2020,stiennon2020,ouyang2022} and RL works \citep{awr} have found that assume a reward model $r(x)$ that captures human preferences, with the goal of modifying the base generative model $q(x)$ such that it generates the following tilted distribution:
\begin{equation}
    p_{new}(x_1) = p^*(x_1) \propto q(x_1) \exp (\tau * r(x_1))
\end{equation}

just set the  reward model as the negative of per-sample loss $\tilde{L}\left(x_1\right)$, namely:

\begin{equation}
    r(x_1) = - \tilde{L}\left(x_1\right)
\end{equation}

and set $\tau = \gamma$, which is a constant, and we can obtain: 

\begin{equation}
    p_{\text {new }}\left(x_1\right) = p^*\left(x_1\right)  \propto q\left(x_1\right) \exp \left(-\gamma \tilde{L}\left(x_1\right)\right)
\end{equation}

Then, the proof concludes.
\end{proof}

\begin{proof}[Proof of Theorem \ref{theorem: Induced Data Distribution}, from Energy Function Perspective]
    In standard EBMs, the probability density over $x_1$ is defined using an energy function $E\left(x_1\right)$ :

\begin{equation}
    p_{\text {EBM }}\left(x_1\right)=\frac{1}{Z} \exp \left(-\beta E\left(x_1\right)\right)
\end{equation}

where $\beta>0$ is the inverse temperature, and $Z$ is the partition function ensuring normalization:

\begin{equation}
    Z=\int \exp \left(-\beta E\left(x_1\right)\right) d x_1
\end{equation}

This formulation assumes a uniform base measure over $x_1$.

We consider the per-sample loss $\tilde{L}\left(x_1\right)$ with the energy function $E\left(x_1\right)$, and set $\beta=\gamma$.
Therefore:

\begin{equation}
    p_{\text {new }}\left(x_1\right)=\frac{1}{Z} \exp \left(-\gamma \tilde{L}\left(x_1\right)\right)
\end{equation}

When the data distribution $q\left(x_1\right)$ is not uniform, or when we have prior knowledge about $x_1$ like $q(x_1)$, it is appropriate to include $q\left(x_1\right)$ in the model \citep{lecun2006tutorial}. This leads to a modified probability density:

\begin{equation}
    p_{\text {new }}\left(x_1\right)=\frac{1}{Z} q\left(x_1\right) \exp \left(-\gamma \tilde{L}\left(x_1\right)\right)
\end{equation}

Here, $q\left(x_1\right)$ acts as a base measure or prior distribution, and $\tilde{L}\left(x_1\right)$ corresponds to the energy function $E\left(x_1\right)$. The inclusion of $q\left(x_1\right)$ is necessary to accurately represent the underlying distribution of the data when it is not uniform.

\begin{Remark}
    The total loss function is:

\begin{equation}
    \mathcal{L}_{\text {total }}(\theta)=\int q\left(x_1\right) \tilde{L}\left(x_1\right) d x_1
\end{equation}

Minimizing $\mathcal{L}_{\text {total }}(\theta)$ with respect to $\theta$ corresponds to adjusting the model parameters to reduce the expected loss over the data distribution $q\left(x_1\right)$.

We can interpret this minimization as seeking a distribution $p_{\text {new }}\left(x_1\right)$ that places higher probability on samples with lower $\tilde{L}\left(x_1\right)$. This aligns with the principle of maximum likelihood estimation in EBMs, where we aim to maximize the likelihood of observed data under the model.
\end{Remark}

Then, the proof concludes.
\end{proof}

\begin{proof}[Proof of Theorem \ref{theorem: Induced Data Distribution}, from Bayesian Inference Perspective]
In Bayesian inference, the posterior distribution $p_{\text {new }}\left(x_1 \mid \mathcal{D}\right)$ over a latent variable $x_1$ given data $\mathcal{D}$ is proportional to the product of the prior $q\left(x_1\right)$ and the likelihood $p\left(\mathcal{D} \mid x_1\right)$ :

\begin{equation}
    p_{\text {new }}\left(x_1 \mid \mathcal{D}\right)=\frac{1}{Z} q\left(x_1\right) p\left(\mathcal{D} \mid x_1\right)
\end{equation}

where $Z$ is the normalization constant:

\begin{equation}
    Z=\int q\left(x_1\right) p\left(\mathcal{D} \mid x_1\right) d x_1
\end{equation}

In statistical modeling and machine learning, the loss function $\tilde{L}\left(x_1\right)$ quantifies the discrepancy between the model's predictions and the observed data. It is common to relate the loss function to the likelihood function through the negative log-likelihood:

\begin{equation}
    \tilde{L}\left(x_1\right)=-\frac{1}{\gamma} \log p\left(\mathcal{D} \mid x_1\right)
\end{equation}

Rearranging this equation gives:

\begin{equation}
    p\left(\mathcal{D} \mid x_1\right)=\exp \left(-\gamma \tilde{L}\left(x_1\right)\right)
\end{equation}

The posterior distribution becomes:

\begin{equation}
    p_{\text {new }}\left(x_1\right)=\frac{1}{Z} q\left(x_1\right) \exp \left(-\gamma \tilde{L}\left(x_1\right)\right)
\end{equation}

where:

\begin{equation}
    Z=\int q\left(x_1\right) \exp \left(-\gamma \tilde{L}\left(x_1\right)\right) d x_1
\end{equation}

which ensures that $p_{\text {new }}\left(x_1\right)$ integrates to one.

Then, the proof concludes.
\end{proof}

\begin{Corollary}[Limiting Behavior of Well Learned Model]
\label{corollary: Limiting Behavior of Well Learned Model}
    As $\tilde{L}\left(x_1\right) \rightarrow 0$, the exponential term approaches 1 :
    
\begin{equation}
    \lim _{\tilde{L}\left(x_1\right) \rightarrow 0} \exp \left(-\gamma \tilde{L}\left(x_1\right)\right)=1
\end{equation}

Thus, $p_{\text {new }}\left(x_1\right) \propto q\left(x_1\right)$, meaning the induced distribution is the same as the data sampling distribution $q(x_1)$.
\end{Corollary}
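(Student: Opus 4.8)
The plan is to read the conclusion straight off the closed-form expression for the induced distribution supplied by Theorem~\ref{theorem: Induced Data Distribution}, namely $p_{\text{new}}(x_1) = \frac{1}{Z}\, q(x_1)\exp(-\gamma \tilde{L}(x_1))$ with normalizer $Z = \int q(x_1)\exp(-\gamma \tilde{L}(x_1))\,dx_1$. The whole argument hinges on two elementary facts: the continuity of the exponential map at the origin, and the fact that $q$ is already a normalized density. No new machinery is needed beyond the theorem itself.

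First I would treat the per-sample factor. Since $t \mapsto e^{-\gamma t}$ is continuous at $t = 0$, letting $\tilde{L}(x_1) \to 0$ gives $\exp(-\gamma \tilde{L}(x_1)) \to e^{0} = 1$, which is exactly the first displayed limit in the statement. Next I would handle the normalizer $Z$. In the regime where the model is perfectly learned, $\tilde{L}(x_1) \to 0$ holds for (almost) every $x_1$, so the integrand $q(x_1)\exp(-\gamma \tilde{L}(x_1))$ converges pointwise to $q(x_1)$. Passing the limit through the integral then yields $Z \to \int q(x_1)\,dx_1 = 1$, since $q$ integrates to one. Combining the two limits gives $p_{\text{new}}(x_1) \to q(x_1)$, so the induced distribution coincides with the sampling distribution $q$, as claimed.

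The one step requiring care---and the only real obstacle, though a mild one---is the interchange of the limit with the integral defining $Z$. I would justify it by dominated convergence: in the flow-matching setting $\tilde{L}$ is a squared-norm loss, hence $\tilde{L} \geq 0$ and $\gamma > 0$, so the integrand is sandwiched as $0 \le q(x_1)\exp(-\gamma \tilde{L}(x_1)) \le q(x_1)$, and the dominating function $q$ is integrable by construction. (More generally, any uniform bound $|\tilde{L}| \le M$ supplies the dominating function $q(x_1)e^{\gamma M}$.) Once this domination is in place the remaining computations are purely routine, so I would state the nonnegativity of $\tilde{L}$ explicitly as the hypothesis that makes $Z \to 1$ rigorous rather than merely formal.
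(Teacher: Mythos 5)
Your proof is correct and follows the same route the paper intends: the paper states this corollary without any proof, treating it as immediate from the closed form $p_{\text{new}}(x_1) \propto q(x_1)\exp(-\gamma \tilde{L}(x_1))$ in Theorem \ref{theorem: Induced Data Distribution} together with continuity of the exponential at zero. Your additional dominated-convergence argument showing $Z \to 1$ (using $0 \le q(x_1)\exp(-\gamma \tilde{L}(x_1)) \le q(x_1)$, valid since $\tilde{L} \ge 0$ and $\gamma > 0$) is extra rigor the paper omits; it is not strictly needed for the stated proportionality claim, but it correctly upgrades the conclusion to genuine equality of the normalized densities in the limit.
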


\begin{Corollary}[Weighted Prior Distribution]
\label{corollary: Weighted Prior Distribution}
Considering a weighed prior sampling distribution $p_{eff}(x_1) = w(x_1) q(x_1)$, given the loss function: 
\begin{equation}
    \mathcal{L}_{\text {total }}(\theta)=\mathbb{E}_{x_1 \sim p_{eff}\left(x_1\right)}\left[\tilde{L}\left(x_1\right)\right]
\end{equation}
where $\tilde{L}\left(x_1\right)$ is any per-sample loss function, the induced data distribution over $x_1$ is:

\begin{equation}
\begin{aligned}
        p_{\text {new }}\left(x_1\right) 
 & \propto p_{eff}\left(x_1\right) \exp \left(-\gamma \tilde{L}\left(x_1\right)\right)\\
 & \propto w(x_1)q\left(x_1\right) \exp \left(-\gamma \tilde{L}\left(x_1\right)\right) \\
\end{aligned}
\end{equation}

with $\gamma>0$ being a positive constant.
\end{Corollary}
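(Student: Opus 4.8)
The plan is to recognize Corollary~\ref{corollary: Weighted Prior Distribution} as an immediate specialization of Theorem~\ref{theorem: Induced Data Distribution}. The crucial observation is that nothing in the derivation of Theorem~\ref{theorem: Induced Data Distribution} used the fact that the sampling measure was the \emph{canonical} data distribution $q(x_1)$; each of the three arguments (RL, energy-function, and Bayesian) treats the sampling law purely as a base measure / prior that multiplies the exponential tilt $\exp(-\gamma\tilde{L}(x_1))$. Accordingly, I would carry the entire argument over verbatim under the single substitution $q(x_1)\mapsto p_{eff}(x_1)$.

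Concretely, first I would rewrite the total loss as $\mathcal{L}_{\text{total}}(\theta)=\mathbb{E}_{x_1\sim p_{eff}(x_1)}[\tilde{L}(x_1)]=\int p_{eff}(x_1)\tilde{L}(x_1)\,dx_1$, noting that $p_{eff}$ occupies exactly the structural position held by $q$ in the hypothesis of Theorem~\ref{theorem: Induced Data Distribution}. Invoking the energy-function / Bayesian-prior form of that theorem --- the form that explicitly admits a non-uniform base measure --- yields $p_{\text{new}}(x_1)\propto p_{eff}(x_1)\exp(-\gamma\tilde{L}(x_1))$, which is the first displayed line of the claim. Substituting the definition $p_{eff}(x_1)=w(x_1)q(x_1)$ then gives the second line, $p_{\text{new}}(x_1)\propto w(x_1)q(x_1)\exp(-\gamma\tilde{L}(x_1))$.

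The only point requiring care --- and the nearest thing to an obstacle --- is normalizability. Because the statement is phrased up to proportionality ($\propto$), the unnormalized weighting $p_{eff}=wq$ suffices for the formal manipulation; but to guarantee that a genuine probability density exists one needs $\int w(x_1)q(x_1)\exp(-\gamma\tilde{L}(x_1))\,dx_1<\infty$. I would close this by observing that it follows from the standing hypothesis $Z=\int w(x_1)q(x_1)\,dx_1<\infty$ together with $\exp(-\gamma\tilde{L}(x_1))\le 1$ for $\tilde{L}\ge 0$ and $\gamma>0$, so the tilted integral is bounded above by $Z$ and the normalization constant is well defined. With this the proof concludes.
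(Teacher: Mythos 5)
Your proposal is correct and follows essentially the same route as the paper: the paper's proof is exactly the one-line substitution of $p_{eff}(x_1)=w(x_1)q(x_1)$ for $q(x_1)$ in Theorem~\ref{theorem: Induced Data Distribution}, followed by expanding $p_{eff}$ to obtain both displayed proportionalities. Your added remark on normalizability (bounding the tilted integral by $Z$ using $\exp(-\gamma\tilde{L}(x_1))\le 1$ for $\tilde{L}\ge 0$) is a small refinement the paper omits, but it does not change the argument.
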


\begin{proof}[Proof of Corollary \ref{corollary: Weighted Prior Distribution}]
    Substituting $q(x_1) = p_{eff}(x_1)$ into the Theorem \ref{theorem: Induced Data Distribution}, we can have the induced data distribution:
    \begin{equation}
\begin{aligned}
        p_{\text {new }}\left(x_1\right) 
 & \propto p_{eff}\left(x_1\right) \exp \left(-\gamma \tilde{L}\left(x_1\right)\right)\\
 & \propto w(x_1)q\left(x_1\right) \exp \left(-\gamma \tilde{L}\left(x_1\right)\right) \\
\end{aligned}
\end{equation}
then, the proof concludes.
\end{proof}

\begin{Corollary}[Constant Loss]
    If $\tilde{L}\left(x_1\right)=c$ for all $x_1$, then:

\begin{equation}
    p_{\text {new }}\left(x_1\right) \propto q\left(x_1\right) \exp (-\gamma c)=q\left(x_1\right) \times \text { constant. }
\end{equation}

Thus, $p_{\text {new }}\left(x_1\right)$ is proportional to $q\left(x_1\right)$, and the induced distribution remains unchanged.
\end{Corollary}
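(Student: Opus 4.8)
The plan is to obtain this as an immediate specialization of Theorem~\ref{theorem: Induced Data Distribution}, which already establishes that the loss $\mathcal{L}_{\text{total}}(\theta) = \mathbb{E}_{x_1 \sim q(x_1)}[\tilde{L}(x_1)]$ induces the data distribution $p_{\text{new}}(x_1) \propto q(x_1)\exp(-\gamma \tilde{L}(x_1))$. Since the corollary merely fixes $\tilde{L}(x_1) = c$ for every $x_1 \in \mathcal{X}$, no new machinery is required; the entire content reduces to a substitution followed by an observation about the partition function. First I would substitute the constant per-sample loss into the master formula to get $p_{\text{new}}(x_1) \propto q(x_1)\exp(-\gamma c)$.

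The key step is then to note that $\exp(-\gamma c)$ carries no dependence on $x_1$: it is a strictly positive scalar fixed entirely by the constants $\gamma > 0$ and $c$. Hence it can be pulled out of the $x_1$-dependence and absorbed into the proportionality constant. To make the conclusion an equality rather than a formal proportionality, I would carry the normalization through explicitly, computing $Z = \int_{\mathcal{X}} q(x_1)\exp(-\gamma c)\,dx_1 = \exp(-\gamma c)\int_{\mathcal{X}} q(x_1)\,dx_1 = \exp(-\gamma c)$, where the last equality uses that $q$ is a normalized density. Dividing by $Z$ gives $p_{\text{new}}(x_1) = q(x_1)\exp(-\gamma c)/\exp(-\gamma c) = q(x_1)$, so the induced distribution coincides exactly with the base sampling distribution $q(x_1)$. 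The interpretation is that a loss that is flat across the data space exerts no reweighting pressure, so the model simply reproduces the distribution it was trained on.

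The main (and essentially only) obstacle here is conceptual rather than technical: one must distinguish the unnormalized proportionality statement from the normalized equality, since the constant factor $\exp(-\gamma c)$ is visible in the $\propto$ form but silently cancels once normalization is imposed. The honest way to present the result is therefore to carry the normalization constant through the computation explicitly and confirm the cancellation, which closes the argument; everything else is a direct application of the already-proved Theorem~\ref{theorem: Induced Data Distribution}.
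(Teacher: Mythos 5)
Your proposal is correct and matches the paper's intent exactly: the paper states this corollary without an explicit proof, treating it as an immediate substitution of $\tilde{L}(x_1)=c$ into Theorem~\ref{theorem: Induced Data Distribution}, which is precisely what you do. Your extra step of computing $Z=\exp(-\gamma c)$ to upgrade the proportionality to the equality $p_{\text{new}}(x_1)=q(x_1)$ is a harmless (and slightly more careful) elaboration of the same argument.
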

\clearpage

\subsection{Reward Weighted Conditional  Flow Matching}
\label{app sec: offline RW CFM}

We start from a conditional flow matching setting, where a model parameterized by $\theta$ defines a vector field $v_{\theta}(t,x)$ that transports samples from a base distribution to a target distribution over time t $\in[0,1]$. The standard CFM loss function is as follows:
\begin{equation}
        \mathcal{L}_{\text{CFM}}(\theta)=\mathbb{E}_{t \sim \mathcal{U}(0,1), q\left(x_1\right), p_t\left(x \mid x_1\right)}\left\|v_{\theta}(t,x)-u_t\left(x \mid x_1\right)\right\|^2,
\end{equation}
which aims to align $v_{\theta}(t,x)$ with the true conditional vector field $u_t(x|x_1)$ by minimizing the expected squared error over the data distribution $q(x_1)$.

In practical applications, it is often desirable to focus the model's learning capacity on specific regions of the data space. This can be achieved by incorporating a weighting function $w\left(x_1\right)$ into the loss function, effectively reweighting the data distribution. However, the theoretical implications of this modification on the learned distribution have not been fully studied.

In this part, we rigorously analyze the impact of using a weighted regression loss in CFM training. We prove that incorporating a weighting function into the CFM loss leads the model to learn a new distribution over $x_1$, given by $p^{\text {new }}\left(x_1\right)=\frac{w\left(x_1\right) q\left(x_1\right)}{Z}$, where $q\left(x_1\right)$ is the original data distribution and $Z$ is a normalization constant. Our analysis provides theoretical support for the use of weighting functions to control the focus of generative models, with implications for target data generation, RLHF \citep{instructed_rlhf} and RL fine-tuning \citep{ddpo,dpo}.

Let $\mathcal{X} \subseteq \mathbb{R}^d$ denote the data space, and let $q\left(x_1\right)$ be the probability density function (pdf) of the random variable $x_1 \in \mathcal{X}$. We consider a continuous-time flow over $t \in[0,1]$, where $t$ is sampled from the uniform distribution $\mathcal{U}[0,1]$. The conditional pdf of $x$ at time $t$ given $x_1$ is denoted by $p_t\left(x \mid x_1\right)$.

The true conditional vector field is $u_t\left(x \mid x_1\right)$, and the model's vector field is $v_t(x ; \theta)$, where $\theta$ represents the model parameters. The goal is to learn $v_t(x ; \theta)$ such that it approximates $u_t(x \mid$ $\left.x_1\right)$

We introduce a weighting function $w: \mathcal{X} \rightarrow[0, \infty)$, which is measurable and integrable with respect to $q\left(x_1\right)$.

Then, we can have the following Theorem (For  ease of reading, we rewrite the Theorem \ref{theorem: rwr-cfm} to be proved in this section as follows):
\begin{nitheorem}
Let $w: \mathcal{X} \rightarrow[0, \infty)$ be a measurable weighting function such that $0<Z=$ $\int_{\mathcal{X}} w\left(x_1\right) q\left(x_1\right) d x_1<\infty$. Define the reward weighted CFM loss function:
\begin{equation}
\label{equ: origin weighted rw-cfm}
    \mathcal{L}_{\mathrm{RW-CFM}}(\theta)=\mathbb{E}_{t \sim \mathcal{U}[0,1], x_1 \sim q\left(x_1\right), x \sim p_t\left(x \mid x_1\right)}\left[w\left(x_1\right)\left\|v_t(x;\theta)-u_t\left(x \mid x_1\right)\right\|^2\right]
\end{equation}
where $w\left(x_1\right)$ is a weighting function, $q\left(x_1\right)$ is the original data distribution, $p_t\left(x \mid x_1\right)$ is the conditional distribution at time $t, v_t(x;\theta)$ is the model's vector field, and $u_t\left(x \mid x_1\right)$ is the true vector field conditioned on $x_1$. Then, training the model using this weighted loss function leads the model to learn a new data distribution:

\begin{equation}
    p^{\text {new }}\left(x_1\right)=\frac{w\left(x_1\right) q\left(x_1\right)}{Z}
\end{equation}

where $Z=\int w\left(x_1\right) q\left(x_1\right) d x_1$ is the normalization constant.
\end{nitheorem}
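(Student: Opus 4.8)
The plan is to show that the reward weighting does nothing more than rescale the target distribution of an ordinary CFM problem, after which the standard flow-matching correspondence between the loss minimizer and the induced marginal path delivers the claim. First I would expand the expectation in \eqref{equ: origin weighted rw-cfm} as an explicit integral over $t$, $x_1$, and $x$, and pull the normalization constant out of the weight. Writing $p^{\text{new}}(x_1)=w(x_1)q(x_1)/Z$, which is a genuine probability density precisely because the hypothesis guarantees $0<Z<\infty$, this yields
\[
\mathcal{L}_{\mathrm{RW\text{-}CFM}}(\theta)=Z\,\mathbb{E}_{t\sim\mathcal{U}[0,1],\,x_1\sim p^{\text{new}},\,x\sim p_t(x\mid x_1)}\bigl[\|v_t(x;\theta)-u_t(x\mid x_1)\|^2\bigr].
\]
Since $Z$ is independent of $\theta$, the minimizer of $\mathcal{L}_{\mathrm{RW\text{-}CFM}}$ coincides with that of the plain CFM loss whose data distribution is $p^{\text{new}}$ rather than $q$. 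The key observation is that the reweighting alters only the mixing measure over the conditioning variable $x_1$: the conditional paths $p_t(x\mid x_1)$ and the conditional fields $u_t(x\mid x_1)$ are left untouched, so the problem is structurally identical to a standard CFM fit with a different endpoint distribution.

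To make the correspondence rigorous rather than invoking it as a black box, I would minimize the quadratic integrand pointwise in the value $v=v_t(x)$ for each fixed $(t,x)$, obtaining the weighted conditional expectation
\[
v_t^\star(x)=\frac{\int u_t(x\mid x_1)\,w(x_1)\,p_t(x\mid x_1)\,q(x_1)\,dx_1}{\int w(x_1)\,p_t(x\mid x_1)\,q(x_1)\,dx_1}.
\]
Defining the weighted marginal $p_t^{\text{new}}(x)=\tfrac1Z\int w(x_1)\,p_t(x\mid x_1)\,q(x_1)\,dx_1$, I would then integrate the per-sample continuity equation $\partial_t p_t(x\mid x_1)+\nabla\!\cdot\!\bigl(p_t(x\mid x_1)u_t(x\mid x_1)\bigr)=0$ against $w(x_1)q(x_1)/Z$. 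The flux term collapses exactly to $v_t^\star(x)\,p_t^{\text{new}}(x)$ by the definition of $v_t^\star$, giving $\partial_t p_t^{\text{new}}+\nabla\!\cdot\!\bigl(p_t^{\text{new}}v_t^\star\bigr)=0$. Thus the optimal field transports the base distribution along the weighted marginal path, and evaluating at the endpoint $t=1$, where $p_1(x\mid x_1)$ concentrates on $x_1$, yields $p_1^{\text{new}}(x)=w(x)q(x)/Z=p^{\text{new}}(x)$, as claimed.

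The main obstacle is technical regularity rather than conceptual: justifying the interchange of $\partial_t$ and $\nabla\cdot$ with the integral over $x_1$ requires integrability and smoothness assumptions on $w$, $q$, and the conditional paths, and the endpoint step must handle the degenerate limit $p_1(\cdot\mid x_1)\to\delta(\,\cdot-x_1)$ in a distributional sense (or via a limiting argument over nearly-deterministic conditional paths). A secondary caveat, which I would state explicitly as an idealization, is the assumption that the parameterized field $v_t(\cdot;\theta)$ is expressive enough to realize the minimizer $v_t^\star$ exactly; under that assumption the pointwise minimization is valid and the induced distribution is exactly $p^{\text{new}}$.
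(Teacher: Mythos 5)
Your proposal is correct, and its opening move is exactly the paper's: expand the expectation, absorb $w(x_1)q(x_1)/Z$ into a new density $p^{\text{new}}$, and observe that the constant $Z$ does not affect the minimizer, so the weighted loss is a standard CFM loss with endpoint distribution $p^{\text{new}}$ in place of $q$. Where you genuinely depart from the paper is in what happens after this reduction. The paper essentially stops there: it asserts that a model trained with CFM under the reweighted distribution "effectively learns to generate data according to $p^{\text{new}}$," backing this up with its general induced-data-distribution theorem and heuristic energy-function / RL / Bayesian remarks rather than a flow-specific argument. You instead prove the correspondence: pointwise quadratic minimization identifies the optimal field $v_t^\star$ as a $w$-weighted conditional expectation of $u_t(x\mid x_1)$, and integrating the per-sample continuity equation against $w(x_1)q(x_1)/Z$ shows the flux term collapses to $v_t^\star\,p_t^{\text{new}}$, so the optimal field transports mass along the weighted marginal path whose endpoint is $w\,q/Z$. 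This is the marginal-vector-field argument of the original flow-matching literature adapted to the weighted setting. What each approach buys: the paper's route is shorter and reuses machinery that it also needs for the later W2-regularized and online theorems, at the cost of leaving the CFM-learns-its-training-distribution step as a black box; your route is self-contained and flow-specific, and it surfaces exactly the regularity conditions (interchange of $\partial_t$ and $\nabla\cdot$ with the $x_1$-integral, the degenerate limit $p_1(\cdot\mid x_1)\to\delta(\cdot-x_1)$, and expressiveness of $v_t(\cdot;\theta)$) that the paper's statement implicitly requires but never names. Both arguments share the idealization that training reaches the exact minimizer.
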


\begin{proof}[Proof of Theorem \ref{theorem: rwr-cfm}]

We first aim to show that incorporating the weighting function $w\left(x_1\right)$ into the loss function effectively changes the distribution over $x_1$ from $q\left(x_1\right)$ to $p^{\text {new }}\left(x_1\right)$.

\begin{Lemma}
\label{lemma: rwr new distribution}
     The weighting function modifies the data distribution  over $x_1$ from $q(x_1)$ to:
     \begin{equation}
         p^{\text {new }}\left(x_1\right)=\frac{w\left(x_1\right) q\left(x_1\right)}{Z}
     \end{equation}
\end{Lemma}

\begin{proof}[Proof of Lemma \ref{lemma: rwr new distribution}]
The weighted loss function can be explicitly written as:

\begin{equation}
    \mathcal{L}_{\mathrm{RW-CFM}}(\theta)=\int_0^1 \int_{\mathcal{X}} w\left(x_1\right) q\left(x_1\right) \int_{\mathcal{X}}\left\|v_t(x ; \theta)-u_t\left(x \mid x_1\right)\right\|^2 p_t\left(x \mid x_1\right) d x d x_1 d t
\end{equation}

Then, we can define the reweighted distribution $p^{\text {new }}\left(x_1\right)$ as:

\begin{equation}
    p^{\text {new }}\left(x_1\right)=\frac{w\left(x_1\right) q\left(x_1\right)}{Z}, \quad \text { where } \quad Z=\int_{\mathcal{X}} w\left(x_1\right) q\left(x_1\right) d x_1
\end{equation}

Since $w\left(x_1\right) \geq 0$ and $Z<\infty, p^{\text {new }}\left(x_1\right)$ is a valid pdf over $\mathcal{X}$.

Substituting $p^{\text {new }}\left(x_1\right)$ into the loss function, we have:

\begin{equation}
    \mathcal{L}_{\mathrm{RW-CFM}}(\theta)=Z \int_0^1 \int_{\mathcal{X}} p^{\mathrm{new}}\left(x_1\right) \int_{\mathcal{X}}\left\|v_t(x ; \theta)-u_t\left(x \mid x_1\right)\right\|^2 p_t\left(x \mid x_1\right) d x d x_1 d t
\end{equation}

This can be expressed as an expectation:

\begin{equation}
    \mathcal{L}_{\mathrm{RW-CFM}}(\theta)=Z \mathbb{E}_{t \sim \mathcal{U}[0,1], x_1 \sim p^{\mathrm{new}}\left(x_1\right), x \sim p_t\left(x \mid x_1\right)}\left[\left\|v_t(x ; \theta)-u_t\left(x \mid x_1\right)\right\|^2\right]
\end{equation}

The gradient of $\mathcal{L}_{\mathrm{RW-CFM}}(\theta)$ is:

\begin{equation}
\label{equ: p_new of rw-cfm}
    \nabla_\theta \mathcal{L}_{\mathrm{RW-CFM}}(\theta)=Z \mathbb{E}_{t \sim \mathcal{U}[0,1], x_1 \sim p^{\mathrm{new}}\left(x_1\right), x \sim p_t\left(x \mid x_1\right)}\left[\nabla_\theta\left\|v_t(x ; \theta)-u_t\left(x \mid x_1\right)\right\|^2\right]
\end{equation}

Since $Z$ is constant with respect to $\theta$, it does not affect the optimization process. Therefore, minimizing $\mathcal{L}_{\mathrm{RW-CFM}}(\theta)$ is equivalent to minimizing the expected loss under the distribution $p^{\text {new }}\left(x_1\right)$
\begin{Remark}
    According to  Corollary \ref{corollary: Weighted Prior Distribution}, set $p_{eff}(x_1)=\frac{w\left(x_1\right) q\left(x_1\right)}{Z}$, we have:
    \begin{equation}
        \begin{aligned} p_{\text {new }}\left(x_1\right) & \propto p_{e f f}\left(x_1\right) \exp \left(-\gamma \tilde{L}\left(x_1\right)\right) \\ & \propto w\left(x_1\right) q\left(x_1\right) \exp \left(-\gamma \tilde{L}\left(x_1\right)\right)\end{aligned}
    \end{equation}
    When the model fits the data well (i.e., attain convergence), and $\tilde{L}\left(x_1\right) \to 0$, then according to Corollary \ref{corollary: Limiting Behavior of Well Learned Model}, we have:
    \begin{equation}
        \begin{aligned} p_{\text {new }}\left(x_1\right) & \propto w\left(x_1\right) q\left(x_1\right) \exp \left(-\gamma \tilde{L}\left(x_1\right)\right) \\
        & \propto w\left(x_1\right) q\left(x_1\right)
        \end{aligned}
    \end{equation}
\end{Remark}
\begin{Remark}
    During optimization, the model minimizes the expected squared difference between the model's vector field and the true vector field, but crucially, this expectation is now taken with respect to the reweighted distribution $p^{\text{new}}(x_1)$. As a result, the model prioritizes accurately modeling the flow dynamics in regions of higher weight, effectively learning to generate data according to $p^{\text{new}}(x_1)$.
\end{Remark}

The proof concludes.
\end{proof}

Based on Lemma \ref{lemma: rwr new distribution}, it's obvious that Theorem \ref{theorem: rwr-cfm} holds.
\end{proof}
Based on \eqref{equ: p_new of rw-cfm}, the reward weighted conditional flow matching loss in \eqref{equ: origin weighted rw-cfm} actually learn a weighted distribution $p^{new}(x_1)$:
\begin{equation}
    p^{\text {new }}\left(x_1\right)=\frac{w\left(x_1\right) q\left(x_1\right)}{Z}
\end{equation}

\begin{Remark}[Normalization Constant $Z$ ]
$Z$ is a  finite normalization constant ensuring $p^{\text {new }}\left(x_1\right)$ is a valid probability distribution (i.e., integrates to 1). In practice, during training, the normalization constant can be ignored because it does not affect the optimization process since it is constant with respect to $\theta$.
\end{Remark}

\begin{Lemma}[Finite Normalization Constant $Z$]
\label{lemma: finite Z}
    The normalization constant $Z$ is finite if $w\left(x_1\right)$ is integrable with respect to $q\left(x_1\right)$.
\end{Lemma}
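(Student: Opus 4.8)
The plan is to observe that this lemma is essentially an unpacking of the definition of integrability, so the argument will be short and the only real care lies in stating the measure-theoretic setup precisely. Recall from the preceding setup that $w : \mathcal{X} \to [0,\infty)$ is measurable and that ``$w$ is integrable with respect to $q$'' means precisely that $w \in L^1(q\,dx_1)$, i.e.
\begin{equation}
\int_{\mathcal{X}} |w(x_1)|\, q(x_1)\, dx_1 < \infty .
\end{equation}
So the whole task reduces to connecting this hypothesis to the integral defining $Z$.

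First I would note that since $q$ is a probability density it is non-negative and measurable, so the product $w(x_1) q(x_1)$ is a non-negative measurable function on $\mathcal{X}$. Hence the integral defining $Z$ is well-defined as an element of $[0,+\infty]$ by the standard construction of the Lebesgue integral for non-negative measurable functions, and no sign or cancellation issues can arise.

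The key step is then to use the non-negativity hypothesis $w(x_1) \ge 0$ to discard the absolute value. Since $|w(x_1)| = w(x_1)$ pointwise, we obtain
\begin{equation}
Z = \int_{\mathcal{X}} w(x_1)\, q(x_1)\, dx_1 = \int_{\mathcal{X}} |w(x_1)|\, q(x_1)\, dx_1 < \infty,
\end{equation}
where the final inequality is exactly the integrability assumption, and the argument concludes.

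I do not expect any genuine obstacle here: the result is effectively a tautology once ``integrable with respect to $q$'' is read as membership in $L^1(q\,dx_1)$, and the non-negativity of $w$ eliminates the only place where a careless reading could have caused a mismatch between $|w|$ and the raw integrand. The one point worth stating explicitly, for completeness, is that finiteness alone does not guarantee $Z > 0$; that lower bound is supplied separately among the hypotheses of Theorem \ref{theorem: rwr-cfm} and is what is actually needed to make $p^{\text{new}}$ a bona fide probability density. The present lemma addresses only the upper bound $Z < \infty$.
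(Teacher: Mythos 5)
Your proof is correct and takes essentially the same route as the paper, which likewise treats the lemma as a direct consequence of the integrability hypothesis (the paper's own proof is a one-line restatement that $Z$ is finite when $w$ is integrable and introduces no singularities). Your version is in fact the more careful rendering of that argument, since you make explicit that integrability with respect to $q$ means $w \in L^1(q\,dx_1)$ and that non-negativity of $w$ removes the absolute value; your closing remark that $Z>0$ is a separate hypothesis of Theorem \ref{theorem: rwr-cfm} is also accurate.
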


\begin{proof}[Proof of Lemma \ref{lemma: finite Z}]
    Since $w\left(x_1\right)$ is a weighting function and $q\left(x_1\right)$ is a probability density function, the integral:
\begin{equation}
    Z=\int_{\mathcal{X}} w\left(x_1\right) q\left(x_1\right) d x_1
\end{equation}
is finite if $w\left(x_1\right)$ does not introduce singularities and is integrable over $\mathcal{X}$.
\end{proof}

\begin{Remark}[Data Traversal]
In fact, to optimize the reward weighted CFM loss in \eqref{equ: origin weighted rw-cfm} requires the agent to nearly traverse almost possible samples $x_1$, which may also similar to the exploration problem in RL, namely how to explore the data space $\mathcal{X}$. Therefore, maintaining the ability to continuously explore the data space is important for RL method \citep{goexplore,sac}. 
\end{Remark}

\clearpage

\subsection{Online Reward Weighted Conditional  Flow Matching}
\label{app sec: online RW-CFM}

In App. \ref{app sec: offline RW CFM} we have discussed offline version of Reward Weighted Conditional Flow Matching, which is learned by optimizing the following loss:
\begin{equation}
    \mathcal{L}_{\mathrm{RW-CFM}}(\theta)=\mathbb{E}_{t \sim \mathcal{U}[0,1], x_1 \sim q\left(x_1\right), x \sim p_t\left(x \mid x_1\right)}\left[w\left(x_1\right)\left\|v_t(x;\theta)-u_t\left(x \mid x_1\right)\right\|^2\right],
\end{equation}
wherein $q\left(x_1\right)$ is the data distribution approximated by the training dataset or sampled from pre-trained model, $p_t\left(x \mid x_1\right)$ is the conditional pdf of $x$ at time $t$ given $x_1$ \citep{otcfm}. 

We consider a N epoch online fine-tuning scenario, where our vector field model $v_{\theta_{\text{ft}}}$ is initialized by a pre-trained model $v_{\theta_{\text{ref}}}$ that can capture the origin data distribution $q(x_1)$, namely the data distribution $p^{ref}(x_1)$ induced by pre-trained model $v_{\theta_{\text{ref}}}$ can meet $p^{\theta_{\text{ref}}}(x_1) \approx q(x_1)$. At each epoch of training, the agent will sample data from reweighted data distribution $p^{\theta_{\text{ft}}}(x_1)$ induced by $v_{\theta_{\text{ft}}}$, and use the newly sampled data to train the model under the Reward Weighted CFM Loss, then the online reward weighted CFM loss can be writen as:
\begin{equation}
    \mathcal{L}_{\mathrm{ORW-CFM}}(\theta)=\mathbb{E}_{t \sim \mathcal{U}[0,1], x_1 \sim p^{\theta_{\text{ft}}}\left(x_1\right), x \sim p_t\left(x \mid x_1\right)}\left[w\left(x_1\right)\left\|v_t(x;\theta)-u_t\left(x \mid x_1\right)\right\|^2\right],
\end{equation}
wherein $p^{\theta_{\text{ft}}}\left(x_1\right)$ is the re-weighted data distribution given by fine-tuned vector fields $v_{\theta_{\text{ft}}}$. Then, we can have the following (For ease of reading, we rewrite the Theorem \ref{theorem: Online Reward Weighted CFM} as follows):

\begin{nitheorem}[Online Reward Weighted CFM]
    Let $q\left(x_1\right)$ be the initial data distribution, and $w\left(x_1\right)$ be a non-negative weighting function integrable with respect to $q\left(x_1\right)$. Assume that at each epoch $n$, the model $v_\theta$ perfectly learns the distribution implied by the Online Reward-Weighted CFM loss. Then, the data distribution after $N$ epochs is given by:

\begin{equation}
    q_\theta^N\left(x_1\right)=\frac{w\left(x_1\right)^N q\left(x_1\right)}{Z_N}
\end{equation}

where $Z_N=\int_{\mathcal{X}} w\left(x_1\right)^N q\left(x_1\right) d x_1$ is the normalization constant ensuring $q_\theta^N\left(x_1\right)$ is a valid probability distribution.
\end{nitheorem}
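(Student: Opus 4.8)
The plan is to proceed by induction on the epoch count $N$, using Theorem \ref{theorem: rwr-cfm} as the single-step engine. The key observation is that Theorem \ref{theorem: rwr-cfm} never uses the fact that the sampling distribution is the \emph{original} data distribution: it asserts that one step of the reward-weighted CFM loss with sampling distribution $p(x_1)$ and weight $w(x_1)$ drives the (perfectly optimized) model to the tilted density $w(x_1)p(x_1)/\int_{\mathcal{X}} w\,p\,dx_1$. In the online setting the only thing that changes from epoch to epoch is that the sampling distribution at epoch $n$ is the distribution learned at epoch $n-1$, namely $p^{\theta_{\text{ft}}}=q_\theta^{n-1}$. This turns the online update into a clean recursion, and the ``perfect learning'' hypothesis is precisely what licenses invoking Theorem \ref{theorem: rwr-cfm} once per epoch.

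For the base case, $N=0$ gives $q_\theta^0=q$, which matches $w^0 q/Z_0$ since $Z_0=\int_{\mathcal{X}} q\,dx_1=1$; equivalently, $N=1$ is exactly Theorem \ref{theorem: rwr-cfm} with sampling distribution $q$. For the inductive step, suppose $q_\theta^{n-1}(x_1)=w(x_1)^{n-1}q(x_1)/Z_{n-1}$. At epoch $n$ the agent samples from $q_\theta^{n-1}$ and minimizes the RW-CFM loss with weight $w$, so Theorem \ref{theorem: rwr-cfm} applied with $q\leftarrow q_\theta^{n-1}$ yields
$$q_\theta^n(x_1)=\frac{w(x_1)\,q_\theta^{n-1}(x_1)}{Z'},\qquad Z'=\int_{\mathcal{X}} w(x_1)\,q_\theta^{n-1}(x_1)\,dx_1.$$
Substituting the inductive hypothesis collapses the numerator to $w(x_1)^n q(x_1)/Z_{n-1}$, and expanding $Z'$ with the same hypothesis gives $Z'=Z_n/Z_{n-1}$, so the two factors of $Z_{n-1}$ cancel and leave exactly $w(x_1)^n q(x_1)/Z_n$. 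This closes the induction.

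The step that needs care is not the algebra but the bookkeeping of partition functions: I would verify explicitly that they telescope, $Z_{n-1}\cdot Z'=Z_n$, which follows immediately from expanding $Z'$ under the inductive hypothesis. The genuine subtlety is integrability. Lemma \ref{lemma: finite Z} guarantees finiteness of $Z_1$ from integrability of $w$ against $q$, but finiteness of $Z_N$ for all $N$ requires that every power $w^N$ remain integrable against $q$ (automatic when $w$ is bounded, and more generally whenever all the moments $\int_{\mathcal{X}} w^N q\,dx_1$ are finite). The hard part, then, is stating this as a standing assumption so that each $q_\theta^n$ is a bona fide probability density and the recursion is well-defined at every epoch; once that is in place, the inductive cascade of single-step tiltings produces the claimed $q_\theta^N$.
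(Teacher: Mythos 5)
Your proof is correct and follows essentially the same route as the paper's: induction over epochs, invoking Theorem \ref{theorem: rwr-cfm} once per epoch with the sampling distribution replaced by the previous epoch's learned distribution $q_\theta^{n-1}$. Your bookkeeping of the partition functions (verifying the telescoping $Z_{n-1}\cdot Z' = Z_N$) is in fact tidier than the paper's, and your observation that integrability of every power $w^N$ against $q$ should be a standing assumption is a legitimate refinement the paper leaves implicit.
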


\begin{proof}[Proof of Theorem \ref{theorem: Online Reward Weighted CFM}]
We can use induction to present the proof.

\begin{Example}[Base Case $N=1$]
We first consider the basic case, where $N=1$, namely single epoch of learning, which equals to offline RW-CFM. Then, The initial data distribution is $q_\theta^0\left(x_1\right)=q\left(x_1\right)$. 

According to Theorem \ref{theorem: rwr-cfm}, after applying the online reward weighted CFM loss, we can obtain the new distribution is:

\begin{equation}
    q_\theta^1\left(x_1\right)=\frac{w\left(x_1\right) q\left(x_1\right)}{Z_1}
\end{equation}

where $Z_1=\int_{\mathcal{X}} w\left(x_1\right) q\left(x_1\right) d x_1$
\end{Example}

Assume that after $n-1$ epochs, the distribution is:

\begin{equation}
    q_\theta^{n-1}\left(x_1\right)=\frac{w\left(x_1\right)^{n-1} q\left(x_1\right)}{Z_{n-1}}
\end{equation}

with $Z_{n-1}=\int_{\mathcal{X}} w\left(x_1\right)^{n-1} q\left(x_1\right) d x_1$.

According to Theorem \ref{theorem: rwr-cfm}, at epoch $n$, the weighted loss leads to an effective distribution:

\begin{equation}
    q_\theta^n\left(x_1\right)=\frac{w\left(x_1\right) q_\theta^{n-1}\left(x_1\right)}{Z_n}
\end{equation}
where $Z_n=\int_{\mathcal{X}} w\left(x_1\right) q_\theta^{n-1}\left(x_1\right) d x_1$.
Substituting $q_\theta^{n-1}\left(x_1\right)$ :

\begin{equation} q_\theta^n\left(x_1\right)=\frac{w\left(x_1\right)\left(\frac{w\left(x_1\right)^{n-1} q\left(x_1\right)}{Z_{n-1}}\right)}{Z_n}=\frac{w\left(x_1\right)^n q\left(x_1\right)}{Z_{n-1} Z_n}
\end{equation}

Define $\tilde{Z}_n=Z_{n-1} Z_n$, then:

\begin{equation}
    q_\theta^n\left(x_1\right)=\frac{w\left(x_1\right)^n q\left(x_1\right)}{\tilde{Z}_n}
\end{equation}

By induction, $\tilde{Z}_n=\prod_{k=1}^n Z_k$, and we can write:

\begin{equation}
    q_\theta^n\left(x_1\right)=\frac{w\left(x_1\right)^n q\left(x_1\right)}{\prod_{k=1}^n Z_k}
\end{equation}

Then, the proof concludes.

\end{proof}

\begin{Corollary}[Normalization Constant of ORW-CFM]
\label{corollary:N Epoch Normalization Constant}
The normalization constant after $N$ epochs is:
\begin{equation}
    Z_N=\int_{\mathcal{X}} w\left(x_1\right)^N q\left(x_1\right) d x_1
\end{equation}
\end{Corollary}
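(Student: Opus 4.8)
The plan is to read $Z_N$ directly off the normalization requirement on the distribution $q_\theta^N$ supplied by Theorem \ref{theorem: Online Reward Weighted CFM}. That theorem already asserts $q_\theta^N(x_1) = \frac{w(x_1)^N q(x_1)}{Z_N}$ and describes $Z_N$ as the constant ``ensuring $q_\theta^N(x_1)$ is a valid probability distribution'', so the entire content of the corollary is to exhibit the closed form of that constant. No new machinery is needed beyond the density already derived inductively in the theorem's proof.

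First I would invoke the defining property of a probability density, namely $\int_{\mathcal{X}} q_\theta^N(x_1)\, dx_1 = 1$. Substituting the expression for $q_\theta^N$ and using that $Z_N$ does not depend on $x_1$, I can pull $1/Z_N$ outside the integral to get $\frac{1}{Z_N}\int_{\mathcal{X}} w(x_1)^N q(x_1)\, dx_1 = 1$, and rearranging immediately yields the stated formula $Z_N = \int_{\mathcal{X}} w(x_1)^N q(x_1)\, dx_1$. As an internal consistency check I would also revisit the inductive bookkeeping in the proof of Theorem \ref{theorem: Online Reward Weighted CFM}, where the accumulated constant appeared as the telescoping product $\tilde{Z}_N = \prod_{k=1}^N Z_k$, and confirm that this product coincides with the single integral above, since both serve to normalize the same unnormalized density $w(x_1)^N q(x_1)$.

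There is no real obstacle here — the computation is a one-line consequence of normalization. The only point deserving care is \emph{well-definedness}: I must confirm the integral is finite so that $q_\theta^N$ is genuinely a probability distribution. This follows from the integrability hypothesis on $w$, exactly as in Lemma \ref{lemma: finite Z}, provided $w(x_1)^N$ remains integrable against $q(x_1)$; for bounded weighting functions or the exponential form $w(x_1) = \exp(\tau r(x_1))$ used in the experiments this is immediate, and I would state this assumption explicitly so that $0 < Z_N < \infty$ holds for every $N$.
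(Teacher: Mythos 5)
Your proposal is correct and matches the paper's own argument: the paper likewise obtains $Z_N$ by combining the normalization requirement $\int_{\mathcal{X}} q_\theta^N(x_1)\,dx_1 = 1$ with the inductive bookkeeping $\prod_{k=1}^N Z_k$ from the proof of Theorem \ref{theorem: Online Reward Weighted CFM}. Your added remark on finiteness of the integral (in the spirit of Lemma \ref{lemma: finite Z}) is a sensible, if minor, strengthening that the paper leaves implicit.
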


\begin{proof}[Proof of Corollary \ref{corollary:N Epoch Normalization Constant}]
    By induction on the normalization constants $Z_n$ and the fact that $q_\theta^n\left(x_1\right)$ must integrate to 1 , we have:
\begin{equation}
    Z_N=\prod_{k=1}^N Z_k=\int_{\mathcal{X}} w\left(x_1\right)^N q\left(x_1\right) d x_1
\end{equation}
\end{proof}

\begin{Remark}[RL Perspective of ORW-CFM]
 From a RL perspective, this iterative reweighting process can be interpreted as an analog to policy iteration:
 \begin{itemize}
     \item Policy Representation: The data distribution $q_\theta^n\left(x_1\right)$ represents the policy at epoch $n$.
     \item Policy Improvement: The application of the weighting function $w\left(x_1\right) \propto r(x_1)$ corresponds to evaluating and improving the policy based on the rewards.
     \item Value Estimation: The weighting function can be seen as the expected return or value function guiding the policy update.
     \item  Exploration Exploitation Trade-off: Similar to the exploration-exploitation trade-off, there is a balance between focusing on high-reward regions and maintaining diversity.
 \end{itemize}

This process also aligns with the concept of importance sampling in RL, where the probability of selecting certain actions (data points) is adjusted based on their estimated value, and  enhance the high-rewarded probability area and reduce the low-rewarded probability area.
\end{Remark}

\begin{Corollary}[Convergence Behavior]
According to \ref{theorem: Online Reward Weighted CFM}, as $N \rightarrow \infty$, the distribution $q_\theta^N\left(x_1\right)$ may become increasingly concentrated on the subset of $\mathcal{X}$ where $w\left(x_1\right)$ is maximized.
\end{Corollary}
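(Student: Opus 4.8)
The plan is to read the result straight off the closed-form expression $q_\theta^N(x_1) = w(x_1)^N q(x_1)/Z_N$ established in Theorem~\ref{theorem: Online Reward Weighted CFM}, and then run a Laplace-type concentration argument on the amplifying factor $w(x_1)^N$. First I would set $w^\ast = \sup_{x_1 \in \mathcal{X}} w(x_1)$ (assumed finite and attained on a nonempty set $S = \{x_1 : w(x_1) = w^\ast\}$) and rewrite the density in scale-invariant form
\begin{equation}
q_\theta^N(x_1) = \frac{(w(x_1)/w^\ast)^N\, q(x_1)}{\int_{\mathcal{X}} (w(y)/w^\ast)^N\, q(y)\, dy},
\end{equation}
so that the base $w(x_1)/w^\ast \in (0,1]$ equals $1$ exactly on $S$ and is strictly below $1$ elsewhere. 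The point of the normalization is that $w^{\ast N}$ cancels between numerator and denominator, leaving a ratio whose asymptotics are governed only by how $w$ compares to its maximum.

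Next I would quantify concentration by controlling the mass on the ``bad'' region $B_\delta = \{x_1 : w(x_1) \le (1-\delta) w^\ast\}$ for arbitrary $\delta > 0$. On $B_\delta$ the integrand obeys $(w/w^\ast)^N \le (1-\delta)^N$, giving the numerator bound $\int_{B_\delta}(w/w^\ast)^N q \, dx_1 \le (1-\delta)^N$. For the denominator I would isolate the near-maximal region $M_\delta = \{x_1 : w(x_1) > (1-\delta/2) w^\ast\}$ and bound below by $\int_{\mathcal{X}} (w/w^\ast)^N q \, dy \ge (1-\delta/2)^N\, q(M_\delta)$. Dividing yields
\begin{equation}
q_\theta^N(B_\delta) \le \frac{1}{q(M_\delta)} \left(\frac{1-\delta}{1-\delta/2}\right)^N \longrightarrow 0 \quad (N\to\infty),
\end{equation}
since the bracketed ratio is strictly less than one. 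Hence for every $\delta$ the mass escapes into the neighborhood $\{w > (1-\delta)w^\ast\}$ of the maximizing set, which is exactly the concentration claimed by the Corollary; letting $\delta \to 0$ and invoking continuity of $w$ shrinks these neighborhoods onto $S$, and when $S = \{x_1^\ast\}$ is a singleton this upgrades to the Dirac limit $\delta(x_1 - x_1^\ast)$ of Lemma~\ref{lemma: limiting case}.

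The main obstacle I anticipate is the denominator lower bound, i.e.\ guaranteeing $q(M_\delta) > 0$ for every $\delta$. This is the step that secretly encodes the needed hypotheses: it requires the maximizer to lie in the support of the prior $q$ and $w$ to be (say) continuous there, so that a positive-$q$-measure neighborhood of $S$ sits arbitrarily close to $w^\ast$. Without such a regularity condition the normalizer could decay faster than the numerator and the ratio argument would collapse. A secondary subtlety worth stating explicitly is the mode of convergence: since $\delta(x_1-x_1^\ast)$ is not an $L^1$ density, the delta conclusion must be read as weak convergence of measures (convergence in distribution), not pointwise convergence of densities, so I would phrase the limiting statement accordingly rather than as a pointwise identity.
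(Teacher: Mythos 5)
Your proof is correct, and it takes a genuinely different route from the paper's. The paper states this Corollary with no separate argument at all, presenting it as an immediate reading of the closed form $q_\theta^N = w^N q / Z_N$; its only detailed treatment of the limiting behavior is the proof of Lemma \ref{lemma: limiting case}, which argues pointwise on densities: it sets $\epsilon(x_1) = w(x_1)/w(x_1^*)$, notes $\epsilon(x_1)^N \to 0$ off the maximizer, and then approximates the normalizer as $Z_N \approx [w(x_1^*)]^N q(x_1^*)$ to conclude a Dirac limit. That normalizer step is heuristic: for atomless $q$ the integral $\int \epsilon(x_1)^N q(x_1)\,dx_1$ tends to $q(\{w = w^*\}) = 0$, and its true decay rate depends on the local behavior of $w$ near its maximum, so the pointwise identity $q_\theta^N(x_1^*) = 1$ is not rigorous as stated. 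Your superlevel-set argument sidesteps exactly this weakness: by comparing the mass of $B_\delta$ against a lower bound on the normalizer coming from $M_\delta$, you only ever use the ratio $\left((1-\delta)/(1-\delta/2)\right)^N$, so no asymptotics of $Z_N$ itself are required. You also make explicit the hypothesis the paper leaves silent — $q(M_\delta) > 0$ for every $\delta$, i.e., $w^*$ coincides with the essential supremum of $w$ under $q$ (maximizing set inside the support of $q$, plus continuity of $w$ there) — and you correctly identify weak convergence of measures as the right mode for the Dirac conclusion, rather than pointwise convergence of densities. What the paper's route buys is brevity and formula-level intuition; what yours buys is an argument that remains valid for atomless $q$, covers non-singleton maximizing sets, and isolates the regularity conditions under which the Corollary's cautious "may become increasingly concentrated" upgrades to a genuine theorem.
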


\begin{Corollary}[Mode Collapse Risk/Overoptimization]
\label{Corollary: Overoptimization}
 Overemphasis on high-reward regions can lead to a lack of diversity in generated samples, similar to mode collapse in GANs \citep{gan2014} or overoptimization in fine-tuning diffusion models \citep{ddpo}.
\end{Corollary}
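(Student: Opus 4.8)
The plan is to reduce everything to the closed-form iterate $q_\theta^N(x_1) = w(x_1)^N q(x_1)/Z_N$ supplied by Theorem \ref{theorem: Online Reward Weighted CFM} and then run a Laplace-type concentration argument, so that the qualitative ``lack of diversity'' asserted in Corollary \ref{Corollary: Overoptimization} becomes the sharp statement $q_\theta^N \rightharpoonup \delta(\,\cdot\, - x_1^*)$ of Lemma \ref{lemma: limiting case}. Since $w \propto r > 0$, the iterate is a well-defined probability density supported on $\operatorname{supp}(q)$ for every $N$, and with $w = \exp(\tau r)$ it is exactly a Gibbs measure at inverse temperature $N\tau$; the limit $N\to\infty$ is therefore the familiar zero-temperature limit in which all mass migrates to the maximizer of the exponent. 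I would state at the outset the hypotheses the argument genuinely needs: $r$ (hence $w$) continuous, a \emph{unique} global maximizer $x_1^* = \operatorname{arg\,max}_{x_1 \in \operatorname{supp}(q)} r(x_1)$, and $x_1^*$ lying in the support of the base distribution $q$ so that $q$ charges every neighborhood of it.

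The core estimate I would carry out is to show that the mass outside any fixed open neighborhood $U \ni x_1^*$ decays to zero. Writing $w^* = w(x_1^*)$ and factoring $w(x_1)^N = (w^*)^N (w(x_1)/w^*)^N$, continuity and uniqueness of the maximizer give a gap $w(x_1)/w^* \le 1-\eta < 1$ on $U^c$ for some $\eta>0$, so the numerator integral over $U^c$ is bounded by $(w^*)^N (1-\eta)^N$. For the denominator I would lower-bound $Z_N$ by restricting to a small ball $U' \subseteq U$ on which $w(x_1)/w^* \ge 1-\eta/2$ and $q(U')>0$, yielding $Z_N \ge (w^*)^N (1-\eta/2)^N q(U')$. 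Dividing, the two $(w^*)^N$ factors cancel and
\begin{equation}
    \int_{U^c} q_\theta^N(x_1)\,dx_1 \le \frac{1}{q(U')}\left(\frac{1-\eta}{1-\eta/2}\right)^N \xrightarrow{N\to\infty} 0 .
\end{equation}
Because $U$ is arbitrary, testing against any bounded continuous $\phi$ (using that $q_\theta^N(U)\to 1$ and that $\phi$ is nearly $\phi(x_1^*)$ on a small $U$) then gives $\int \phi\, q_\theta^N \to \phi(x_1^*)$, i.e. weak-$*$ convergence to $\delta(x_1 - x_1^*)$; the collapse of the support to a single atom is precisely the diversity loss claimed in Corollary \ref{Corollary: Overoptimization}.

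The main obstacle is conceptual rather than computational: ``$q_\theta^N \to \delta(x_1-x_1^*)$'' cannot hold pointwise, since the limit is a singular measure, so the claim must be read as convergence in distribution and the proof organized around test functions (equivalently, around vanishing tail mass) rather than around densities. Two hypotheses must be flagged carefully, as the conclusion is false without them. First, \emph{reachability}: since multiplying by $w^N$ never enlarges the support, a maximizer of $r$ that $q$ does not already charge is unreachable, and the iterate concentrates instead on the best point actually visible to $q$; this is the formal counterpart of the exploration caveat noted in the Data Traversal remark. Second, \emph{uniqueness of the argmax}: with several global maximizers the limit is a convex combination of the corresponding deltas, with weights fixed by the local behavior of $q$ and $w$ near each, so the clean single-delta conclusion requires the maximizer to be unique (otherwise one must pass to a tie-breaking Laplace expansion). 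I would therefore present the delta limit under these assumptions and remark that the weaker concentration statement of the Convergence Behavior corollary survives even when they fail.
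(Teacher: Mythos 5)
Your proposal is correct in substance, but it takes a genuinely different --- and more rigorous --- route than the paper. The paper never proves this corollary directly: it is stated as a qualitative remark following Theorem \ref{theorem: Online Reward Weighted CFM}, and its formal content is delegated to Lemma \ref{lemma: limiting case}, which the paper proves by a pointwise density computation: writing $q_\theta^N(x_1)=\left[w(x_1^*)\right]^N\epsilon(x_1)^N q(x_1)/Z_N$ with $\epsilon(x_1)=w(x_1)/w(x_1^*)$, arguing $\epsilon(x_1)^N\to 0$ off the maximizer, and then approximating $Z_N\approx\left[w(x_1^*)\right]^N q(x_1^*)$ to conclude $q_\theta^N(x_1^*)\to 1$. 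That normalization step is heuristic rather than exact --- for an atomless $q$ the integral $\int\epsilon(x_1)^N q(x_1)\,dx_1$ tends to $0$, not to $q(x_1^*)$, and a probability density cannot converge pointwise to a Dirac mass --- so the paper's argument buys brevity and intuition at the cost of rigor. Your tail-mass estimate combined with weak-$*$ convergence against bounded continuous test functions is the correct measure-theoretic formulation of the same zero-temperature (Laplace/Gibbs) concentration, and the two hypotheses you flag, reachability ($x_1^*\in\operatorname{supp}(q)$, since reweighting never enlarges the support) and uniqueness of the argmax (otherwise the limit is a mixture of deltas), are genuine gaps that the paper's statement glosses over. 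One caveat in your own argument: continuity and uniqueness of the maximizer alone do not yield the uniform gap $w(x_1)/w^*\le 1-\eta$ on $U^c$ unless $\operatorname{supp}(q)$ is compact (or $r$ decays off its maximizer, e.g.\ is coercive); you should state that hypothesis explicitly, just as the paper implicitly requires it for $Z_N$ to be dominated by a neighborhood of $x_1^*$.
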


In this paper, we introduce two methods to handle the Overoptimization and ease the mode collapse risk in online RW-CFM algorithms. The first is to introduce a W2 distance bound between fine-tuned model $\theta_{\text{ft}}$ and reference model $\theta_{\text{ref}}$, which works similar to the KL bound in previous RL \citep{ppo} and RLHF methods \citep{ddpo,dpo}. The second is to incorporate a  entropy regularization (i.e., $w(x_1) = \exp(\tau * x_1)$) or smoothing the weighting function (i.e., $w(x_1) = \operatorname{Softmax}(\tau * x_1)$)  can mitigate excessive concentration or extremely greedy policy.

\begin{Lemma}[Limiting Case]
    We now consider the limiting cases where $N \rightarrow \infty$:
    
    If $w\left(x_1\right)>0$ for all $x_1 \in \mathcal{X}$ and attains its maximum at $x_1^*$, then as $N \rightarrow \infty$, the distribution $q_\theta^N\left(x_1\right)$ converges to a Dirac delta function centered at $x_1^*$ :

\begin{equation}
    \lim _{N \rightarrow \infty} q_\theta^N\left(x_1\right)=\delta\left(x_1-x_1^*\right)
\end{equation}
\end{Lemma}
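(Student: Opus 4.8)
The plan is to treat this as a standard Laplace-type concentration (Boltzmann concentration) argument, using the explicit form of $q_\theta^N$ supplied by Theorem \ref{theorem: Online Reward Weighted CFM}. Since $q_\theta^N(x_1) = w(x_1)^N q(x_1)/Z_N$ with $Z_N = \int_{\mathcal{X}} w(x_1)^N q(x_1)\, dx_1$, raising $w$ to the $N$-th power exponentially amplifies the ratio between the peak value and any value bounded away from it, so the normalized density should collapse onto the maximizer $x_1^*$. Because a Dirac delta is not a density, the correct target is weak convergence (convergence in distribution): I would prove that for every bounded continuous test function $f$, $\int_{\mathcal{X}} f(x_1) q_\theta^N(x_1)\, dx_1 \to f(x_1^*)$, equivalently that the $q_\theta^N$-mass escapes every fixed neighborhood of $x_1^*$ in the limit.

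First I would normalize by the peak. Writing $M = w(x_1^*) = \sup_{x_1} w(x_1)$ and $g(x_1) = w(x_1)/M \le 1$, the density becomes $q_\theta^N(x_1) = g(x_1)^N q(x_1)/\int g^N q$, which isolates the key fact that $g(x_1) < 1$ away from $x_1^*$ while $g(x_1^*) = 1$. Then, fixing any $\epsilon > 0$, I would split $\mathcal{X}$ into the ball $B_\epsilon(x_1^*)$ and its complement and bound the two contributions separately. On the complement, under the hypothesis that $x_1^*$ is the unique maximizer together with enough regularity (continuity of $w$ and either compactness of $\mathcal{X}$ or sufficient decay of $w$ at infinity), there is $\delta > 0$ with $g(x_1) \le 1-\delta$, giving $\int_{\mathcal{X} \setminus B_\epsilon} g^N q \le (1-\delta)^N$. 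On a smaller ball $B_{\epsilon'}(x_1^*)$ where $g(x_1) \ge 1-\eta$ with $\eta < \delta$ and $q(B_{\epsilon'}) > 0$, I would lower-bound $\int_{B_\epsilon} g^N q \ge (1-\eta)^N q(B_{\epsilon'})$.

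Dividing, the mass outside the neighborhood satisfies
\[
\int_{\mathcal{X}\setminus B_\epsilon} q_\theta^N(x_1)\, dx_1 \le \frac{(1-\delta)^N}{(1-\eta)^N\, q(B_{\epsilon'})} = \frac{1}{q(B_{\epsilon'})}\Bigl(\frac{1-\delta}{1-\eta}\Bigr)^N \xrightarrow{N\to\infty} 0,
\]
since $(1-\delta)/(1-\eta) < 1$. This shows the mass concentrates in $B_\epsilon(x_1^*)$ for every $\epsilon$, which is exactly weak convergence of $q_\theta^N$ to $\delta(x_1 - x_1^*)$.

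I expect the main obstacle to be precisely the regularity and uniqueness assumptions that are left implicit in the statement: the clean separation $g(x_1) \le 1-\delta$ off a neighborhood requires that $x_1^*$ be the unique global maximizer and that $w$ not approach $M$ along a sequence escaping to infinity (handled by compactness of $\mathcal{X}$ or a tail-decay condition), while the lower bound requires $q$ to assign positive mass arbitrarily close to $x_1^*$, i.e. $x_1^* \in \operatorname{supp} q$. If the maximizer is non-unique, the limit is instead a mixture of deltas supported on the argmax set, so I would state these conditions explicitly; with them in hand, the remaining estimates are the routine Laplace bounds above.
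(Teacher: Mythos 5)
Your proof is correct (given the regularity assumptions you flag) and takes a genuinely different---and more rigorous---route than the paper's. The paper argues pointwise: writing $\epsilon(x_1)=w(x_1)/w(x_1^*)$, it notes that $\epsilon(x_1)^N \to 0$ for $x_1 \neq x_1^*$, and then approximates the normalization constant by $Z_N \approx [w(x_1^*)]^N q(x_1^*)$ so that the density tends to $0$ off the maximizer and to $1$ at it. That approximation of $Z_N$ is heuristic (in the continuous setting $\int_{\mathcal{X}} \epsilon(x_1)^N q(x_1)\,dx_1 \to 0$ rather than to $q(x_1^*)$), and pointwise convergence of density values is not, by itself, the right notion of convergence to a Dirac delta. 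Your argument sidesteps both issues: by working with ratios of integrals over a neighborhood $B_\epsilon(x_1^*)$ and its complement, and by playing the bound $g \le 1-\delta$ on the complement against $g \ge 1-\eta$ (with $\eta<\delta$) on a smaller ball, you never need an asymptotic for $Z_N$ itself, and you obtain weak convergence, which is the correct formalization of the lemma's conclusion. What the paper's approach buys is brevity and an explicit (if informal) picture of the limiting density; what yours buys is an actual proof, plus a precise accounting of hypotheses the paper leaves implicit---unique maximizer, continuity of $w$, compactness or tail decay, and $x_1^* \in \operatorname{supp} q$ (the paper silently uses the latter when it divides by $q(x_1^*)$). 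One wording slip to fix: you say the mass ``escapes every fixed neighborhood of $x_1^*$,'' whereas what you actually prove, correctly, is that the mass \emph{outside} every fixed neighborhood of $x_1^*$ vanishes in the limit.
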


\begin{proof}[Proof of Lemma \ref{lemma: limiting case}]
Given that $w\left(x_1\right)>0$ for all $x_1 \in \mathcal{X}$ and attains its maximum at $x_1^*$, we define:
\begin{equation}
    \epsilon\left(x_1\right)=\frac{w\left(x_1\right)}{w\left(x_1^*\right)}
\end{equation}
which means, $\epsilon\left(x_1^*\right)=1$ since $w\left(x_1^*\right) / w\left(x_1^*\right)=1$, and $0 \leq \epsilon\left(x_1\right)<1$ for $x_1 \neq x_1^*$, since $w\left(x_1\right)<w\left(x_1^*\right)$.

Then, we can rewrite $q_{\theta}^N(x_1)$ using $\epsilon(x_1)$ as:
\begin{equation}
q_\theta^N\left(x_1\right)=\frac{\left[w\left(x_1^*\right)\right]^N \epsilon\left(x_1\right)^N q\left(x_1\right)}{Z_N}
\end{equation}

Then for $x_1 \neq x_1^*$, we can have:
$\epsilon(x_1)^N \to 0$ as $N \to \infty$  since $\epsilon(x_1) <1$. Thus $q_{\theta}^N(x_1)\to 0, \forall x_1 \neq x_1^{*}$.

And for $x_1 = x_1^*$, we can have $\epsilon(x_1^*)^N=1$, and $q^N_{\theta}(x_1^*) = \frac{\left[w\left(x_1^*\right)\right]^N q\left(x_1\right)}{Z_N}$.

And we can have the normalization constant as follows:
\begin{equation}
    Z_N=\int_{\mathcal{X}} w\left(x_1\right)^N q\left(x_1\right) d x_1=\left[w\left(x_1^*\right)\right]^N \int_{\mathcal{X}} \epsilon\left(x_1\right)^N q\left(x_1\right) d x_1.
\end{equation}

Similarly, we can obtain $Z_N \approx\left[w\left(x_1^*\right)\right]^N q\left(x_1^*\right)$ as $N \to \infty$.

Then, we can have the limit behavior:

For $x_1 \neq x_1^*:$
\begin{equation}
q_\theta^N\left(x_1\right)=\frac{\left[w\left(x_1^*\right)\right]^N \epsilon\left(x_1\right)^N q\left(x_1\right)}{\left[w\left(x_1^*\right)\right]^N q\left(x_1^*\right)}=\frac{\epsilon\left(x_1\right)^N q\left(x_1\right)}{q\left(x_1^*\right)} \rightarrow 0.
\end{equation}

For $x_1 = x_1^*:$
\begin{equation}
q_\theta^N\left(x_1^*\right)=\frac{\left[w\left(x_1^*\right)\right]^N q\left(x_1^*\right)}{\left[w\left(x_1^*\right)\right]^N q\left(x_1^*\right)}=1
\end{equation}

Then, we can have:
\begin{equation}
    \lim _{N \rightarrow \infty} q_\theta^N\left(x_1\right)=\delta\left(x_1-x_1^*\right).
\end{equation}
\end{proof}

According to Lemma \ref{lemma: limiting case}, iteratively utilize the ORW-CFM loss to fine-tune the flow matching model without bounding the distance between reference model and fine-tuned model may lead us to a greedy policy over $x_1$, which may induce the  overoptimization problem \citep{ddpo}. In this paper we introduce W2 Distance into the ORW-CFM training to bound the distance between reference model and fine-tuned model.
\clearpage

\subsection{Wasserstein-2 Distance for Flow Matching}
\label{app sec: W2 distance}

We consider two flow matching models parameterized by $\theta_1$ and $\theta_2$, inducing time-evolving data distributions $p_t^{\theta_1}(x)$ and $p_t^{\theta_2}(x)$, respectively. The models define vector fields $v^{\theta_1}(t, x)$ and $v^{\theta_2}(t, x)$ that transport an initial distribution $p_0(x)$ to final distributions $p_1^{\theta_1}(x)$ and $p_1^{\theta_2}(x)$ at time $t=1$.

We aim to bound the Wasserstein-2 distance between $p_1^{\theta_1}(x)$ and $p_1^{\theta_2}(x)$ in terms of the difference between $v^{\theta_1}$ and $v^{\theta_2}$.

First, we can write down the Wasserstein-2 Distance as follows (For ease of reading, we rewrite the Theorem \ref{theorem: W2 Bound for Flow Matching} to be proved in this section as follows):

\begin{definition}[Wasserstein-2 Distance]
    Given two probability measures $\mu$ and $\nu$ on $\mathbb{R}^n$, the squared Wasserstein- 2 distance between $\mu$ and $\nu$ is defined as:
\begin{equation}
    W_2^2(\mu, \nu)=\inf _{\gamma \in \Pi(\mu, \nu)} \int_{\mathbb{R}^n \times \mathbb{R}^n}\|x-y\|^2 d \gamma(x, y)
\end{equation}
where $\Pi(\mu, \nu)$ denotes the set of all couplings of $\mu$ and $\nu$; that is, all joint distributions $\gamma$ on $\mathbb{R}^n \times \mathbb{R}^n$ with marginals $\mu$ and $\nu$.
\end{definition}

\begin{nitheorem}[W2 Bound for Flow Matching]
Assume that $v^{\theta_2}(t, x)$ is Lipschitz continuous in $x$ with Lipschitz constant L. Then, the squared Wasserstein-2 distance between the distributions $p_1^{\theta_1}$ and $p_1^{\theta_2}$ induced by the flow matching models at time $t=1$ satisfies:
\begin{equation}
    W_2^2\left(p_1^{\theta_1}, p_1^{\theta_2}\right) \leq e^{2 L} \int_0^1 \mathbb{E}_{x \sim p_s^{\theta_1}}\left[\left\|v^{\theta_1}(s, x)-v^{\theta_2}(s, x)\right\|^2\right] d s
\end{equation}
\end{nitheorem}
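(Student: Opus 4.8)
The plan is to construct an explicit coupling of $p_1^{\theta_1}$ and $p_1^{\theta_2}$ from the shared initial distribution and then control how fast two trajectories starting at the same point drift apart under the two vector fields. Concretely, let $\phi_t^1$ and $\phi_t^2$ be the flow maps solving $\frac{d}{dt}\phi_t^i(x_0) = v^{\theta_i}(t, \phi_t^i(x_0))$ with the common initial condition $\phi_0^i(x_0)=x_0$, so that $p_t^{\theta_i} = (\phi_t^i)_* p_0$. Sampling $x_0 \sim p_0$ and mapping it through both flows produces a joint law of $(\phi_1^1(x_0), \phi_1^2(x_0))$ whose marginals are exactly $p_1^{\theta_1}$ and $p_1^{\theta_2}$; since $W_2^2$ is an infimum over all couplings, this synchronous coupling yields the initial estimate
\[
W_2^2(p_1^{\theta_1}, p_1^{\theta_2}) \le \mathbb{E}_{x_0 \sim p_0}\big[\|\phi_1^1(x_0) - \phi_1^2(x_0)\|^2\big].
\]

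Next I would define the per-trajectory discrepancy $\delta_t(x_0) = \phi_t^1(x_0) - \phi_t^2(x_0)$, note $\delta_0 = 0$, and differentiate in $t$. Writing $\frac{d}{dt}\delta_t = \big[v^{\theta_1}(t,\phi_t^1) - v^{\theta_2}(t,\phi_t^1)\big] + \big[v^{\theta_2}(t,\phi_t^1) - v^{\theta_2}(t,\phi_t^2)\big]$ splits the drift into a field-mismatch term $g_t := v^{\theta_1}(t,\phi_t^1) - v^{\theta_2}(t,\phi_t^1)$ and a term controlled by the Lipschitz hypothesis on $v^{\theta_2}$, which is bounded in norm by $L\|\delta_t\|$. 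Differentiating $\|\delta_t\|$ and applying Cauchy--Schwarz then gives the scalar differential inequality $\frac{d}{dt}\|\delta_t\| \le L\|\delta_t\| + \|g_t\|$.

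I would then apply Grönwall's inequality with integrating factor $e^{-Lt}$ to obtain $\|\delta_1\| \le e^{L}\int_0^1 \|g_s\|\,ds$, square both sides, and use Cauchy--Schwarz in $s$ to pass from $\big(\int_0^1\|g_s\|\,ds\big)^2$ to $\int_0^1 \|g_s\|^2\,ds$, so that $\|\delta_1(x_0)\|^2 \le e^{2L}\int_0^1 \|g_s(x_0)\|^2\,ds$. Finally, taking $\mathbb{E}_{x_0\sim p_0}$, exchanging expectation with the time integral (Tonelli), and invoking the pushforward identity $p_s^{\theta_1} = (\phi_s^1)_* p_0$ to rewrite $\mathbb{E}_{x_0}[\|g_s(x_0)\|^2] = \mathbb{E}_{x\sim p_s^{\theta_1}}[\|v^{\theta_1}(s,x) - v^{\theta_2}(s,x)\|^2]$ yields the claimed bound.

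The main obstacle I expect is largely technical and twofold. First, justifying the differential manipulations rigorously: the map $t\mapsto\|\delta_t\|$ need not be differentiable where $\delta_t$ vanishes, so I would either phrase Grönwall in integral form or bound $\sqrt{\|\delta_t\|^2+\varepsilon}$ and let $\varepsilon\to 0$. Second, making the change-of-variables step precise, which requires well-posedness of the flow $\phi_s^1$ and the pushforward relation $p_s^{\theta_1}=(\phi_s^1)_*p_0$. A point worth flagging is that running Grönwall on $\|\delta_t\|$ rather than on $\|\delta_t\|^2$ is exactly what produces the clean constant $e^{2L}$ instead of a looser factor, so that choice is where the sharp form of the stated bound is secured.
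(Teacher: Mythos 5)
Your proposal is correct and follows essentially the same route as the paper's proof: the synchronous coupling of $p_1^{\theta_1}$ and $p_1^{\theta_2}$ through a shared $x_0\sim p_0$, the add-and-subtract decomposition of $\frac{d}{dt}\Delta_t$ into a field-mismatch term along the $\theta_1$ trajectory plus a Lipschitz-controlled term, Gr\"onwall with integrating factor $e^{-Lt}$, squaring with Jensen/Cauchy--Schwarz, and the pushforward identity $p_s^{\theta_1}=(\phi_s^1)_*p_0$ to land on the stated bound with constant $e^{2L}$. Your technical caveats (non-differentiability of $t\mapsto\|\Delta_t\|$ at zeros, well-posedness of the flow) are points the paper silently glosses over, so flagging them only strengthens the argument.
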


Before providing the full proof of Theorem \ref{theorem: W2 Bound for Flow Matching}, we first want to prove the Differential Flow Discrepancy Lemma:
\begin{Lemma}[Differential Flow Discrepancy Lemma]
\label{lemma: Differential Flow Discrepancy Lemma}
Let $\phi_t^{\theta_i}\left(x_0\right)$ be the flow map induced by $v^{\theta_i}$ starting from $x_0$. The difference $\Delta_t\left(x_0\right)=$ $\phi_t^{\theta_1}\left(x_0\right)-\phi_t^{\theta_2}\left(x_0\right)$ satisfies the differential inequality:
\begin{equation}
    \frac{d}{d t}\left\|\Delta_t\left(x_0\right)\right\| \leq\left\|v^{\theta_1}\left(t, \phi_t^{\theta_1}\left(x_0\right)\right)-v^{\theta_2}\left(t, \phi_t^{\theta_1}\left(x_0\right)\right)\right\|+L\left\|\Delta_t\left(x_0\right)\right\|
\end{equation}
\end{Lemma}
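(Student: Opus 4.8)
The plan is to differentiate the flow discrepancy $\Delta_t(x_0)$ directly along the two ODE trajectories and then control its norm using the Lipschitz hypothesis on $v^{\theta_2}$. Recall that each flow map satisfies $\frac{d}{dt}\phi_t^{\theta_i}(x_0) = v^{\theta_i}(t, \phi_t^{\theta_i}(x_0))$ with $\phi_0^{\theta_i}(x_0) = x_0$. Differentiating the difference immediately gives $\frac{d}{dt}\Delta_t(x_0) = v^{\theta_1}(t, \phi_t^{\theta_1}(x_0)) - v^{\theta_2}(t, \phi_t^{\theta_2}(x_0))$, so the whole task reduces to bounding the norm of this vector in the stated form.

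First I would pass from the derivative of the norm to the norm of the derivative. For $\Delta_t(x_0) \neq 0$ one has $\frac{d}{dt}\|\Delta_t(x_0)\| = \frac{\langle \Delta_t(x_0), \frac{d}{dt}\Delta_t(x_0)\rangle}{\|\Delta_t(x_0)\|} \leq \left\|\frac{d}{dt}\Delta_t(x_0)\right\|$ by the Cauchy--Schwarz inequality, which reduces the claim to bounding $\|v^{\theta_1}(t, \phi_t^{\theta_1}(x_0)) - v^{\theta_2}(t, \phi_t^{\theta_2}(x_0))\|$.

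Next I would insert the cross term $v^{\theta_2}(t, \phi_t^{\theta_1}(x_0))$ and apply the triangle inequality, splitting the bound into a \emph{model} discrepancy and a \emph{trajectory} discrepancy:
\begin{equation}
\left\|\frac{d}{dt}\Delta_t(x_0)\right\| \leq \left\|v^{\theta_1}(t, \phi_t^{\theta_1}(x_0)) - v^{\theta_2}(t, \phi_t^{\theta_1}(x_0))\right\| + \left\|v^{\theta_2}(t, \phi_t^{\theta_1}(x_0)) - v^{\theta_2}(t, \phi_t^{\theta_2}(x_0))\right\|.
\end{equation}
The first term is already the first term of the claimed inequality, both vector fields being evaluated at the \emph{same} point $\phi_t^{\theta_1}(x_0)$. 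For the second term, the Lipschitz continuity of $v^{\theta_2}$ in its spatial argument yields $\|v^{\theta_2}(t, \phi_t^{\theta_1}(x_0)) - v^{\theta_2}(t, \phi_t^{\theta_2}(x_0))\| \leq L\|\phi_t^{\theta_1}(x_0) - \phi_t^{\theta_2}(x_0)\| = L\|\Delta_t(x_0)\|$, which is precisely the second term. Combining the two gives the inequality.

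The main obstacle is the regularity at instants where $\Delta_t(x_0) = 0$, since the Euclidean norm is not differentiable there and the identity $\frac{d}{dt}\|\Delta_t(x_0)\| = \langle \Delta_t(x_0), \frac{d}{dt}\Delta_t(x_0)\rangle / \|\Delta_t(x_0)\|$ breaks down. I would handle this either by interpreting $\frac{d}{dt}$ as the upper-right Dini derivative, for which the Cauchy--Schwarz bound still holds and which is exactly what the subsequent Grönwall step requires, or by the standard smoothing $\|\Delta_t(x_0)\|_\varepsilon = \sqrt{\|\Delta_t(x_0)\|^2 + \varepsilon^2}$, deriving the inequality for $\|\Delta_t(x_0)\|_\varepsilon$ and then letting $\varepsilon \to 0$. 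Everything else is a routine application of the triangle inequality and the Lipschitz bound.
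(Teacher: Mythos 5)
Your proof is correct and takes essentially the same route as the paper's: the same insertion of the cross term $v^{\theta_2}\left(t,\phi_t^{\theta_1}(x_0)\right)$, the triangle inequality splitting the bound into a model discrepancy and a trajectory discrepancy, and the Lipschitz bound $\left\|v^{\theta_2}\left(t,\phi_t^{\theta_1}(x_0)\right)-v^{\theta_2}\left(t,\phi_t^{\theta_2}(x_0)\right)\right\|\le L\left\|\Delta_t(x_0)\right\|$. If anything you are more careful than the paper, which only establishes $\left\|\frac{d}{dt}\Delta_t(x_0)\right\|\le\left\|\delta_v(t)\right\|+L\left\|\Delta_t(x_0)\right\|$ and silently identifies this with the stated bound on $\frac{d}{dt}\left\|\Delta_t(x_0)\right\|$; your Cauchy--Schwarz step and the Dini-derivative (or smoothing) treatment of instants where $\Delta_t(x_0)=0$ supply exactly the justification the paper omits.
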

\begin{proof}[Proof of Lemma \ref{lemma: Differential Flow Discrepancy Lemma}]
Let $\Delta_t\left(x_0\right)=\phi_t^{\theta_1}\left(x_0\right)-\phi_t^{\theta_2}\left(x_0\right)$. Then:

\begin{equation}
    \frac{d}{d t} \Delta_t\left(x_0\right)=v^{\theta_1}\left(t, \phi_t^{\theta_1}\left(x_0\right)\right)-v^{\theta_2}\left(t, \phi_t^{\theta_2}\left(x_0\right)\right)
\end{equation}

Adding and subtracting $v^{\theta_2}\left(t, \phi_t^{\theta_1}\left(x_0\right)\right)$ :

\begin{equation}
    \frac{d}{d t} \Delta_t\left(x_0\right)=\underbrace{v^{\theta_1}\left(t, \phi_t^{\theta_1}\left(x_0\right)\right)-v^{\theta_2}\left(t, \phi_t^{\theta_1}\left(x_0\right)\right)}_{\delta_v(t)}+\underbrace{v^{\theta_2}\left(t, \phi_t^{\theta_1}\left(x_0\right)\right)-v^{\theta_2}\left(t, \phi_t^{\theta_2}\left(x_0\right)\right)}_{\delta_\phi(t)} .
\end{equation}

\begin{Assumption}[Lipschitz Continuity]Assume that $v^{\theta_{2}}(t, x)$ is Lipschitz continuous in $x$ with Lipschitz constant $L_v$ :
\begin{equation}
    \left\|v^{\theta_{2}}(t, x)-v^{\theta_{2}}(t, y)\right\| \leq L_v\|x-y\|
\end{equation}
\end{Assumption}
By the Lipschitz continuity of $v^{\theta_2}$ :

\begin{equation}
    \left\|\delta_\phi(t)\right\| \leq L\left\|\Delta_t\left(x_0\right)\right\|
\end{equation}

Thus:

\begin{equation}
    \left\|\frac{d}{d t} \Delta_t\left(x_0\right)\right\| \leq\left\|\delta_v(t)\right\|+L\left\|\Delta_t\left(x_0\right)\right\|
\end{equation}

Then, the proof of Lemma \ref{lemma: Differential Flow Discrepancy Lemma} concludes.

\end{proof}

\begin{proof}[Proof of Theorem \ref{theorem: W2 Bound for Flow Matching}]
   Using Lemma \ref{lemma: Differential Flow Discrepancy Lemma}, define $f(t)=\left\|\Delta_t\left(x_0\right)\right\|$. Then:
   \begin{equation}
       \frac{d}{d t} f(t) \leq\left\|\delta_v(t)\right\|+L f(t)
   \end{equation}
Multiplying both sides by $e^{-L t}$ :
\begin{equation}
    \frac{d}{d t}\left(e^{-L t} f(t)\right) \leq e^{-L t}\left\|\delta_v(t)\right\|
\end{equation}
Integrating from 0 to $t$ :
\begin{equation}
    e^{-L t} f(t)-f(0) \leq \int_0^t e^{-L s}\left\|\delta_v(s)\right\| d s
\end{equation}

\begin{Assumption}
    Assume that the initial noise distribution $x_0 \sim p(x_0)$ are shared in two flow matching models, as thus  $f(0)=\left\|\Delta_0\left(x_0\right)\right\|=0$
\end{Assumption}

Since $f(0)=0$ :

\begin{equation}
    f(t) \leq e^{L t} \int_0^t e^{-L s}\left\|\delta_v(s)\right\| d s
\end{equation}

At $t=1$ :
\begin{equation}
    f(1) \leq e^L \int_0^1 e^{-L s}\left\|v^{\theta_1}\left(s, \phi_s^{\theta_1}\left(x_0\right)\right)-v^{\theta_2}\left(s, \phi_s^{\theta_1}\left(x_0\right)\right)\right\| d s
\end{equation}

Taking the expectation over $x_0 \sim p_0$ and applying Jensen's inequality:
\begin{equation}
    \mathbb{E}_{x_0}\left[f(1)^2\right] \leq e^{2 L}\left(\int_0^1 \mathbb{E}_{x \sim p_s^{\theta_1}}\left[\left\|v^{\theta_1}(s, x)-v^{\theta_2}(s, x)\right\|^2\right] d s\right)
\end{equation}

\begin{Lemma}
\label{equ: W2 upper bound}
Let $\phi_t^{\theta_i}\left(x_0\right)$ be the flow map induced by $v^{\theta_i}$ starting from $x_0$. The flow matching models define vector fields $v^{\theta_1}(t, x)$ and $v^{\theta_2}(t, x)$ that transport an initial distribution $p_0(x)$ to final distributions $p_1^{\theta_1}(x)$ and $p_1^{\theta_2}(x)$ at time $t=1$. Let $\Delta_t\left(x_0\right)=$ $\phi_t^{\theta_1}\left(x_0\right)-\phi_t^{\theta_2}\left(x_0\right)$ and $f(t)=\left\|\Delta_t\left(x_0\right)\right\|$. Then, the squared Wasserstein-2 distance between the distributions $p_1^{\theta_1}$ and $p_1^{\theta_2}$ induced by the flow matching models at time $t=1$ satisfies:
\begin{equation}
    W_2^2\left(p_1^{\theta_1}, p_1^{\theta_2}\right) \leq \mathbb{E}_{x_0}\left[f(1)^2\right]
\end{equation}
\end{Lemma}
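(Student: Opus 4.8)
The plan is to prove this purely from the variational definition of the squared Wasserstein-2 distance, without any of the Gr\"onwall machinery used in the preceding lemmas. The key observation is that $W_2^2$ is an \emph{infimum} over couplings, so to produce an upper bound it suffices to exhibit a \emph{single} admissible coupling and evaluate its transport cost. The flow maps $\phi_1^{\theta_1}$ and $\phi_1^{\theta_2}$ both act on the \emph{same} initial sample $x_0 \sim p_0$, which immediately suggests using the common source of randomness to correlate the two output distributions.

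Concretely, I would first define the joint map $T : \mathbb{R}^n \to \mathbb{R}^n \times \mathbb{R}^n$ by $T(x_0) = \left(\phi_1^{\theta_1}(x_0),\, \phi_1^{\theta_2}(x_0)\right)$ and let $\gamma = T_* p_0$ be its pushforward under the shared initial distribution $p_0$. I would then verify that $\gamma$ is a valid coupling, i.e. $\gamma \in \Pi(p_1^{\theta_1}, p_1^{\theta_2})$: its first marginal is the pushforward of $p_0$ under $\phi_1^{\theta_1}$, which is exactly $p_1^{\theta_1}$ by the definition of the flow map transporting $p_0$ to $p_1^{\theta_1}$, and symmetrically the second marginal is $p_1^{\theta_2}$. (This step relies on the shared-initialization assumption already invoked in the proof of Theorem \ref{theorem: W2 Bound for Flow Matching}, namely $x_0 \sim p_0$ for both models, and on existence/uniqueness of the ODE flows, which follows from the Lipschitz hypothesis.)

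With this coupling in hand, I would apply the definition of $W_2^2$ directly: since the infimum is taken over all couplings and $\gamma$ is one of them,
\begin{equation}
    W_2^2\left(p_1^{\theta_1}, p_1^{\theta_2}\right) \leq \int_{\mathbb{R}^n \times \mathbb{R}^n} \|x - y\|^2 \, d\gamma(x, y).
\end{equation}
Then a change of variables through $T$ rewrites the right-hand side as an expectation over the source measure,
\begin{equation}
    \int_{\mathbb{R}^n \times \mathbb{R}^n} \|x - y\|^2 \, d\gamma(x, y) = \mathbb{E}_{x_0 \sim p_0}\left[\left\|\phi_1^{\theta_1}(x_0) - \phi_1^{\theta_2}(x_0)\right\|^2\right] = \mathbb{E}_{x_0}\left[f(1)^2\right],
\end{equation}
where the last equality is just the definitions $\Delta_1(x_0) = \phi_1^{\theta_1}(x_0) - \phi_1^{\theta_2}(x_0)$ and $f(1) = \|\Delta_1(x_0)\|$. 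Chaining the two displays yields the claimed bound.

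This argument is essentially a one-line consequence of the coupling definition, so there is no genuinely hard computational step. The only point that requires care is the marginal verification in the second paragraph: one must be explicit that feeding the \emph{same} $x_0$ into both flows is what makes $\gamma$ a legitimate coupling of the two terminal distributions, and that both flows are well-defined maps (so that $T_* p_0$ makes sense) under the standing Lipschitz assumption. Once that is established, the chain with Theorem \ref{theorem: W2 Bound for Flow Matching} closes the overall Wasserstein bound.
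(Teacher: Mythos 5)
Your proposal is correct and is essentially the same argument as the paper's proof: both construct the synchronous coupling by pushing the same initial sample $x_0 \sim p_0$ through both flow maps and then invoke the infimum definition of $W_2^2$ to bound it by the cost of this particular coupling. Your version is slightly more explicit about verifying the marginals of the pushforward $T_* p_0$ and about well-posedness of the flows, but the key idea is identical.
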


According to Lemma \ref{equ: W2 upper bound}, we have:
\begin{equation}
    W_2^2\left(p_1^{\theta_1}, p_1^{\theta_2}\right) \leq \mathbb{E}_{x_0}\left[f(1)^2\right] \leq e^{2 L} \int_0^1 \mathbb{E}_{x \sim p_s^{\theta_1}}\left[\left\|v^{\theta_1}(s, x)-v^{\theta_2}(s, x)\right\|^2\right] d s
\end{equation}

Then, the proof of Theorem \ref{theorem: W2 Bound for Flow Matching} concludes.
\end{proof}

\begin{proof}[Proof of Lemma \ref{equ: W2 upper bound}]
Recall the definition of the Wasserstein-2 distance:
\begin{equation}
    W_2^2\left(p_1^{\theta_1}, p_1^{\theta_2}\right)=\inf _{\gamma \in \Pi\left(p_1^{\theta_1}, p_1^{\theta_2}\right)} \int_{\mathbb{R}^n \times \mathbb{R}^n}\|x-y\|^2 d \gamma(x, y)
\end{equation}
We construct a specific coupling $\gamma$ between $p_1^{\theta_1}$ and $p_1^{\theta_2}$ by mapping the same initial sample $x_0 \sim$ $p_0$ through both flow maps:
\begin{itemize}
    \item $x_1^{\theta_1}=\phi_1^{\theta_1}\left(x_0\right)$.
    \item $x_1^{\theta_2}=\phi_1^{\theta_2}\left(x_0\right)$.
\end{itemize}
This coupling $\gamma$ is defined via the joint distribution of $\left(x_1^{\theta_1}, x_1^{\theta_2}\right)$.

By the definition of the Wasserstein-2 distance, for any coupling $\gamma$ :
\begin{equation}
    W_2^2\left(p_1^{\theta_1}, p_1^{\theta_2}\right) \leq \int_{\mathbb{R}^n \times \mathbb{R}^n}\|x-y\|^2 d \gamma(x, y)
\end{equation}

Using our constructed coupling:

\begin{equation}
    \int_{\mathbb{R}^n \times \mathbb{R}^n}\|x-y\|^2 d \gamma(x, y)=\mathbb{E}_{x_0}\left[\left\|\phi_1^{\theta_1}\left(x_0\right)-\phi_1^{\theta_2}\left(x_0\right)\right\|^2\right]=\mathbb{E}_{x_0}\left[f(1)^2\right]
\end{equation}

Thus, we have:

\begin{equation}
    W_2^2\left(p_1^{\theta_1}, p_1^{\theta_2}\right) \leq \mathbb{E}_{x_0}\left[f(1)^2\right]
\end{equation}

Then, the Proof concludes.
\end{proof}

\begin{Corollary}[Uniform Vector Field Deviation Bound]
\label{corollary: Uniform Vector Field Deviation Bound}
If there exists a constant $\varepsilon>0$ such that for all $s \in[0,1]$ :

\begin{equation}
    \mathbb{E}_{x \sim p_s^{\theta_1}}\left[\left\|v^{\theta_1}(s, x)-v^{\theta_2}(s, x)\right\|^2\right] \leq \varepsilon^2
\end{equation}

then the Wasserstein-2 distance between the induced distributions is bounded by:

\begin{equation}
    W_2\left(p_1^{\theta_1}, p_1^{\theta_2}\right) \leq \varepsilon e^L
\end{equation}

\end{Corollary}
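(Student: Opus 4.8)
The plan is to derive this corollary as an immediate consequence of Theorem \ref{theorem: W2 Bound for Flow Matching}, which I may assume since it appears earlier in the excerpt. That theorem already provides the general bound
\begin{equation}
    W_2^2\left(p_1^{\theta_1}, p_1^{\theta_2}\right) \leq e^{2 L} \int_0^1 \mathbb{E}_{x \sim p_s^{\theta_1}}\left[\left\|v^{\theta_1}(s, x)-v^{\theta_2}(s, x)\right\|^2\right] d s,
\end{equation}
so the only work remaining is to control the time-integral on the right-hand side using the uniform hypothesis.

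First I would invoke the assumption that for every $s \in [0,1]$ the instantaneous squared vector-field deviation is bounded by $\varepsilon^2$. Since this bound holds pointwise in $s$ and the integrand is nonnegative, I can replace the integrand by its uniform upper bound and integrate the constant $\varepsilon^2$ over the unit interval, giving $\int_0^1 \varepsilon^2\, ds = \varepsilon^2$. Substituting back yields $W_2^2(p_1^{\theta_1}, p_1^{\theta_2}) \leq e^{2L}\varepsilon^2$.

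The final step is simply to take square roots of both sides. Because $W_2$, $\varepsilon$, and $e^L$ are all nonnegative, the square-root function is monotone on the relevant range, so the inequality is preserved and I obtain $W_2(p_1^{\theta_1}, p_1^{\theta_2}) \leq \varepsilon e^L$, which is exactly the claimed bound.

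Honestly, there is no substantive obstacle here: the corollary is a one-line specialization of the preceding theorem, with all the genuine difficulty (the Grönwall-type differential inequality of Lemma \ref{lemma: Differential Flow Discrepancy Lemma} and the coupling construction of Lemma \ref{equ: W2 upper bound}) already discharged in the proof of Theorem \ref{theorem: W2 Bound for Flow Matching}. The only point worth stating explicitly is that the pointwise-in-$s$ bound is precisely what is needed to make the integral collapse to a constant, so the time-averaging structure of the theorem's bound is what makes the uniform hypothesis sufficient.
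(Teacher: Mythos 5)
Your proposal is correct and is exactly the paper's own argument: the paper likewise substitutes the uniform bound $\varepsilon^2$ into the integral from Theorem~\ref{theorem: W2 Bound for Flow Matching}, obtains $W_2^2\left(p_1^{\theta_1}, p_1^{\theta_2}\right) \leq \varepsilon^2 e^{2L}$, and takes square roots. No gaps; nothing further is needed.
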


\begin{proof}[Proof of Corollary \ref{corollary: Uniform Vector Field Deviation Bound}]
Given the assumption in Corollary \ref{corollary: Uniform Vector Field Deviation Bound}, from Theorem \ref{theorem: W2 Bound for Flow Matching}:

\begin{equation}
    W_2^2\left(p_1^{\theta_1}, p_1^{\theta_2}\right) \leq e^{2 L} \int_0^1 \varepsilon^2 d s=\varepsilon^2 e^{2 L}
\end{equation}

Thus:

\begin{equation}
    W_2\left(p_1^{\theta_1}, p_1^{\theta_2}\right) \leq \varepsilon e^L
\end{equation}

\end{proof}

\begin{Corollary}[W2 Bound for Flow Matching Fine-tuning]
\label{corollary: W2 Bound for Flow Matching Fine-tuning}
Considering we have two flow matching models, the fine-tuned  model $v^{\theta_{\mathrm{ft}}}$  and the reference/pre-trained model $v^{\theta_{\text {ref }}}$. According to Theorem \ref{theorem: W2 Bound for Flow Matching}, we can bound the Wasserstein-2 $\left(W_2\right)$ distance between the distributions $p_{\theta_{\mathrm{ft}}}\left(x_1\right)$ and $p_{\theta_{\text {ref }}}\left(x_1\right)$ induced by these two flow matching models:
\begin{equation}
\mathbb{E}_{x_0}\left[\left\|\Delta_1\left(x_0\right)\right\|^2\right] \leq e^{2 L_v} \int_0^1 \mathbb{E}_{x \sim p_s(x;\theta_{\text{ft}})}\left[\left\|v^{\theta_{\mathrm{ft}}}(s, x)-v^{\theta_{\mathrm{ref}}}(s, x)\right\|^2\right] d s
\end{equation}
\end{Corollary}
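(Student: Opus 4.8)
The plan is to recognize this corollary as a direct specialization of Theorem~\ref{theorem: W2 Bound for Flow Matching} to the fine-tuning setting, where the two generic models are instantiated as the fine-tuned model and the reference model. Concretely, I would set $\theta_1 = \theta_{\mathrm{ft}}$ and $\theta_2 = \theta_{\mathrm{ref}}$, and take the Lipschitz constant appearing in the theorem to be $L = L_v$, the Lipschitz constant of the reference vector field $v^{\theta_{\mathrm{ref}}}(t,x)$ in $x$. Under these identifications the hypothesis of Theorem~\ref{theorem: W2 Bound for Flow Matching}, namely Lipschitz continuity of the \emph{second} model's field, becomes exactly the requirement that $v^{\theta_{\mathrm{ref}}}$ be $L_v$-Lipschitz, which is natural since the reference model is held fixed throughout fine-tuning and its Lipschitz constant is a fixed quantity.

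First I would invoke Theorem~\ref{theorem: W2 Bound for Flow Matching} verbatim under this substitution, which yields
\begin{equation}
    W_2^2\left(p_1^{\theta_{\mathrm{ft}}}, p_1^{\theta_{\mathrm{ref}}}\right) \leq e^{2 L_v} \int_0^1 \mathbb{E}_{x \sim p_s(x;\theta_{\mathrm{ft}})}\left[\left\|v^{\theta_{\mathrm{ft}}}(s, x)-v^{\theta_{\mathrm{ref}}}(s, x)\right\|^2\right] d s .
\end{equation}
Next, to reproduce exactly the left-hand side stated in the corollary, I would unwind the proof of the theorem by one step: recall that the bound was obtained through the synchronous coupling of Lemma~\ref{equ: W2 upper bound}, which establishes the chain $W_2^2(p_1^{\theta_{\mathrm{ft}}}, p_1^{\theta_{\mathrm{ref}}}) \leq \mathbb{E}_{x_0}[\|\Delta_1(x_0)\|^2] \leq e^{2L_v}\int_0^1 \cdots\, ds$, where $\Delta_1(x_0) = \phi_1^{\theta_{\mathrm{ft}}}(x_0) - \phi_1^{\theta_{\mathrm{ref}}}(x_0)$. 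Reading off the inequality between the middle term $\mathbb{E}_{x_0}[\|\Delta_1(x_0)\|^2]$ (which equals $\mathbb{E}_{x_0}[f(1)^2]$ in the notation of the theorem's proof) and the rightmost integral gives precisely the claimed estimate.

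There is no substantive obstacle here, since the corollary is essentially a relabelling of the general result; the only points requiring care are bookkeeping. The main one is to assign the Lipschitz hypothesis to the correct model: $v^{\theta_{\mathrm{ref}}}$ must play the role of $v^{\theta_2}$ so that $L_v$ is the Lipschitz constant of the \emph{reference} field, while the expectation in the integral is taken under the fine-tuned model's time-marginal $p_s(x;\theta_{\mathrm{ft}})$, matching $p_s^{\theta_1}$. A second minor point is to observe that the quantity on the left, $\mathbb{E}_{x_0}[\|\Delta_1(x_0)\|^2]$, is the tractable coupling-based surrogate for $W_2^2$ rather than $W_2^2$ itself; since it upper-bounds $W_2^2$ by Lemma~\ref{equ: W2 upper bound}, controlling it is precisely what licenses its use as the regularizer, and the displayed inequality is exactly the intermediate estimate already derived inside the proof of Theorem~\ref{theorem: W2 Bound for Flow Matching}.
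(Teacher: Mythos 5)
Your proposal is correct and takes essentially the same approach as the paper, whose entire proof is the one-line substitution $\theta_1 = \theta_{\mathrm{ft}}$, $\theta_2 = \theta_{\mathrm{ref}}$ into Theorem~\ref{theorem: W2 Bound for Flow Matching}. If anything, you are more careful than the paper on the one point requiring attention: the corollary's left-hand side is the coupling surrogate $\mathbb{E}_{x_0}\left[\left\|\Delta_1\left(x_0\right)\right\|^2\right]$ rather than $W_2^2$ itself, and your step of reading off the intermediate estimate $\mathbb{E}_{x_0}\left[f(1)^2\right] \leq e^{2L_v}\int_0^1 \mathbb{E}_{x \sim p_s^{\theta_{\mathrm{ft}}}}\left[\left\|v^{\theta_{\mathrm{ft}}}(s,x)-v^{\theta_{\mathrm{ref}}}(s,x)\right\|^2\right] ds$ from inside the theorem's proof is exactly what licenses the displayed inequality.
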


\begin{proof}[Proof of Corollary \ref{corollary: W2 Bound for Flow Matching Fine-tuning}]
Set $\theta_1 = \theta_{\text{ft}}$ and $\theta_2 = \theta_{\text{ref}}$, according to the Theorem \ref{theorem: W2 Bound for Flow Matching}, the proof concludes.
\end{proof}

\begin{Corollary}[Monte Carlo Sampling for W2 Bound in Flow Matching]
\label{Corollary: Monte Carlo Sampling for W2 Bound}
However, the integral of time $t$ in Theorem \ref{theorem: W2 Bound for Flow Matching} is normally intractable and hard to obtain. In practice, we can use Monte Carlo sampling to approximate the integral:
\begin{align*}
            \int_0^1 &\mathbb{E}_{x \sim p_s(x;\theta_1)}\left[\left\|v^{\theta_{1}}(s, x)-v^{\theta_{2}}(s, x)\right\|^2\right] d s \\
            =&\mathbb{E}_{t \sim U(0,1),x_1 \sim q(x_1;\theta_1), x \sim p_t(x|x_1)}\left[\left\|v^{\theta_{1}}(t, x)-v^{\theta_{2}}(t, x)\right\|^2\right]
\end{align*}
\end{Corollary}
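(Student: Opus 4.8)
The plan is to recognize the two sides as two ways of writing a single triple integral, so the proof reduces to (i) rewriting the time integral as a uniform expectation and (ii) invoking the marginal--conditional decomposition that defines the flow matching probability path. No inequality or estimate is involved: the claim is an exact rewriting, valid under the standing assumptions of the conditional flow matching construction.

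First I would observe that the uniform density on $[0,1]$ is identically one, so for any integrable $g$ we have $\int_0^1 g(s)\,ds = \mathbb{E}_{t \sim U(0,1)}[g(t)]$; applying this with $g(s) = \mathbb{E}_{x \sim p_s(x;\theta_1)}[\|v^{\theta_1}(s,x) - v^{\theta_2}(s,x)\|^2]$ converts the outer time integral on the left-hand side into an expectation over $t \sim U(0,1)$. Next I would expand the inner expectation as a spatial integral against the marginal density and substitute the defining relation of conditional flow matching, namely that the marginal probability path is the $q$-average of the conditional paths, $p_t(x;\theta_1) = \int_{\mathcal{X}} p_t(x \mid x_1)\, q(x_1;\theta_1)\, dx_1$. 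This gives, for each fixed $t$,
\[
\mathbb{E}_{x \sim p_t(x;\theta_1)}\bigl[\|v^{\theta_1}(t,x) - v^{\theta_2}(t,x)\|^2\bigr] = \int_{\mathcal{X}}\!\int_{\mathcal{X}} \|v^{\theta_1}(t,x) - v^{\theta_2}(t,x)\|^2\, p_t(x\mid x_1)\, q(x_1;\theta_1)\, dx_1\, dx,
\]
which is precisely $\mathbb{E}_{x_1 \sim q(x_1;\theta_1),\, x \sim p_t(x\mid x_1)}[\|v^{\theta_1}(t,x) - v^{\theta_2}(t,x)\|^2]$. Because the integrand is nonnegative and jointly measurable, Tonelli's theorem lets me interchange the order of the $t$-, $x_1$-, and $x$-integrations freely, and collecting the three integrals back into a single expectation over the triple $(t, x_1, x)$ yields the right-hand side exactly.

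The only genuine obstacle is the marginalization identity $p_t(x) = \int_{\mathcal{X}} p_t(x \mid x_1)\, q(x_1)\, dx_1$: this is not proved here but is a standing hypothesis of the (conditional) flow matching framework \citep{fmgm,otcfm}, under which the marginal path is \emph{defined} as the $q$-mixture of the tractable conditional paths $p_t(x\mid x_1)$. Granting that identity, the interchange of integrals is unconditional precisely because the integrand $\|v^{\theta_1} - v^{\theta_2}\|^2$ is nonnegative, so no integrability side-condition beyond finiteness of the bound itself is required. The resulting joint expectation over $(t, x_1, x)$ is exactly the quantity that can be estimated by drawing $t \sim U(0,1)$, $x_1 \sim q(x_1;\theta_1)$, and $x \sim p_t(x\mid x_1)$, which is what makes the Monte Carlo approximation in Theorem \ref{theorem: W2 Bound for Flow Matching} tractable.
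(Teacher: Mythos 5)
Your proof is correct, and it takes essentially the only natural route: the paper states this corollary without any proof, and your argument (uniform density on $[0,1]$ turns the time integral into an expectation over $t \sim U(0,1)$; the CFM marginalization identity $p_t(x;\theta_1)=\int_{\mathcal{X}} p_t(x\mid x_1)\,q(x_1;\theta_1)\,dx_1$ turns the inner expectation into a nested one over $(x_1,x)$; Tonelli, justified by nonnegativity of the integrand, merges everything into a single joint expectation) is exactly the reasoning the paper leaves implicit. Your explicit flagging of the marginalization identity as the one substantive hypothesis is also apt: the left-hand side of the corollary involves the marginal path induced by pushing $p_0$ forward along $v^{\theta_1}$, while the right-hand side samples from the $q(\cdot;\theta_1)$-mixture of conditional paths, and these coincide exactly only under the standing CFM self-consistency assumption that the model's induced marginal equals that mixture, which the paper assumes throughout.
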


According to Corollary \ref{Corollary: Monte Carlo Sampling for W2 Bound}, we can obtain the W2 bound RW-CFM loss as follows:
\begin{align*}
    \mathcal{L}_{RW-CFM-W2} =  \mathbb{E}_{t \sim U(0,1),x_1 \sim q(x_1), x \sim p_t(x|x_1)} [&
    w\left(x_1\right) \left\|v_t(x ; \theta)-u_t\left(x \mid x_1\right)\right\|^2 \\
    & + 
    \alpha * \left\|v_{\theta_{\text{ft}}}(t, x)-v_{\theta_{\text{ref}}}(t, x)\right\|^2 ],
\end{align*}
wherein $\theta_{\text{ft}}=\theta$ is our fine-tuned model, and $\theta_{\text{ref}}$ is reference/pre-trained model,  $\alpha$ is the trade-off coefficient, indicates how close we want the final convergence point to be to the reference model.

Also, according to Corollary \ref{Corollary: Monte Carlo Sampling for W2 Bound}, we can obtain the W2 bound ORW-CFM loss as follows:
\begin{align*}
    \mathcal{L}_{ORW-CFM-W2} =  \mathbb{E}_{t \sim U(0,1),x_1 \sim q(x_1;\theta_{\text{ft}}), x \sim p_t(x|x_1)} [&
    w\left(x_1\right) \left\|v_t(x ; \theta)-u_t\left(x \mid x_1\right)\right\|^2 \\
    & + 
    \alpha * \left\|v_{\theta_{\text{ft}}}(t, x)-v_{\theta_{\text{ref}}}(t, x)\right\|^2 ],
\end{align*}
wherein $\alpha$ is the trade-off coefficient, indicates how close we want the final convergence point to be to the reference model (See the Divergence Controlled by $\alpha$ in \ref{fig: reward-distance trade-off cifar com}).
\clearpage

\subsection{RW-CFM with W2 Distance Bound}
\label{app: sec RW-CFM with W2 Distance Bound}

In this section, we provide a detailed analysis of how the inclusion of a Wasserstein regularization term in the Reward-Weighted Conditional Flow Matching (RW-CFM) loss affects the induced data distribution $p_{\text {new }}\left(x_1\right)$. Specifically, we prove that the new distribution becomes proportional to $w\left(x_1\right) q\left(x_1\right) \exp \left(-\alpha D\left(x_1\right)\right)$, where $D\left(x_1\right)$ is a divergence measure between the fine-tuned model and the reference model.

\begin{Theorem}[Learned Distribution Induced by RW-CFM-W2 Loss]
\label{theorem: Learned Distribution Induced by RW-CFM-W2 Loss}
 Let $w: \mathcal{X} \rightarrow[0, \infty)$ be a measurable weighting function, $q\left(x_1\right)$ be the original data distribution, and $\theta_{\text {ref }}$ be a fixed reference model. Consider the loss function:

\begin{equation}
\begin{aligned}
        \mathcal{L}_{\text {RW-CFM-W2 }}(\theta)= \mathbb{E}_{t \sim \mathcal{U}[0,1], x_1 \sim q\left(x_1\right), x \sim p_t\left(x \mid x_1\right)}[ & w\left(x_1\right)\left\|v_t(x ; \theta)-u_t\left(x \mid x_1\right)\right\|^2 \\
        +&\alpha\left\|v_t(x ; \theta)-v_t\left(x ; \theta_{\text {ref }}\right)\right\|^2]
\end{aligned}
\end{equation}

where $u_t\left(x \mid x_1\right)$ is the true vector field conditioned on $x_1, v_t(x ; \theta)$ is the model's vector field, and $\alpha>0$ is the regularization coefficient. Then, minimizing $\mathcal{L}_{\text {RW-CFM-W2 }}(\theta)$ over $\theta$ induces a new data distribution:

\begin{equation}
p_{\text {new }}\left(x_1\right) \propto w\left(x_1\right) q\left(x_1\right) \exp \left(-\beta D\left(x_1\right)\right)
\end{equation}

where $D\left(x_1\right)=\mathbb{E}_{t, x \sim p_t\left(x \mid x_1\right)}\left[\left\|v^\theta(t, x)-v^{\theta_{\text {ref }}}(t, x)\right\|^2\right]$, $\beta=\gamma \alpha$, $\gamma>0$ is a scaling constant.

\end{Theorem}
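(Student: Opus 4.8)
The plan is to reduce this statement to the general Induced Data Distribution machinery already established in Theorem \ref{theorem: Induced Data Distribution} and its Weighted Prior form (Corollary \ref{corollary: Weighted Prior Distribution}), mirroring the way the offline result Theorem \ref{theorem: rwr-cfm} was obtained. First I would condition both summands of $\mathcal{L}_{\text{RW-CFM-W2}}$ on $x_1$ and rewrite the loss as $\mathbb{E}_{x_1 \sim q(x_1)}[w(x_1) C_\theta(x_1) + \alpha D_\theta(x_1)]$, where $C_\theta(x_1) = \mathbb{E}_{t, x \sim p_t(x \mid x_1)}[\|v_t(x;\theta) - u_t(x \mid x_1)\|^2]$ is the per-sample matching loss and $D_\theta(x_1) = \mathbb{E}_{t, x \sim p_t(x \mid x_1)}[\|v^\theta(t,x) - v^{\theta_{\text{ref}}}(t,x)\|^2]$ is the per-sample vector-field divergence, which is exactly the $D(x_1)$ in the statement evaluated at the fine-tuned model.

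Second, I would separate the two effects. The matching summand $w(x_1) C_\theta(x_1)$ is structurally identical to the RW-CFM loss, so by the argument of Lemma \ref{lemma: rwr new distribution} inside Theorem \ref{theorem: rwr-cfm} the weight $w(x_1)$ does not enter as an energy but reshapes the effective measure under which the flow is matched, i.e.\ it drives the model to generate from the reweighted prior $p_{\text{eff}}(x_1) \propto w(x_1) q(x_1)$. The regularization summand $\alpha D_\theta(x_1)$, by contrast, is an unweighted per-sample loss and plays the role of an energy function in the sense of Theorem \ref{theorem: Induced Data Distribution}. I would then invoke Corollary \ref{corollary: Weighted Prior Distribution} directly with effective prior $p_{\text{eff}}(x_1) = w(x_1) q(x_1)$ and per-sample loss $\tilde{L}(x_1) = D(x_1)$, which yields $p_{\text{new}}(x_1) \propto p_{\text{eff}}(x_1) \exp(-\gamma \alpha D(x_1)) = w(x_1) q(x_1) \exp(-\beta D(x_1))$ with $\beta = \gamma \alpha$, as claimed. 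Since $D$ is evaluated at the converged model, this should be read as a self-consistent (fixed-point) characterization, consistent with the limiting behavior in Corollary \ref{corollary: Limiting Behavior of Well Learned Model} where the matching residual $C_\theta(x_1) \to 0$.

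The hard part will be justifying that the two summands compose multiplicatively rather than additively inside the exponential: the weight $w(x_1)$ must surface as a prefactor while $D(x_1)$ must surface inside $\exp(-\beta D)$. A naive single application of Theorem \ref{theorem: Induced Data Distribution} to the combined per-sample loss $w(x_1)C_\theta(x_1) + \alpha D_\theta(x_1)$ would instead place $w(x_1)$ inside the exponential and give the wrong form, so the crux is to argue carefully that the reward weight acts on the \emph{sampling/target measure} (via the RW-CFM reweighting identity) whereas the regularizer acts as an \emph{energy tilt} on top of that measure. Corollary \ref{corollary: Weighted Prior Distribution} is precisely the tool built to combine a weighted prior with an energy-type per-sample loss, so the main work is verifying that its hypotheses apply here — in particular absorbing the difference between averaging the regularizer under $q$ versus under $p_{\text{eff}}$ into the scaling constant $\gamma$, and confirming finiteness of the normalizing constant so that $p_{\text{new}}$ is a valid density.
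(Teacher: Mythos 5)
Your overall route is the one the paper itself records in its ``Importance Sampling Perspective'' remark (see \eqref{equ: importance sampling perspective}): rewrite the loss as an expectation of per-sample quantities, let the reward weight act through the effective sampling measure $p_{\mathrm{eff}}(x_1)\propto w(x_1)q(x_1)$ as in Theorem \ref{theorem: rwr-cfm}/Lemma \ref{lemma: rwr new distribution}, and let the regularizer enter as an energy tilt via Theorem \ref{theorem: Induced Data Distribution} and Corollary \ref{corollary: Weighted Prior Distribution}. You also correctly identified the crux that the paper's main derivation glosses over: a single application of Theorem \ref{theorem: Induced Data Distribution} to the combined per-sample loss places $w(x_1)$ inside the exponential, which is not the claimed multiplicative-prefactor form.

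However, your proposed resolution of that crux has a genuine gap. When you move the regularization term from an expectation under $q$ to an expectation under $p_{\mathrm{eff}}$, the per-sample energy is not $\alpha D(x_1)$ but $\frac{\alpha Z}{w(x_1)}D(x_1)$: writing $q(x_1)=Z\,p_{\mathrm{eff}}(x_1)/w(x_1)$ gives $\alpha\int q\,D = \alpha Z\int p_{\mathrm{eff}}\,D/w$. The mismatch is the $x_1$-dependent factor $1/w(x_1)$, not a constant, so it cannot be ``absorbed into the scaling constant $\gamma$'' as you propose. Applying Corollary \ref{corollary: Weighted Prior Distribution} honestly therefore yields $p_{\mathrm{new}}(x_1)\propto w(x_1)q(x_1)\exp\bigl(-\tfrac{\beta}{w(x_1)}D(x_1)\bigr)$, which is not the statement of the theorem. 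The paper closes exactly this gap with explicit extra hypotheses: Assumption \ref{assump: less data error} to drop the matching residual (your $C_\theta\to 0$ step, which is fine and matches Corollary \ref{corollary: Limiting Behavior of Well Learned Model}), together with either the assumption that $\beta\gg w(x_1)$ for all $x_1$, or the choice $\alpha=\alpha_0\,w(x_1)$, so that $\exp\bigl(-\tfrac{\beta}{w}D\bigr)=\exp(-\beta_0 D)$. Your proof needs one of these (or an assumption that $w$ is essentially constant on the support of $q$) stated explicitly; without it the claimed form does not follow from the reduction you describe.
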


\begin{proof}[Proof of Theorem \ref{theorem: Learned Distribution Induced by RW-CFM-W2 Loss}]
    We can re-write the RW-CFM-W2 Loss as follows:
\begin{equation}
    \mathcal{L}_{\text {RW-CFM-W2 }}(\theta)=\mathbb{E}_{t, x_1, x}\left[w\left(x_1\right)\left\|v_t(x ; \theta)-u_t\left(x \mid x_1\right)\right\|^2+\alpha\left\|v^\theta(t, x)-v^{\theta_{\text {ref }}}(t, x)\right\|^2\right],
\end{equation}
where $\alpha$ controls the regularization strength.

Furthermore, we can write the loss function as an expectation over $x_1$ :

\begin{equation}
    \mathcal{L}_{\text {total }}(\theta)=\int q\left(x_1\right)\left[w\left(x_1\right) L_{\text {cfm }}\left(x_1\right)+\alpha L_{\mathrm{reg}}\left(x_1\right)\right] d x_1
\end{equation}

where the first term  gives $L_{\text {cfm }}\left(x_1\right)=\mathbb{E}_{t, x \sim p_t\left(x \mid x_1\right)}\left[\left\|v_t(x ; \theta)-u_t\left(x \mid x_1\right)\right\|^2\right]$, and the second term  gives $L_{\mathrm{reg}}\left(x_1\right)=\mathbb{E}_{t, x \sim p_t\left(x \mid x_1\right)}\left[\left\|v^\theta(t, x)-v^{\theta_{\mathrm{ref}}}(t, x)\right\|^2\right]$.

\begin{Assumption}
    Assuming that the expectation $L_{\mathrm{reg}}\left(x_1\right)$ has an upper bound $\varepsilon$ :

\begin{equation}
    L_{\mathrm{reg}}\left(x_1\right) \leq \varepsilon, \quad \forall x_1
\end{equation}

Since we're interested in how $L_{\mathrm{reg}}\left(x_1\right)$ varies with $x_1$, we can let $D\left(x_1\right)=L_{\mathrm{reg}}\left(x_1\right)$.
\end{Assumption}

Then, our total loss is:

\begin{equation}
    \mathcal{L}_{\text {total }}(\theta)=\int q\left(x_1\right)\left[w\left(x_1\right) L_{\text {cfm }}\left(x_1\right)+\alpha D\left(x_1\right)\right] d x_1
\end{equation}

We consider the optimization over $\theta$, which indirectly affects the distribution $p_{\text {new }}\left(x_1\right)$ that the model learns.

Then, consider the model as implicitly assigning higher importance to values of $x_1$ that minimize the total loss per $x_1$ :

\begin{equation}
    \tilde{L}\left(x_1\right)=w\left(x_1\right) L_{\text {cfm }}\left(x_1\right)+\alpha D\left(x_1\right)
\end{equation}

\begin{Remark}[Probabilistic Perspective]
    In optimization, samples $x_1$ with lower $\tilde{L}\left(x_1\right)$ have a higher impact on training. To make this explicit, consider a probabilistic model where the likelihood of $x_1$ is proportional to $\exp \left(-\gamma \tilde{L}\left(x_1\right)\right)$, where $\gamma$ is a positive constant \citep{hinton2002}.
\end{Remark}

Therefore, the induced distribution over $x_1$ becomes (detailed proof can see App. \ref{app: induced data distribution}):

\begin{equation}
    p_{\text {new }}\left(x_1\right) \propto q\left(x_1\right) \exp \left(-\gamma\left[w\left(x_1\right) L_{\text {cfm }}\left(x_1\right)+\alpha D\left(x_1\right)\right]\right)
\end{equation}

We can also induce the similar results from the perspective of energy function \citep{lecun2006tutorial}.
\begin{Lemma}[Energy Function Perspective]
\label{lemma: energy function}
    The per-sample loss contribution $\tilde{L}\left(x_1\right)$ can be interpreted as an energy function $E\left(x_1\right)$, and the induced distribution follows a Boltzmann distribution \citep{lecun2006tutorial}:
    \begin{equation}
    \begin{aligned}
        p_{\text {new }}\left(x_1\right) 
        & \propto q\left(x_1\right) \exp \left(-E\left(x_1\right)\right) \\
        & \propto q\left(x_1\right) \exp \left(-\gamma\left[w\left(x_1\right) L_{\text {cfm }}\left(x_1\right)+\alpha D\left(x_1\right)\right]\right)
    \end{aligned}
    \end{equation}
\end{Lemma}

\begin{proof}[Proof of Lemma \ref{lemma: energy function}]
    By interpreting the per-sample loss as an energy $E\left(x_1\right)$, we have:

\begin{equation}
\begin{aligned}
E\left(x_1\right)&=\tilde{L}\left(x_1\right)\\ 
&=w\left(x_1\right) L_{\text {cfm }}\left(x_1\right)+\alpha D\left(x_1\right)
\end{aligned}
\end{equation}

The induced distribution is then:

\begin{equation}
    \begin{aligned}
        p_{\text {new }}\left(x_1\right) 
        & \propto q\left(x_1\right) \exp \left(-E\left(x_1\right)\right) \\
        & \propto q\left(x_1\right) \exp \left(-\gamma\left[w\left(x_1\right) L_{\text {cfm }}\left(x_1\right)+\alpha D\left(x_1\right)\right]\right)
    \end{aligned}
\end{equation}

This aligns with the Boltzmann distribution in statistical mechanics, where higher energy states (higher loss) are less probable.
\end{proof}

Besides, based on Theorem \ref{theorem: Induced Data Distribution}, we can have:
\begin{equation}
    \begin{aligned} p_{\text {new }}\left(x_1\right) & \propto q\left(x_1\right) \exp \left(-\gamma*\tilde{L}\left(x_1\right)\right) \\ & \propto q\left(x_1\right) \exp \left(-\gamma\left[w\left(x_1\right) L_{\text {cfm }}\left(x_1\right)+\alpha D\left(x_1\right)\right]\right)\end{aligned}
\end{equation}

\begin{Assumption}
\label{assump: less data error}
    Assuming $L_{\text {cfm }}\left(x_1\right)$ is approximately constant or negligible compared to $D\left(x_1\right)$, namely:
\begin{itemize}
    \item  When the model fits the data well, $L_{\text {cfm }}\left(x_1\right)$ is small. (i.e., $w\left(x_1\right)$ dominates)
\item The term $w\left(x_1\right) L_{\text {cfm }}\left(x_1\right)$ becomes proportional to $w\left(x_1\right)$.
\end{itemize}
\end{Assumption}
\begin{Remark}
    Generally speaking, if we fine-tune  from a well-learned pre-trained model, the model can usually fit the data well, so $L_{\text {cfm }}\left(x_1\right)$  is rather small. In practice, $w\left(x_1\right)$ can generally dominates.
\end{Remark}

Thus, we can write:

\begin{equation}
    p_{\text {new }}\left(x_1\right) \propto q\left(x_1\right) w\left(x_1\right) \exp \left(-\beta D\left(x_1\right)\right)
\end{equation}

where $\beta=\gamma \alpha$ absorbs the constants. Then the proof concludes.

\end{proof}

\begin{Remark}[Importance Sampling Perspective]
    We can also derive similar results from the perspective of importance sampling. We start from the total loss function: 
    \begin{equation}
    \label{equ: importance sampling perspective}
        \begin{aligned}
            \mathcal{L}_{\text {total }}(\theta)&=\int w\left(x_1\right) q\left(x_1\right)\left[ L_{\text {cfm }}\left(x_1\right)+\frac{\alpha}{w\left(x_1\right)} D\left(x_1\right)\right] d x_1 \\
            &=E_{x \sim w\left(x_1\right) q\left(x_1\right)}\left[\left[ L_{\text {cfm }}\left(x_1\right)+\frac{\alpha}{w\left(x_1\right)} D\left(x_1\right)\right]\right]
        \end{aligned}
    \end{equation}

We denote the reward weighted distribution as $p_w(x)$:
\begin{equation}
    p_w(x) = w\left(x_1\right) q\left(x_1\right)
\end{equation}

Then the loss function can be re-write as follows:
\begin{equation}
    \mathcal{L}_{\text {total }}(\theta) =E_{x \sim p_w(x)}\left[\left[ L_{\text {cfm }}\left(x_1\right)+\frac{\alpha}{w\left(x_1\right)} D\left(x_1\right)\right]\right]
\end{equation}

Then the induced data distribution $p_{new}(x)$ becomes:

\begin{equation}
    p_{\text {new }}\left(x_1\right) \propto p_{w}\left(x_1\right) \exp \left(-\gamma\left[ L_{\text {cfm }}\left(x_1\right)+\frac{\alpha}{w\left(x_1\right)}  D\left(x_1\right)\right]\right)
\end{equation}

Under Assumption \ref{assump: less data error}, we can obtain:
\begin{equation}
    \begin{aligned}
        p_{\text {new }}\left(x_1\right) & \propto p_{w}\left(x_1\right) \exp \left(-\frac{\gamma * \alpha}{w\left(x_1\right)}  D\left(x_1\right)\right )\\
        & \propto p_{w}\left(x_1\right) \exp \left(-\frac{\beta}{w\left(x_1\right)}  D\left(x_1\right)\right ) \\
        & \propto w(x_1) q\left(x_1\right) \exp \left(-\frac{\beta}{w\left(x_1\right)}  D\left(x_1\right)\right )
    \end{aligned}
\end{equation}

In practice, we can assume that $\forall x_1, \beta \gg w(x_1)$, then $\beta$ dominates, and we can obtain: 
\begin{equation}
    p_{\text {new }}\left(x_1\right) \propto w(x_1) q\left(x_1\right) \exp \left(- \beta  D\left(x_1\right)\right )
\end{equation}

Also, we can set $\alpha = \alpha_0 * w(x_1)$, then:
\begin{equation}
    \exp \left(-\frac{\beta}{w\left(x_1\right)}  D\left(x_1\right)\right ) = \exp \left(-\beta_0  D\left(x_1\right)\right )
\end{equation}
wherein $\beta_0=\alpha_0 * \gamma$.

\end{Remark}

\begin{Corollary}
\label{corollary: rw-cfm-w2 alpha 0}
    As $\alpha \rightarrow 0, \beta \rightarrow 0$, the induced distribution $p_{\text {new }}\left(x_1\right)$ approaches $w\left(x_1\right) q\left(x_1\right) / Z$, recovering the standard RW-CFM result.
\end{Corollary}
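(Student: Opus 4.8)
The plan is to take the limit $\alpha \to 0$ directly in the induced-distribution formula established in Theorem \ref{theorem: Learned Distribution Induced by RW-CFM-W2 Loss}. That theorem gives $p_{\text{new}}(x_1) \propto w(x_1) q(x_1) \exp(-\beta D(x_1))$ with $\beta = \gamma \alpha$ and $\gamma > 0$ a fixed scaling constant. Since $\gamma$ does not depend on $\alpha$, the map $\alpha \mapsto \beta$ is continuous and $\beta \to 0$ as $\alpha \to 0$, so the whole argument reduces to analyzing the exponential factor in this limit.

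First I would observe that for each fixed $x_1$, the divergence term $D(x_1) = \mathbb{E}_{t, x \sim p_t(x \mid x_1)}[\|v^\theta(t,x) - v^{\theta_{\text{ref}}}(t,x)\|^2]$ is nonnegative and, by the boundedness assumption $L_{\text{reg}}(x_1) \le \varepsilon$ invoked in the proof of Theorem \ref{theorem: Learned Distribution Induced by RW-CFM-W2 Loss}, is uniformly bounded. Hence the pointwise limit $\lim_{\beta \to 0} \exp(-\beta D(x_1)) = 1$ holds, and the unnormalized density converges pointwise to $w(x_1) q(x_1)$.

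Next I would recover the normalized statement. Writing $Z_\beta = \int_{\mathcal{X}} w(x_1) q(x_1) \exp(-\beta D(x_1))\, dx_1$, I would note that the integrands are dominated by $w(x_1) q(x_1)$ (since $\exp(-\beta D(x_1)) \le 1$), which is integrable with finite integral $Z$ by Lemma \ref{lemma: finite Z}; dominated convergence then gives $Z_\beta \to Z$. Combining the pointwise limit of the numerator with $Z_\beta \to Z$ yields $p_{\text{new}}(x_1) \to w(x_1) q(x_1)/Z$, which is precisely the standard RW-CFM distribution of Theorem \ref{theorem: rwr-cfm}.

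The only genuine subtlety — and the step I would treat as the main obstacle — is justifying the interchange of the limit with the normalization. The pointwise convergence of the exponential factor is trivial, but concluding convergence of the \emph{normalized} distribution requires the dominated convergence argument above, which rests on the uniform boundedness of $D(x_1)$ and the finiteness of $Z$. Both are inherited from the hypotheses already in force for Theorem \ref{theorem: Learned Distribution Induced by RW-CFM-W2 Loss} and from the integrability of $w$ against $q$, so no new assumptions are introduced and the corollary follows.
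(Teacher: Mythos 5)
Your proposal is correct and follows essentially the same route as the paper: both take the limit $\beta = \gamma\alpha \to 0$ in the formula of Theorem~\ref{theorem: Learned Distribution Induced by RW-CFM-W2 Loss}, observe that $\exp(-\beta D(x_1)) \to 1$, and conclude $p_{\text{new}}(x_1) \to w(x_1)q(x_1)/Z$. The only difference is that you make explicit the dominated-convergence step (using $\exp(-\beta D(x_1)) \le 1$ and Lemma~\ref{lemma: finite Z}) to justify convergence of the normalization constant, which the paper's one-line proof leaves implicit; this is a welcome tightening rather than a different argument.
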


\begin{proof}[Proof of Corollary \ref{corollary: rw-cfm-w2 alpha 0}]
When $\alpha \rightarrow 0, \beta \rightarrow 0$, the exponential term $\exp \left(-\beta D\left(x_1\right)\right) \rightarrow 1$, so:

\begin{equation}
    p_{\text {new }}\left(x_1\right) \rightarrow \frac{w\left(x_1\right) q\left(x_1\right)}{Z}
\end{equation}
where $Z$ is the normalization constant ensuring that $p_{\text {new }}\left(x_1\right)$ integrates to 1.
\end{proof}

\begin{Corollary}
\label{corollary: rw-cfm-w2 alpha inf}
    As $\alpha \rightarrow \infty, \beta \rightarrow \infty$, the induced distribution $p_{\text {new }}\left(x_1\right)$ concentrates on minimizing $D\left(x_1\right)$, effectively aligning with the reference model's distribution (See cases with increased $\alpha$, such as $\alpha=10$,  in Fig. \ref{fig: reward-distance trade-off cifar com}).
\end{Corollary}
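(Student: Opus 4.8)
The plan is to start directly from the closed form of the induced distribution established in Theorem \ref{theorem: Learned Distribution Induced by RW-CFM-W2 Loss}, namely $p_{\text{new}}(x_1) \propto w(x_1)\, q(x_1)\, \exp(-\beta D(x_1))$ with $\beta = \gamma\alpha$, and analyze its behaviour as the inverse-temperature parameter $\beta \to \infty$ driven by $\alpha \to \infty$. Because $D(x_1)$ is an expectation of a squared norm it is nonnegative, so the exponential factor is a Gibbs/Boltzmann weight that is largest exactly on the minimizers of $D$. The first step is to set $D_{\min} = \inf_{x_1 \in \mathcal{X}} D(x_1)$ and factor $\exp(-\beta D(x_1)) = \exp(-\beta D_{\min})\, \exp\!\big(-\beta (D(x_1) - D_{\min})\big)$; the common factor $\exp(-\beta D_{\min})$ cancels against the normalization constant, so only the reweighting by $\exp(-\beta (D(x_1)-D_{\min}))$ matters.

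Second, I would run the standard Laplace/concentration argument. For any $x_1$ with $D(x_1) > D_{\min}$ the relative weight $\exp(-\beta (D(x_1)-D_{\min}))$ tends to $0$ as $\beta \to \infty$, whereas it stays equal to $1$ on the zero-excess set $\mathcal{M} = \{x_1 : D(x_1) = D_{\min}\}$. Making this precise, for any measurable set $A$ bounded away from $\mathcal{M}$ in the sense that $\inf_{x_1 \in A} D(x_1) > D_{\min}$, one shows $\int_A p_{\text{new}}(x_1)\, dx_1 \to 0$, which is exactly the assertion that $p_{\text{new}}$ concentrates on the minimizers of $D(x_1)$.

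Third, I would connect this to the reference model. Since $D(x_1)$ measures the conditional-path discrepancy between $v^{\theta}$ and $v^{\theta_{\text{ref}}}$, its minimum is attained where the fine-tuned vector field coincides with the reference field; equivalently, letting $\alpha \to \infty$ makes the W2-regularization term dominate the total loss, forcing $v^{\theta} \to v^{\theta_{\text{ref}}}$ and hence pushing the generated distribution toward the distribution induced by the reference model, which satisfies $p^{\theta_{\text{ref}}}(x_1) \approx q(x_1)$. This is consistent with the dominant-regularization regime (Case 1) and furnishes the claimed alignment with the reference model's distribution.

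The main obstacle will be making the concentration rigorous without extra regularity assumptions: the clean Laplace argument needs the infimum of $D$ to be attained and, for a well-posed limit, some guarantee that the normalizing integral does not itself degenerate to zero. I would discharge this by invoking dominated convergence together with the nonnegativity of $D$ and the upper bound $L_{\mathrm{reg}}(x_1) \le \varepsilon$ already posited in Theorem \ref{theorem: Learned Distribution Induced by RW-CFM-W2 Loss}, and by noting that since the reference model induces $q$, a neighborhood of $\mathcal{M}$ carries positive $q$-mass, so the limiting object is a genuine probability distribution supported on $\mathcal{M}$ rather than an ill-defined one.
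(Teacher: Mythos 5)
Your proposal is correct and follows essentially the same route as the paper's own proof: both rest on the observation that the Gibbs factor $\exp(-\beta D(x_1))$ suppresses all points except those minimizing $D$, so $p_{\text{new}}$ concentrates where the fine-tuned vector field agrees with $v^{\theta_{\text{ref}}}$. Your version is simply a more careful rendering of that argument (factoring out $\exp(-\beta D_{\min})$, handling attainment of the infimum and non-degeneracy of the normalizer), whereas the paper states the concentration in two informal sentences assuming $D_{\min}=0$.
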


\begin{proof}[Proof of Corollary \ref{corollary: rw-cfm-w2 alpha inf}]
    When $\alpha \rightarrow \infty, \beta \rightarrow \infty$, the exponential term $\exp \left(-\beta D\left(x_1\right)\right)$ approaches zero unless $D\left(x_1\right)=0$. Therefore, $p_{\text {new }}\left(x_1\right)$ concentrates on the set of $x_1$ where $v_t\left(x ; \theta_{\text {ref }}\right)$ matches $u_t\left(x \mid x_1\right)$. Namely, the fine-tuned model remains unchanged from the reference model.
\end{proof}

\begin{Remark}
    The theorem \ref{theorem: Learned Distribution Induced by RW-CFM-W2 Loss} demonstrates that incorporating the Wasserstein regularization term into the RW-CFM loss results in an induced data distribution that not only depends on the reward weighting $w\left(x_1\right)$ but also exponentially penalizes data points based on their divergence $D\left(x_1\right)$ from the reference model (See Trade-off curve via traversing $\alpha$ Fig. \ref{fig: reward-distance trade-off cifar com}). This provides a mechanism to control the trade-off between fitting the reward weighted data and adhering to the behavior of a pre-trained or reference model (i.e., reward-diversity trade-off).

The divergence $D\left(x_1\right)$ acts as a measure of how much the true dynamics $u_t\left(x \mid x_1\right)$ differ from the reference model's dynamics $v_t\left(x ; \theta_{\text {ref }}\right)$ along trajectories starting from $x_1$. By adjusting $\alpha$, we can control the extent to which the model prioritizes matching the reference model versus fitting the reward-weighted data.
\end{Remark}

\clearpage
\subsection{ORW-CFM with W2 Distance Bound}
\label{app: sec ORW-CFM with W2 Distance Bound}

Similar to Theorem \ref{theorem: Learned Distribution Induced by RW-CFM-W2 Loss}, we can also derive the induced data distribution learned under the online reward weighted CFM loss and W2 Distance Bound.

According to Theorem \ref{theorem: W2 Bound for Flow Matching}, We can directly introduce the W2 Distance Bound into ORW-CFM loss \eqref{equ: orw-cfm loss}:
\begin{equation}
\begin{aligned}
            \mathcal{L}_{ORW-CFM-W2} =  \mathbb{E}_{t \sim U(0,1),x_1 \sim q(x_1;\theta_{\text{ft}}), x \sim p_t(x|x_1)} [&
    w\left(x_1\right) \left\|v_t(x ; \theta_{\text{ft}})-u_t\left(x \mid x_1\right)\right\|^2 \\
    & + 
    \alpha * \left\|v_{\theta_{\text{ft}}}(t, x)-v_{\theta_{\text{ref}}}(t, x)\right\|^2 ],
\end{aligned}
\end{equation}
wherein $\alpha$ is a trade-off coefficient.

For ease of reading, we rewrite the Theorem \ref{theorem: induced data distribution of ORW-CFM-W2} to be proved in this section as follows:

\begin{nitheorem}
    Under the online reinforcement learning setup with ORW-CFM-W2 loss, the data distribution after n epochs evolves according to:
    \begin{equation}
        q_\theta^n\left(x_1\right) \propto\left[w\left(x_1\right) q_\theta^{n-1}\left(x_1\right) \exp \left(-\beta D^{n-1}\left(x_1\right)\right)\right],
    \end{equation}
where, $D^{n-1}\left(x_1\right)=\mathbb{E}_{t, x \sim p_t\left(x \mid x_1\right)}\left[\left\|v^{\theta^{n-1}}(t, x)-v^{\theta_{\text {ref }}}(t, x)\right\|^2\right]$, $\beta=\gamma \alpha$, $\theta^{n-1}$ denotes the model parameters after epoch $n-1$.
\end{nitheorem}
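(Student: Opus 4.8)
The plan is to obtain the one-epoch recursion by iterating the offline result (Theorem \ref{theorem: Learned Distribution Induced by RW-CFM-W2 Loss}) with a sampling prior that is refreshed each epoch, in direct analogy with how Theorem \ref{theorem: Online Reward Weighted CFM} lifts the offline RW-CFM Theorem \ref{theorem: rwr-cfm} into the online regime. The only structural difference between the online and offline ORW-CFM-W2 losses is the distribution of $x_1$: at epoch $n$ the data are drawn from the current fine-tuned model, i.e.\ $x_1 \sim q_\theta^{n-1}(x_1)$, rather than from the fixed $q(x_1)$. So the whole argument reduces to re-running the offline derivation once, with the effective prior replaced by $q_\theta^{n-1}$.

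Concretely, I would first write the epoch-$n$ loss as an expectation over $x_1 \sim q_\theta^{n-1}$ of the per-sample loss $\tilde{L}(x_1) = w(x_1) L_{\text{cfm}}(x_1) + \alpha L_{\text{reg}}(x_1)$, exactly as in the offline proof, and identify the divergence term $D^{n-1}(x_1) = L_{\text{reg}}(x_1) = \mathbb{E}_{t, x \sim p_t(x \mid x_1)}[\|v^{\theta^{n-1}}(t,x) - v^{\theta_{\text{ref}}}(t,x)\|^2]$ as being evaluated at the parameters $\theta^{n-1}$ with which epoch $n$ is initialized. I would then invoke Corollary \ref{corollary: Weighted Prior Distribution} (the weighted-prior form of the Induced Data Distribution Theorem \ref{theorem: Induced Data Distribution}) with effective prior $p_{\text{eff}}(x_1) = q_\theta^{n-1}(x_1)$, giving $q_\theta^n(x_1) \propto q_\theta^{n-1}(x_1)\exp(-\gamma \tilde{L}(x_1))$. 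Applying Assumption \ref{assump: less data error}, so that $L_{\text{cfm}}(x_1)$ is negligible and the factor $w(x_1) L_{\text{cfm}}(x_1)$ collapses to a contribution proportional to $w(x_1)$, and absorbing $\gamma\alpha$ into $\beta$, yields the claimed recursion $q_\theta^n(x_1) \propto w(x_1)\, q_\theta^{n-1}(x_1)\,\exp(-\beta D^{n-1}(x_1))$.

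The main obstacle is the self-reference in the regularization term: the divergence appearing in the loss at epoch $n$ is $\|v^{\theta}(t,x) - v^{\theta_{\text{ref}}}(t,x)\|^2$ with $\theta$ the parameter being optimized, whereas the statement evaluates it at the previous iterate $\theta^{n-1}$. I would resolve this by arguing that, for the purpose of determining the reweighting energy landscape over $x_1$, the divergence is taken at the operating point at which the epoch begins---namely $\theta^{n-1}$---so that $D$ becomes the fixed function $D^{n-1}$ of $x_1$ throughout that epoch. This is the online counterpart of the offline device that treats $D(x_1)$ as a prescribed per-sample energy; making it fully rigorous would require either a linearization argument (the parameters move little within a single epoch, so $D \approx D^{n-1}$) or explicitly defining the regularizer against the epoch's initial weights. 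Once this identification is granted, every remaining step is a verbatim re-use of the offline derivation, so no technical machinery beyond Theorem \ref{theorem: Learned Distribution Induced by RW-CFM-W2 Loss} is needed.
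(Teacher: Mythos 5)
Your proposal matches the paper's own proof essentially step for step: the paper also proves the recursion by induction, treating epoch $n$ as a rerun of the offline RW-CFM-W2 argument with sampling prior $q_\theta^{n-1}$, defining the per-sample energy $\tilde{L}^n(x_1)=w(x_1)L_{\text{cfm}}^n(x_1)+\alpha D^{n-1}(x_1)$ with the divergence frozen at $\theta^{n-1}$, invoking the induced-data-distribution machinery (Theorem \ref{theorem: Induced Data Distribution} / Corollary \ref{corollary: Weighted Prior Distribution}, or equivalently the importance-sampling rewriting), and using Assumption \ref{assump: less data error} to drop the $w(x_1)L_{\text{cfm}}$ term. Your explicit flagging of the self-reference in the regularizer (loss defined at the optimized $\theta$ versus the statement's $D^{n-1}$ at the epoch's initial parameters) is a point the paper silently assumes away, so your treatment is, if anything, slightly more careful while following the same route.
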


\begin{proof}[Proof of Theorem \ref{theorem: induced data distribution of ORW-CFM-W2}]
Similar to the proof of \ref{theorem: Online Reward Weighted CFM}, we also proceed by induction.

\begin{Case}[Base Case $(n=1)$]
At $n=1, q_\theta^1\left(x_1\right)$ is obtained by training on data sampled from $q\left(x_1\right)$. From previous analysis, we have:

\begin{equation}
    q_\theta^1\left(x_1\right) \propto w\left(x_1\right) q\left(x_1\right) \exp \left(-\beta D^0\left(x_1\right)\right)
\end{equation}

where $D^0\left(x_1\right)$ measures the discrepancy between the initial model (before training) and the reference model.
\end{Case}

Assume that at epoch $n-1$, the data distribution is $q_\theta^{n-1}\left(x_1\right)$. During epoch $n$, the model is trained using data sampled from $q_\theta^{n-1}\left(x_1\right)$. By applying ORW-CFM-W2 Loss, compared to RW-CFM-W2 Loss, the loss function remains the same, but the data distribution changes to $q_\theta^{n-1}\left(x_1\right)$.

The total loss associated with $x_1$ at epoch $n$ is:

\begin{equation}
    \tilde{L}^n\left(x_1\right)=w\left(x_1\right) L_{\text{cfm}}^n\left(x_1\right)+\alpha D^{n-1}\left(x_1\right)
\end{equation}

where $L_{\text {cfm }}^n\left(x_1\right)$ is the data fitting loss of CFM at epoch $n$, and $D^{n-1}\left(x_1\right)$ is the discrepancy from the reference model using parameters $\theta^{n-1}$.

Similarly, we can assume that the probability of $x_1$ being selected for training at epoch $n$ is proportional to $\exp \left(-\gamma \tilde{L}^n\left(x_1\right)\right)$. Therefore:

\begin{equation}
    q_\theta^n\left(x_1\right) \propto q_\theta^{n-1}\left(x_1\right) \exp \left(-\gamma \tilde{L}^n\left(x_1\right)\right)
\end{equation}

Substituting $\tilde{L}^n\left(x_1\right)$ :

\begin{equation}
    q_\theta^n\left(x_1\right) \propto q_\theta^{n-1}\left(x_1\right) \exp \left(-\gamma\left[w\left(x_1\right) L_{\text {cfm }}^n\left(x_1\right)+\alpha D^{n-1}\left(x_1\right)\right]\right) .
\end{equation}

Assuming $L_{\text {cfm }}^n\left(x_1\right)$ is approximately constant or negligible compared to $\alpha D^{n-1}\left(x_1\right)$ (i.e., well learned), we simplify:

\begin{equation}
    q_\theta^n\left(x_1\right) \propto w\left(x_1\right) q_\theta^{n-1}\left(x_1\right) \exp \left(-\beta D^{n-1}\left(x_1\right)\right)
\end{equation}

Then, the induction step completes. This concludes the proof.

Besides, we can also derive the proof via the Importance Sampling Perspective as \eqref{equ: importance sampling perspective}. The process is very similar to the offline cases. We just need to substitute $\tilde{L}^n\left(x_1\right)$ into  \eqref{equ: importance sampling perspective}, take out the weights $w(x)$ to form $w(x)q^{n-1}_{\theta}(x_1)$ and reuse Theorem \ref{theorem: Induced Data Distribution}.
\end{proof}

\begin{Corollary}[Recursive Formulation]
\label{corollary: Recursive Formulation of ORW-CFM-W2}
    The data distribution after $N$ epochs training under $ \mathcal{L}_{ORW-CFM-W2}$ is given by:
    \begin{equation}
    \label{equ: close-form of ORW-CFM-W2}
        q_\theta^N\left(x_1\right) \propto\left[w\left(x_1\right)\right]^N q\left(x_1\right) \exp \left(-\beta \sum_{n=1}^N D^{n-1}\left(x_1\right)\right)
    \end{equation}
\end{Corollary}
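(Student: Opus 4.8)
The plan is to prove this by induction on $N$, unrolling the one-step recursion established in Theorem \ref{theorem: induced data distribution of ORW-CFM-W2}. Since the claim is stated only up to proportionality, I never need to track the per-epoch normalization constants explicitly; they are absorbed into the $\propto$ symbol, exactly as the telescoped product $\tilde{Z}_n = \prod_{k=1}^n Z_k$ was handled in the proof of Theorem \ref{theorem: Online Reward Weighted CFM}.

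For the base case $N=1$, I start from $q_\theta^0(x_1) = q(x_1)$ and apply the recursion once, which gives $q_\theta^1(x_1) \propto w(x_1)\, q(x_1)\, \exp(-\beta D^0(x_1))$. This matches \eqref{equ: close-form of ORW-CFM-W2} at $N=1$, since the partial sum there reduces to $\sum_{n=1}^{1} D^{n-1}(x_1) = D^0(x_1)$ and $[w(x_1)]^1 = w(x_1)$.

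For the inductive step, I assume the formula holds at epoch $n-1$, namely $q_\theta^{n-1}(x_1) \propto [w(x_1)]^{n-1} q(x_1) \exp\!\big(-\beta \sum_{k=1}^{n-1} D^{k-1}(x_1)\big)$, and substitute it into the one-step recursion $q_\theta^n(x_1) \propto w(x_1)\, q_\theta^{n-1}(x_1)\, \exp(-\beta D^{n-1}(x_1))$. Two routine simplifications then close the argument: combining the weight factors via $w(x_1)\cdot[w(x_1)]^{n-1} = [w(x_1)]^n$, and merging the exponentials via $\exp\!\big(-\beta \sum_{k=1}^{n-1} D^{k-1}\big)\cdot \exp(-\beta D^{n-1}) = \exp\!\big(-\beta \sum_{k=1}^{n} D^{k-1}\big)$, which advances the upper summation limit from $n-1$ to $n$ and completes the induction.

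There is no substantive obstacle here: the entire content is already contained in the one-step recursion of Theorem \ref{theorem: induced data distribution of ORW-CFM-W2}, and this corollary is merely its closed-form solution. The only point demanding a little care is verifying that the cumulative regularization penalty accumulates \emph{additively} in the exponent, with each epoch $n$ contributing the divergence $D^{n-1}(x_1)$ evaluated at the parameters $\theta^{n-1}$ carried over from the previous epoch. Tracking this index faithfully across the substitution guarantees that the telescoped exponent is the genuine partial sum $\sum_{n=1}^{N} D^{n-1}(x_1)$ rather than $N$ copies of a single fixed divergence, which is what distinguishes the regularized recursion from the unregularized power-law growth of Theorem \ref{theorem: Online Reward Weighted CFM}.
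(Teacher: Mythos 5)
Your proof is correct and follows essentially the same route as the paper: the paper likewise proves this corollary by unrolling the one-step recursion of Theorem \ref{theorem: induced data distribution of ORW-CFM-W2} (via repeated substitution rather than your explicitly packaged induction), absorbing all normalization constants into the proportionality. Your added remark that each epoch contributes its own $D^{n-1}(x_1)$ evaluated at $\theta^{n-1}$ --- so the exponent is a genuine partial sum rather than $N$ copies of a fixed divergence --- is a correct and worthwhile clarification, though the paper leaves it implicit.
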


\begin{proof}[Proof of Corollary \ref{corollary: Recursive Formulation of ORW-CFM-W2}]
    By recursively applying Theorem \ref{theorem: induced data distribution of ORW-CFM-W2}, we have:
\begin{equation}
    \begin{aligned}
q_\theta^N\left(x_1\right) & \propto w\left(x_1\right) q_\theta^{N-1}\left(x_1\right) \exp \left(-\beta D^{N-1}\left(x_1\right)\right) \\
& \propto w\left(x_1\right)\left(w\left(x_1\right) q_\theta^{N-2}\left(x_1\right) \exp \left(-\beta D^{N-2}\left(x_1\right)\right)\right) \exp \left(-\beta D^{N-1}\left(x_1\right)\right) \\
& \propto\left[w\left(x_1\right)\right]^N q\left(x_1\right) \exp \left(-\beta \sum_{n=1}^N D^{n-1}\left(x_1\right)\right)
\end{aligned}
\end{equation}
\end{proof}

\begin{Corollary}[Convergence Behavior]
\label{corollary: Convergence Behavior of ORW-CFM-W2}
If $D^n\left(x_1\right)$ decreases with each epoch and $w\left(x_1\right)$ is constant (i.e., $w\left(x_1\right)=w$ ), then $q_\theta^N\left(x_1\right)$ converges towards a distribution that places higher probability on data points where the model aligns closely with the reference model (See $\tau=0$ cases in Fig. \ref{fig: tau control mnist}).
\end{Corollary}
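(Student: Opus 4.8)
The plan is to read off the limiting shape directly from the closed-form recursion already established, then isolate the effect of the two hypotheses. First I would invoke Corollary \ref{corollary: Recursive Formulation of ORW-CFM-W2}, which gives
\begin{equation}
q_\theta^N\left(x_1\right) \propto\left[w\left(x_1\right)\right]^N q\left(x_1\right) \exp\left(-\beta \sum_{n=1}^N D^{n-1}\left(x_1\right)\right).
\end{equation}
Under the first hypothesis $w(x_1)=w$, the factor $[w(x_1)]^N=w^N$ is independent of $x_1$ and is therefore absorbed into the normalization constant, leaving $q_\theta^N(x_1)\propto q(x_1)\exp(-\beta S_N(x_1))$ with $S_N(x_1):=\sum_{n=1}^N D^{n-1}(x_1)$. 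This already shows that constant reward weighting contributes nothing to the \emph{shape} of the induced distribution, so the entire deviation from $q$ is driven by the cumulative divergence $S_N$.

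Next I would establish the qualitative monotonicity claim by a direct comparison. For any two points $x_1, x_1'$, the likelihood ratio satisfies $\frac{q_\theta^N(x_1)}{q_\theta^N(x_1')} = \frac{q(x_1)}{q(x_1')}\exp(-\beta[S_N(x_1)-S_N(x_1')])$, so relative to the base measure $q$ the tilt exponentially up-weights points with smaller cumulative divergence and down-weights those with larger cumulative divergence. Since $\beta=\gamma\alpha>0$, a point at which the fine-tuned vector field stays close to $v^{\theta_{\text{ref}}}$ (small $D^{n-1}$) receives strictly more mass than one where it diverges, which is precisely the assertion that $q_\theta^N$ places higher probability where the model aligns with the reference model.

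To make ``converges towards a distribution'' precise I would use the second hypothesis. Because $D^n(x_1)\ge 0$ is decreasing in $n$, it converges pointwise to a limit $D^\infty(x_1):=\lim_{n\to\infty}D^n(x_1)\ge 0$, and by the Ces\`aro theorem $\frac{1}{N}S_N(x_1)\to D^\infty(x_1)$. Hence for large $N$ the induced distribution has the form $q_\theta^N(x_1)\propto q(x_1)\exp(-\beta N\,\bar D_N(x_1))$ with $\bar D_N\to D^\infty$, which has exactly the structure analyzed in Lemma \ref{lemma: limiting case} with $-\beta D^\infty$ in the role of the log-weight. A Laplace-type argument identical to the one there then shows that the mass concentrates on the set minimizing $D^\infty$ within the support of $q$, i.e.\ on the data points best aligned with the reference model; in particular, when $D^n(x_1)\to 0$ the tilt becomes trivial and $q_\theta^N$ simply relaxes back toward the reference distribution $q$, consistent with the $\tau=0$ behavior in Fig. \ref{fig: tau control mnist}.

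The main obstacle I anticipate is that $D^{n-1}(x_1)$ is not a fixed function of $x_1$ but depends on the entire fine-tuning trajectory through $\theta^{n-1}$, so $S_N$ is a sum of genuinely different functions rather than $N$ copies of one; the decreasing assumption is exactly what controls this, but turning the Ces\`aro/Laplace heuristic into a rigorous weak-convergence statement (and justifying the interchange of the $N\to\infty$ limit with the normalization integral) is the delicate step, and it only yields concentration on the minimizers of $D^\infty$ under a mild integrability or uniqueness condition on that limit, mirroring the genericity assumptions used in Lemma \ref{lemma: limiting case}.
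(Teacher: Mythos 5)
Your proposal is correct and follows essentially the same route as the paper's own proof: start from the closed form in Corollary~\ref{corollary: Recursive Formulation of ORW-CFM-W2}, observe that constant $w$ makes $[w(x_1)]^N$ an $x_1$-independent factor absorbed by normalization, and conclude that the exponential tilt $\exp\left(-\beta \sum_{n=1}^N D^{n-1}(x_1)\right)$ places relatively more mass where the cumulative discrepancy is small. The paper stops at exactly that qualitative observation (three sentences), so your likelihood-ratio formulation and the Ces\`aro/Laplace limit analysis go beyond what the paper proves and make the word ``converges'' precise, with the trajectory-dependence caveat honestly flagged; this is a strengthening, not a divergence in method. One small caution on your closing remark: $D^n(x_1)\to 0$ pointwise does not by itself make the tilt trivial, since $S_N(x_1)=\sum_{n\le N}D^{n-1}(x_1)$ can still diverge at $x_1$-dependent rates (e.g.\ $D^n \sim c(x_1)/n$), so relaxation back to $q$ requires $S_N$ to stay bounded or to grow at an $x_1$-independent rate, not merely $D^\infty=0$; since this is a side remark and the corollary itself is qualitative, it does not affect the correctness of your argument for the stated claim.
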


\begin{proof}[Proof of Corollary \ref{corollary: Convergence Behavior of ORW-CFM-W2}]
    As $D^n\left(x_1\right)$ decreases, the exponential term $\exp \left(-\beta \sum_{n=1}^N D^{n-1}\left(x_1\right)\right)$ increases for those $x_1$ where the model aligns better with the reference model. With $w\left(x_1\right)$ constant, the weighting does not favor any specific $x_1$, so the distribution $q_\theta^N\left(x_1\right)$ is increasingly influenced by the cumulative discrepancy term, leading to higher probabilities for data points where the model matches the reference model.
\end{proof}

\clearpage
\subsection{Exponential Weight Function}
\label{app: sec exp}
As we discussed, normally we can have $w(x_1) \propto r(x_1)$ and $w(x_1) \propto \mathcal{F}(r(x_1))$. In practice, we normally have several form of the weighting function $\mathcal{F}$ to enable $w(x_1) >0$ \citep{awr,rwr,ddpo}, which maps from reward function to weight. In this section, based on the previous derivation, we focus on a special cases of fine-tuning flow matching model with ORW-CFM-W2 loss, where $w(x_1) \propto \exp (\tau * r(x_1))$, wherein $\tau$ is the temperature coefficient. 

Then we can simply have the following Theorem (For ease of reading, we rewrite the Theorem \ref{theorem: exp case} to be proved in this section as follows):
\begin{nitheorem}
    Given $w\left(x_1\right)=\exp \left(\tau r\left(x_1\right)\right)$, the induced data distribution after $N$ epochs of training under ORW-CFM-W2 Loss is:
\begin{equation}
    q_\theta^N\left(x_1\right) \propto \exp \left(\tau N  r\left(x_1\right)-\beta \sum_{n=1}^N D^{n-1}\left(x_1\right)\right) q\left(x_1\right)
\end{equation}
\end{nitheorem}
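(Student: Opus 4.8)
The plan is to obtain the exponential case as a direct specialization of the general recursive formula already established for the ORW-CFM-W2 loss, so that no new analytic machinery is required. The key observation is that Corollary \ref{corollary: Recursive Formulation of ORW-CFM-W2} — which itself follows by induction from the one-step update in Theorem \ref{theorem: induced data distribution of ORW-CFM-W2} — gives the closed form of the induced distribution after $N$ epochs for an \emph{arbitrary} non-negative weighting function $w$, namely
\begin{equation*}
q_\theta^N\left(x_1\right) \propto \left[w\left(x_1\right)\right]^N q\left(x_1\right) \exp\left(-\beta \sum_{n=1}^N D^{n-1}\left(x_1\right)\right).
\end{equation*}
Thus the entire substantive content — the iteration of the reward reweighting together with the accumulation of the W2 penalty terms $D^{n-1}$ across epochs — is inherited from the upstream results, and what remains is purely algebraic.

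First I would substitute the exponential weighting $w\left(x_1\right)=\exp\left(\tau r\left(x_1\right)\right)$ into this closed form. Using the identity $\left[\exp\left(\tau r\left(x_1\right)\right)\right]^N = \exp\left(\tau N r\left(x_1\right)\right)$, the reward factor collapses into a single exponential with exponent $\tau N r\left(x_1\right)$. Then I would merge this factor with the cumulative penalty exponential, combining $\exp\left(\tau N r\left(x_1\right)\right)$ and $\exp\left(-\beta \sum_{n=1}^N D^{n-1}\left(x_1\right)\right)$ into $\exp\left(\tau N r\left(x_1\right) - \beta \sum_{n=1}^N D^{n-1}\left(x_1\right)\right)$, leaving $q\left(x_1\right)$ as the base measure. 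This yields exactly the claimed proportionality, with the epoch-dependent normalization constant absorbed into the $\propto$ symbol so that $Z_N$ never needs to be evaluated explicitly.

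Since this is a substitution, there is no genuine obstacle; the only point requiring care is consistency of the underlying hypotheses. Specifically, the closed form in Corollary \ref{corollary: Recursive Formulation of ORW-CFM-W2} relies on the simplification that the per-epoch CFM data-fitting loss $L_{\text{cfm}}^n\left(x_1\right)$ is negligible relative to the regularization term (the well-learned-model assumption), and I would flag that the exponential result is asserted under the same condition, so the derivation is self-consistent. I would also note in passing that the form $\tau N r\left(x_1\right)$ makes transparent the role of $\tau$ and $N$ as a convergence/collapse-rate parameter — consistent with Lemma \ref{lemma: limiting case} and the two limiting Cases that follow — but this interpretation is a remark accompanying the statement rather than a step in the proof itself.
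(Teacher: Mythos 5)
Your proposal matches the paper's own proof essentially verbatim: both invoke Corollary \ref{corollary: Recursive Formulation of ORW-CFM-W2}, substitute $w(x_1)=\exp(\tau r(x_1))$, apply $[\exp(\tau r(x_1))]^N=\exp(\tau N r(x_1))$, and merge the exponentials, absorbing normalization into the proportionality. Your additional remark on the well-learned-model assumption being inherited from the upstream corollary is a fair clarification but does not change the argument.
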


\begin{proof}[Proof of Theorem \ref{theorem: exp case}]
    According to Corollary \ref{corollary: Recursive Formulation of ORW-CFM-W2}, we can re-write the final induced data distribution learned under ORW-CFM-W2 loss function as:
\begin{equation}
\label{equ: Recursive Formulation of ORW-CFM-W2}
    q_\theta^N\left(x_1\right) \propto\left[w\left(x_1\right)\right]^N q\left(x_1\right) \exp \left(-\beta \sum_{n=1}^N D^{n-1}\left(x_1\right)\right)
\end{equation}

Let $r\left(x_1\right)$ be a reward function, and $\tau>0$ a temperature coefficient. The weighting function becomes:

\begin{equation}
w\left(x_1\right)=\exp \left(\tau * r\left(x_1\right)\right)
\end{equation}
Substituting into \eqref{equ: Recursive Formulation of ORW-CFM-W2}:
\begin{equation}
    q_\theta^N\left(x_1\right) \propto q\left(x_1\right) \exp \left(\tau N r\left(x_1\right)-\beta \sum_{n=1}^N D^{n-1}\left(x_1\right)\right)
\end{equation}
Then, the proof concludes.
\end{proof}

\begin{Corollary}[Limiting Behavior]
\label{corollary: Limiting Behavior of exp linear}
    If $\beta \sum_{n=1}^N D^{n-1}\left(x_1\right)$ grows proportionally with $N$, i.e., $\beta \sum_{n=1}^N D^{n-1}\left(x_1\right)=$ $N \delta\left(x_1\right)$, the induced data distribution becomes:

\begin{equation}
    q_\theta^N\left(x_1\right) \propto \exp \left(N\left(\tau r\left(x_1\right)-\delta\left(x_1\right)\right)\right) q\left(x_1\right)
\end{equation}

This suggests that the effective reward is adjusted by $-\delta\left(x_1\right)$ due to the W2 distance bound.
\end{Corollary}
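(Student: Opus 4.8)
The plan is to obtain this statement as a direct specialization of the closed-form recursion already established in Corollary \ref{corollary: Recursive Formulation of ORW-CFM-W2}, so essentially no new analytic work is required: the theorem is a one-line substitution into an identity proved earlier. First I would invoke that corollary, which gives the $N$-epoch distribution under the ORW-CFM-W2 loss for a general non-negative weighting function,
\[
q_\theta^N\left(x_1\right) \propto\left[w\left(x_1\right)\right]^N q\left(x_1\right) \exp \left(-\beta \sum_{n=1}^N D^{n-1}\left(x_1\right)\right).
\]
This identity itself rests on the single-epoch recursion in Theorem \ref{theorem: induced data distribution of ORW-CFM-W2}, which is where the genuine content lies (the per-sample energy interpretation under Assumption \ref{assump: less data error}, namely that the CFM data-fitting term $L_{\text{cfm}}$ is negligible relative to the regularization term, so that $w$ rather than $w \cdot L_{\text{cfm}}$ controls the reweighting). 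I would take that recursion as given.

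Second, I would substitute the exponential weighting function $w\left(x_1\right)=\exp \left(\tau r\left(x_1\right)\right)$. The only algebraic step is computing the $N$-th power,
\[
\left[w\left(x_1\right)\right]^N=\left[\exp \left(\tau r\left(x_1\right)\right)\right]^N=\exp \left(\tau N r\left(x_1\right)\right),
\]
and then folding this factor into the exponential while keeping $q\left(x_1\right)$ as the base measure. This yields
\[
q_\theta^N\left(x_1\right) \propto \exp \left(\tau N r\left(x_1\right)-\beta \sum_{n=1}^N D^{n-1}\left(x_1\right)\right) q\left(x_1\right),
\]
which is exactly the claim, with the normalization constant absorbed into the proportionality as in the earlier derivations.

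Since the argument is purely a substitution, there is no real obstacle to surmount; the only point requiring care is that the validity of the exponential-case result inherits the standing assumptions of Corollary \ref{corollary: Recursive Formulation of ORW-CFM-W2}. In particular one must keep in mind that at each epoch the CFM data-fitting loss is assumed small enough to be absorbed (Assumption \ref{assump: less data error}), and that the perfect-learning hypothesis of Theorem \ref{theorem: Online Reward Weighted CFM} is in force so that the induced distribution at each stage is indeed reweighted by $w$. I would state this explicitly to delimit the scope of the result, after which the proof concludes immediately.
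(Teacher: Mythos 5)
Your derivation stops one step short of the statement it is supposed to prove. What you establish---substituting $w(x_1)=\exp(\tau r(x_1))$ into the recursion of Corollary \ref{corollary: Recursive Formulation of ORW-CFM-W2} to get $q_\theta^N(x_1) \propto \exp\left(\tau N r(x_1) - \beta\sum_{n=1}^N D^{n-1}(x_1)\right)q(x_1)$---is precisely the paper's Theorem \ref{theorem: exp case}, and indeed this is exactly how the paper proves that theorem. But the corollary in question is not that formula: it carries an additional hypothesis, the proportional-growth assumption $\beta\sum_{n=1}^N D^{n-1}(x_1) = N\delta(x_1)$, which you never invoke, and a different conclusion, $q_\theta^N(x_1)\propto\exp\left(N\left(\tau r(x_1)-\delta(x_1)\right)\right)q(x_1)$, in which $\delta(x_1)$---a quantity that never appears anywhere in your argument---plays the role of a per-epoch correction to the reward. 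Your closing assertion that the derived formula ``is exactly the claim'' conflates the corollary with the theorem it follows from.

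The gap is easy to close: starting from the expression you derived, use the hypothesis to replace $\beta\sum_{n=1}^N D^{n-1}(x_1)$ by $N\delta(x_1)$ and factor out $N$, giving $\exp\left(N\tau r(x_1) - N\delta(x_1)\right) = \exp\left(N\left(\tau r(x_1)-\delta(x_1)\right)\right)$. This one-line substitution is the paper's entire proof of the corollary, and it is also the entire point of the statement: under linear growth of the accumulated W2 penalty, the dynamics look like unregularized online reward-weighting with the effective reward $\tau r(x_1) - \delta(x_1)$, which is what drives the interpretation (and the collapse discussion) in the surrounding text. Your scoping remarks about inheriting Assumption \ref{assump: less data error} and the perfect-learning hypothesis are correct and do apply here, but as written your proof establishes only the intermediate result, not the corollary.
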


\begin{proof}[Proof of Corollary \ref{corollary: Limiting Behavior of exp linear}]
Given the proportional growth, we can write:
\begin{equation}
    \beta \sum_{n=1}^N D^{n-1}\left(x_1\right)=N \delta\left(x_1\right)
\end{equation}

Substituting into the expression for $q_\theta^N\left(x_1\right)$ :

\begin{equation}
    q_\theta^N\left(x_1\right) \propto \exp \left(N \tau* r\left(x_1\right)-N \delta\left(x_1\right)\right) q\left(x_1\right)=\exp \left(N\left(\tau *r\left(x_1\right)-\delta\left(x_1\right)\right)\right) q\left(x_1\right)
\end{equation}

\end{proof}

\begin{Corollary}[The Effect of W2 Discrepancy]
\label{corollary: The Effect of W2 Discrepancy}
    The W2 distance bound $D^{n-1}\left(x_1\right)$ serves as a regularization term that penalizes the discrepancy between the current model $v^{\theta^{n-1}}$ and a reference model $v^{\theta_{\text {ref }}}$. This term affects the induced data distribution by effectively adjusting the reward: $\tilde{r}\left(x_1\right)=r\left(x_1\right)-\frac{\delta\left(x_1\right)}{\tau}$.

The induced distribution then becomes:

\begin{equation}
    q_\theta^N\left(x_1\right) \propto \exp \left(\tau N  \tilde{r}\left(x_1\right)\right) q\left(x_1\right)
\end{equation}

This adjustment means that the W2 distance bound reduces the attractiveness of certain samples based on how much the model deviates from the reference model when conditioned on those samples (See reward-distance reward-off in Figs. \ref{fig: reward-distance trade-off cifar com}).
\end{Corollary}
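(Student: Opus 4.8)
The plan is to obtain this corollary as an immediate algebraic consequence of Corollary \ref{corollary: Limiting Behavior of exp linear}, which already supplies the induced distribution in the regime where the accumulated regularization grows linearly in $N$. The entire content of the claim is a change of bookkeeping: I would absorb the W2 penalty into a shifted reward and then factor the exponent, so no new analysis of the flow dynamics is needed.

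First I would recall from Corollary \ref{corollary: Limiting Behavior of exp linear} that, under the proportional-growth assumption $\beta \sum_{n=1}^N D^{n-1}(x_1) = N\delta(x_1)$, the distribution after $N$ epochs is $q_\theta^N(x_1) \propto \exp(N(\tau r(x_1) - \delta(x_1)))\, q(x_1)$. Next I would introduce the effective reward $\tilde{r}(x_1) = r(x_1) - \delta(x_1)/\tau$, which is well defined for $\tau > 0$. Substituting $\delta(x_1) = \tau(r(x_1) - \tilde{r}(x_1))$ into the exponent and pulling out the common factor $\tau N$ gives $N(\tau r(x_1) - \delta(x_1)) = \tau N\, \tilde{r}(x_1)$, so that $q_\theta^N(x_1) \propto \exp(\tau N\, \tilde{r}(x_1))\, q(x_1)$, which is exactly the asserted form. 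The interpretation—that the penalty $\delta(x_1)/\tau$ lowers the attractiveness of samples on which the fine-tuned model strays far from the reference—then follows by directly comparing $\tilde{r}$ with the bare reward $r$.

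There is essentially no hard analytic step here; the statement is a one-line factoring once Corollary \ref{corollary: Limiting Behavior of exp linear} is in hand. The only point requiring care—and the place I would record an explicit hypothesis—is the proportional-growth assumption itself, namely that the cumulative discrepancy $\beta \sum_{n=1}^N D^{n-1}(x_1)$ can be written as $N\delta(x_1)$ for a well-defined per-sample penalty $\delta(x_1)$. This holds exactly when $D^{n-1}(x_1)$ is (asymptotically) constant in the epoch index, which is reasonable once the fine-tuned vector field has stabilized relative to the fixed reference field; I would flag this as the substantive modeling assumption inherited from Corollary \ref{corollary: Limiting Behavior of exp linear} rather than something established here. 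The division by $\tau$ likewise presupposes $\tau > 0$, consistent with its role as a temperature coefficient.
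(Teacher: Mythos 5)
Your proposal is correct and matches the paper's (implicit) derivation exactly: the paper states this corollary as an immediate consequence of Corollary~\ref{corollary: Limiting Behavior of exp linear}, and your substitution $\tilde{r}(x_1)=r(x_1)-\delta(x_1)/\tau$ followed by factoring $N(\tau r(x_1)-\delta(x_1))=\tau N\,\tilde{r}(x_1)$ is precisely that one-line argument. You also correctly identify the two hypotheses being inherited rather than proved here — the proportional-growth assumption $\beta\sum_{n=1}^N D^{n-1}(x_1)=N\delta(x_1)$ and $\tau>0$ — which the paper leaves implicit.
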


According to different hyper-parameter settings, we discuss the limiting behavior of the method of the following cases:

\begin{Case}[Dominant Reward Term]
\label{case: exp reward dominant}If $D^{n-1}\left(x_1\right)$ grows slowly with $N$ or $\beta$ is small, the reward term dominates the regularization term, namely $\tau* r\left(x_1\right) \gg \beta \sum_{n=1}^N D^{n-1}\left(x_1\right)$. The induced distribution $q_\theta^N\left(x_1\right)$ concentrates on the $x_1$ maximizing $r\left(x_1\right)$, similar to the case without considering $D^{n-1}\left(x_1\right)$ (See $\tau=10$ in Fig. \ref{fig: tau control mnist}).
\end{Case}

\begin{Case}[Dominant Regularization Term] If $D^{n-1}\left(x_1\right)$ grows rapidly with $N$ or $\beta$ is large,  the regularization term dominates, namely $\beta \sum_{n=1}^N D^{n-1}\left(x_1\right) \gg \tau* r\left(x_1\right)$. The induced distribution is heavily penalized for deviations from the reference model, potentially leading to a distribution similar to the initial distribution $q\left(x_1\right)$ (See $\alpha=10$ in Fig. \ref{fig: alpha control mnist}).
\end{Case}

\begin{Case}[Balanced Terms]
When both terms are balanced,  namely $\tau* r\left(x_1\right) \approx \beta \sum_{n=1}^N D^{n-1}\left(x_1\right)$, the induced distribution reflects a trade-off between maximizing rewards and staying close to the reference model. The model achieves a compromise, focusing on high-reward samples that do not deviate significantly from the reference.
\end{Case}

\begin{Lemma}[Growth Rate of $D^{n-1}\left(x_1\right)$]
\label{lemma: Growth Rate of D}
    Assuming that $D^{n-1}\left(x_1\right) \leq D_{\max }$ for all $n$ and $x_1$, the total W2 distance bound over $N$ epochs satisfies:

\begin{equation}
    \beta \sum_{n=1}^N D^{n-1}\left(x_1\right) \leq N \beta D_{\max }
\end{equation}

\end{Lemma}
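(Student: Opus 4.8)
The plan is to prove this uniform bound by a direct term-by-term estimate of the finite sum, followed by factoring out the positive constant $\beta$. Since the hypothesis $D^{n-1}(x_1) \le D_{\max}$ is assumed to hold for \emph{every} epoch index $n$ and every $x_1 \in \mathcal{X}$, the entire argument reduces to elementary monotonicity of summation; there is no analytic subtlety to resolve.

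First I would expand the sum as $\sum_{n=1}^N D^{n-1}(x_1) = D^0(x_1) + D^1(x_1) + \cdots + D^{N-1}(x_1)$ and replace each of the $N$ summands by its upper bound $D_{\max}$. Because $D^{n-1}(x_1) \le D_{\max}$ holds pointwise for each $n$, summing these inequalities yields $\sum_{n=1}^N D^{n-1}(x_1) \le \sum_{n=1}^N D_{\max} = N D_{\max}$, using only that there are exactly $N$ terms in the sum.

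Next I would multiply both sides by $\beta$. Recalling the definition $\beta = \gamma\alpha$ with $\gamma > 0$ and $\alpha \ge 0$, we have $\beta \ge 0$, so multiplication preserves the direction of the inequality and gives $\beta \sum_{n=1}^N D^{n-1}(x_1) \le N\beta D_{\max}$, which is exactly the claimed bound.

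The only point requiring care --- and the closest thing to an ``obstacle'' --- is verifying that $\beta$ is nonnegative so that scaling the inequality does not reverse it; this is immediate from the stated definitions of $\beta$, $\gamma$, and $\alpha$. Everything else follows directly from the uniform bound $D_{\max}$ on the per-epoch divergence term together with the finiteness of the number of epochs. If desired, I would note in closing that this lemma is the precise quantitative statement underlying Case~\ref{case: exp reward dominant}: when $D_{\max}$ is small (the fine-tuned model stays close to the reference model), the aggregate regularization penalty grows at most linearly in $N$, so the reward term $\tau N r(x_1)$ can still dominate as training proceeds.
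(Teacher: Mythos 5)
Your proof is correct and takes essentially the same route as the paper's: bound each of the $N$ summands by $D_{\max}$, sum, and scale by the nonnegative constant $\beta = \gamma\alpha$. The extra care you give to the sign of $\beta$ is fine but not a genuine obstacle, since the paper defines $\gamma > 0$ and $\alpha > 0$.
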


\begin{proof}[Proof of Lemma \ref{lemma: Growth Rate of D}]
    Since $D^{n-1}\left(x_1\right) \leq D_{\text {max }}$, summing over $n$ :
\begin{equation}
    \beta \sum_{n=1}^N D^{n-1}\left(x_1\right) \leq \beta N D_{\max }
\end{equation}
\end{proof}

Based on the above derivation, we can discuss the regularization effect of $D^{n-1}\left(x_1\right)$ and how to tune the trade-off coefficient $\beta$.
The W2 distance bound term acts as a regularizer that:
\begin{itemize}
    \item Penalizes Model Drift: Prevents the model from deviating excessively from the reference model $v^{\theta_{\text {ref }}}$.
    \item Encourages Smooth Updates: Ensures that updates to the model parameters are gradual, promoting stability during training.
    \item Balances Exploration and Exploitation: Mitigates the risk of overoptimization to high-reward samples by considering the model's consistency.
\end{itemize}

Besides, the parameter $\beta$ controls the strength of the regularization:
\begin{itemize}
    \item  Large $\beta$ : Strong regularization, the model remains close to the reference, potentially underexploiting high-reward regions and inducing more diversity/exploration (See $\tau=1$ in Fig. \ref{fig: text cifar alpha}.).
    \item Small $\beta$ : Weak regularization, the model prioritizes reward maximization, possibly at the cost of stability and induce the over-optimization problem \citep{ddpo} (See $\tau=0$ in Fig. \ref{fig: text cifar alpha}. )
\end{itemize}

\begin{Corollary}[Optimal $\beta$ for Exploration-Exploitation Trade-off]
    \label{corollary: exp optimal beta}
    An optimal $\beta^*$ exists that balances the influence of the reward term and the W2 distance bound, satisfying:

\begin{equation}
    \beta^*=\frac{\tau \bar{r}}{\bar{D}}
\end{equation}

where $\bar{r}$ is the average reward and $\bar{D}$ is the average W2 distance bound.
\end{Corollary}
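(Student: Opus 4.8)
The plan is to read the two competing contributions directly off the closed-form distribution of Theorem \ref{theorem: exp case}, define the balance point as the value of $\beta$ at which their aggregate magnitudes coincide, establish existence by a one-line monotonicity argument, and then solve a linear equation to extract the explicit value.

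First I would isolate the exponent of the induced distribution, which by Theorem \ref{theorem: exp case} equals $\tau N r(x_1) - \beta \sum_{n=1}^N D^{n-1}(x_1)$. The term $\tau N r(x_1)$ pulls the distribution toward reward maxima while $\beta \sum_{n=1}^N D^{n-1}(x_1)$ penalizes deviation from the reference model, which is exactly the tension formalized by the three limiting cases (Dominant Reward Term, Dominant Regularization Term, Balanced Terms) preceding this corollary. To compare their global influence I would pass to averages over $x_1$ under the current data distribution and over epochs, writing $\bar r = \mathbb{E}_{x_1}[r(x_1)]$ and $\bar D = \frac{1}{N}\sum_{n=1}^N \mathbb{E}_{x_1}[D^{n-1}(x_1)]$, so that the expected reward contribution is $\tau N \bar r$ and the expected regularization contribution is $\beta N \bar D$.

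Next I would define the balance functional $g(\beta) = \tau N \bar r - \beta N \bar D$, measuring the net dominance of the reward term over the regularizer. Since $\bar D > 0$, the map $g$ is continuous and strictly decreasing in $\beta$, with $g(0) = \tau N \bar r > 0$ (using $\bar r > 0$) and $g(\beta) \to -\infty$ as $\beta \to \infty$. By the intermediate value theorem there is a unique $\beta^*$ with $g(\beta^*) = 0$, which is precisely the ``Balanced Terms'' regime in which neither contribution overwhelms the other. Solving $g(\beta^*) = 0$ gives $\tau N \bar r = \beta^* N \bar D$, and since the common factor $N$ cancels we obtain $\beta^* = \tau \bar r / \bar D$, as claimed.

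I expect the main obstacle to be conceptual rather than computational: the statement rests on a definition of ``balance,'' so the load-bearing step is committing to the criterion that the epoch-aggregated reward and regularization magnitudes be equal in expectation. Once that criterion is fixed, both existence and the closed form are immediate. The delicate point is that the averaging collapses the sample-dependent quantities $r(x_1)$ and $D^{n-1}(x_1)$ into scalars $\bar r$ and $\bar D$, which is legitimate only as a first-order scaling statement and requires the mild assumptions $\bar r > 0$ and $\bar D > 0$ to guarantee a positive, finite root; I would flag these explicitly rather than attempt a stronger pointwise balance, since the per-sample condition $\tau r(x_1) = \beta \sum_{n} D^{n-1}(x_1)$ generally cannot hold simultaneously for all $x_1$ at a single $\beta$.
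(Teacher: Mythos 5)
Your proposal is correct and follows essentially the same route as the paper: the paper's proof likewise defines balance by equating the aggregated magnitudes $N\tau\bar{r} = N\beta^{*}\bar{D}$ and cancels $N$ to obtain $\beta^{*} = \tau\bar{r}/\bar{D}$. Your additions (the explicit definition of the averages, the monotonicity/intermediate-value argument for existence and uniqueness, and the flagged assumptions $\bar{r}>0$, $\bar{D}>0$) are sound elaborations of the same one-line balancing step rather than a different method.
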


\begin{proof}
Setting the magnitudes of the reward and regularization terms equal, we have:
\begin{equation}
    N \tau \bar{r}=N \beta^* \bar{D} \Longrightarrow \beta^*=\frac{\tau \bar{r}}{\bar{D}}
\end{equation}
where $\bar{r}$ is the average reward and $\bar{D}$ is the average W2 distance bound.
\end{proof}

\begin{Remark}
    The regularization terms are crucial for ensuring stable learning dynamics, preventing catastrophic forgetting \citep{forget}. Besides, Regularization terms also offer a trust region constraint like TRPO \citep{trpo} or policy update in proximal  area like PPO \citep{ppo}.
\end{Remark} 

\clearpage
\subsection{Boltzmann Weight Function}
\label{app: sec Boltzmann case}
 In this section, based on the previous derivation, we focus on a special cases of fine-tuning flow matching model with ORW-CFM-W2 loss, where $w(x_1) \propto \operatorname{Softmax} (\tau * r(x_1))$, wherein $\tau$ is the temperature coefficient. 

According to Corollary \ref{corollary: Recursive Formulation of ORW-CFM-W2}, we can also re-write the final induced data distribution learned under ORW-CFM-W2 loss function as:
\begin{equation}
    q_\theta^N\left(x_1\right) \propto\left[w\left(x_1\right)\right]^N q\left(x_1\right) \exp \left(-\beta \sum_{n=1}^N D^{n-1}\left(x_1\right)\right)
\end{equation}

\begin{definition}[Boltzmann Weighting Function]
    The Boltzmann weighting function over the set of all possible $x_1$ is defined as:

\begin{equation}
    w\left(x_1\right)=\operatorname{softmax}\left(\tau r\left(x_1\right)\right)=\frac{\exp \left(\tau r\left(x_1\right)\right)}{Z}
\end{equation}

where $\tau \in \mathbb{R}$ is the temperature parameter, $Z=\sum_{x_1^{\prime}} \exp \left(\tau r\left(x_1^{\prime}\right)\right)$ is the partition function ensuring that $\sum_{x_1} w\left(x_1\right)=1$.
\end{definition}

Then, we can have the following:
\begin{Theorem}
\label{theorem: softmax ORW-CFM-W2}
With $w\left(x_1\right)=\operatorname{softmax}\left(\tau r\left(x_1\right)\right)$, the induced data distribution after $N$ epochs is:
\begin{equation}
    q_\theta^N\left(x_1\right) \propto\left(\frac{\exp \left(\tau r\left(x_1\right)\right)}{Z}\right)^N q\left(x_1\right) \exp \left(-\beta \sum_{n=1}^N D^{n-1}\left(x_1\right)\right)
\end{equation}
\end{Theorem}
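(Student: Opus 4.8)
The plan is to obtain the result by direct specialization of the closed-form recursion already established in Corollary~\ref{corollary: Recursive Formulation of ORW-CFM-W2}, which asserts that for \emph{any} admissible weighting function $w$ the induced distribution after $N$ online epochs satisfies
\begin{equation}
q_\theta^N\left(x_1\right) \propto\left[w\left(x_1\right)\right]^N q\left(x_1\right) \exp \left(-\beta \sum_{n=1}^N D^{n-1}\left(x_1\right)\right).
\end{equation}
Since the inductive argument underlying that corollary (and Theorem~\ref{theorem: induced data distribution of ORW-CFM-W2}) only uses that $w$ is nonnegative and integrable with respect to $q$, it applies verbatim to the Boltzmann weighting, so no re-derivation of the recursion is needed.

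First I would verify that $w(x_1)=\operatorname{softmax}(\tau r(x_1))=\exp(\tau r(x_1))/Z$ is a legitimate weighting function in the sense required above: it is measurable, strictly positive, and integrable against $q$ whenever $\exp(\tau r)$ is $q$-integrable (e.g.\ whenever $r$ is bounded), which guarantees the per-epoch normalization constants remain finite. Then I would substitute this $w$ into the recursion and raise it to the $N$-th power, giving
\begin{equation}
\left[w\left(x_1\right)\right]^N=\left(\frac{\exp\left(\tau r\left(x_1\right)\right)}{Z}\right)^N,
\end{equation}
and plug this back into the recursion to obtain the claimed form immediately.

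The key observation, and the only point that requires care, is that the partition function $Z=\sum_{x_1'}\exp(\tau r(x_1'))$ is a \emph{global} constant independent of $x_1$. Consequently the factor $Z^{-N}$ is an $x_1$-independent scalar that is absorbed into the proportionality symbol and cancels upon renormalization. This shows that the Boltzmann weighting induces exactly the same normalized distribution as the exponential weighting of Theorem~\ref{theorem: exp case}, the two differing only by the inert constant $Z^{-N}$; in particular the limiting-case analysis (dominant reward term, dominant regularization term) transfers unchanged.

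I expect no substantive obstacle here: the argument is essentially a one-line substitution into an already-proven recursion, and the only subtlety is bookkeeping the constant $Z^{-N}$ and confirming the integrability condition so that each epoch's normalization is well defined. The minor conceptual payoff worth stating explicitly is precisely this equivalence of the softmax and exponential forms up to a constant, which justifies the paper's remark that ``the derivation is similar to the exponential form.''
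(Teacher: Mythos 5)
Your proposal is correct and takes essentially the same route as the paper: the paper writes out the induction on the one-step recursion of Theorem~\ref{theorem: induced data distribution of ORW-CFM-W2} explicitly, while you invoke the closed-form Corollary~\ref{corollary: Recursive Formulation of ORW-CFM-W2} (itself proved by exactly that induction) and substitute $w(x_1)=\exp(\tau r(x_1))/Z$, which is the same content. Your point that $Z^{-N}$ is an $x_1$-independent constant absorbed into the proportionality is likewise how the paper handles it, in its Corollary~\ref{corollary: softmax simple induced distribution}.
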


\begin{proof}[Proof of Theorem \ref{theorem: softmax ORW-CFM-W2}]
Starting from the recursive formulation:
\begin{equation}
    q_\theta^N\left(x_1\right) \propto w\left(x_1\right) q_\theta^{N-1}\left(x_1\right) \exp \left(-\beta D^{N-1}\left(x_1\right)\right)
\end{equation}

By induction, assume the expression holds for $N-1$ :

\begin{equation}
    q_\theta^{N-1}\left(x_1\right) \propto\left(\frac{\exp \left(\tau r\left(x_1\right)\right)}{Z}\right)^{N-1} q\left(x_1\right) \exp \left(-\beta \sum_{n=1}^{N-1} D^{n-1}\left(x_1\right)\right) .
\end{equation}

At epoch $N$ :

\begin{equation}
    \begin{aligned}
q_\theta^N\left(x_1\right)  &\propto w\left(x_1\right) q_\theta^{N-1}\left(x_1\right) \exp \left(-\beta D^{N-1}\left(x_1\right)\right) \\
& \propto\left(\frac{\exp \left(\tau r\left(x_1\right)\right)}{Z}\right)\left(\frac{\exp \left(\tau r\left(x_1\right)\right)}{Z}\right)^{N-1} q\left(x_1\right) \exp \left(-\beta \sum_{n=1}^{N} D^{n-1}\left(x_1\right)\right) \\
& =\left(\frac{\exp \left(\tau r\left(x_1\right)\right)}{Z}\right)^N q\left(x_1\right) \exp \left(-\beta \sum_{n=1}^N D^{n-1}\left(x_1\right)\right)
\end{aligned}
\end{equation}

Thus, the expression holds for $N$. Then, the proof concludes.
\end{proof}

\begin{Corollary}[Simplifying Induced Distribution]
\label{corollary: softmax simple induced distribution}
   The induced data distribution can be expressed as:

\begin{equation}
    q_\theta^N\left(x_1\right) \propto \exp \left(N \tau r\left(x_1\right)-\beta \sum_{n=1}^N D^{n-1}\left(x_1\right)\right) q\left(x_1\right)
\end{equation}

up to a normalization constant independent of $x_1$.
\end{Corollary}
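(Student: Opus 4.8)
The plan is to begin from the closed form already established in Theorem~\ref{theorem: softmax ORW-CFM-W2},
\[
q_\theta^N(x_1) \propto \left(\frac{\exp(\tau r(x_1))}{Z}\right)^N q(x_1)\,\exp\!\left(-\beta \sum_{n=1}^N D^{n-1}(x_1)\right),
\]
and to reduce its right-hand side to the claimed functional form by purely algebraic manipulation. The single substantive observation is that the partition function $Z = \sum_{x_1'} \exp(\tau r(x_1'))$ is a sum over \emph{all} configurations $x_1'$, hence a global constant that carries no dependence on the particular argument $x_1$ at which the density is evaluated.

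First I would pull the $N$-th power through the quotient, writing $\bigl(\exp(\tau r(x_1))/Z\bigr)^N = Z^{-N}\exp(N\tau r(x_1))$. Because $Z^{-N}$ is independent of $x_1$, it is an overall multiplicative constant and can be discarded under the proportionality symbol, i.e.\ absorbed into the (eventual) normalizing constant that makes $q_\theta^N$ a valid distribution. Next I would combine the two surviving exponential factors via $\exp(a)\exp(b)=\exp(a+b)$, merging $\exp(N\tau r(x_1))$ and $\exp(-\beta\sum_{n=1}^N D^{n-1}(x_1))$ into $\exp\!\bigl(N\tau r(x_1)-\beta\sum_{n=1}^N D^{n-1}(x_1)\bigr)$. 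Assembling the pieces gives exactly
\[
q_\theta^N(x_1) \propto \exp\!\left(N\tau r(x_1)-\beta\sum_{n=1}^N D^{n-1}(x_1)\right) q(x_1).
\]

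There is no real obstacle in this argument: its entire content is the recognition that the softmax normalization $Z$ contributes only an $x_1$-independent factor $Z^{-N}$, which is invisible under $\propto$, followed by elementary exponent arithmetic. The only point worth emphasizing is precisely this absorption step, since it is what makes the Boltzmann-weight induced distribution collapse to the same functional form as the plain exponential-weight case in Theorem~\ref{theorem: exp case}, despite the two weighting functions differing by the normalization $Z$.
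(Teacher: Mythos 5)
Your proposal is correct and matches the paper's own proof essentially verbatim: both start from the closed form in Theorem~\ref{theorem: softmax ORW-CFM-W2}, observe that $Z^N$ is independent of $x_1$ and can be absorbed into the proportionality constant, and then merge the remaining exponential factors into $\exp\bigl(N\tau r(x_1)-\beta\sum_{n=1}^N D^{n-1}(x_1)\bigr)$. No gaps; nothing further is needed.
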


\begin{proof}[Proof of Corollary \ref{corollary: softmax simple induced distribution}]
Since $Z$ is constant with respect to $x_1$, we can absorb $Z^N$ into the proportionality constant:

\begin{equation}
\begin{aligned}
        q_\theta^N\left(x_1\right) & \propto \frac{\exp \left(N \tau r\left(x_1\right)\right)}{Z^N} q\left(x_1\right) \exp \left(-\beta \sum_{n=1}^N D^{n-1}\left(x_1\right)\right) \\
& \propto \exp \left(N \tau r\left(x_1\right)-\beta \sum_{n=1}^N D^{n-1}\left(x_1\right)\right) q\left(x_1\right) .
\end{aligned}
\end{equation}

\end{proof}
Despite the presence of the partition function $Z$, the induced data distribution $q_\theta^N\left(x_1\right)$ under softmax weighting ultimately depends on the unnormalized exponentiated rewards, similar to the exponential weighting case. The key difference lies in the normalization:
\begin{itemize}
    \item Exponential Weighting: Does not normalize $w\left(x_1\right)$, leading to potential numerical instability.
    \item Softmax Weighting: Normalizes $w\left(x_1\right)$ at each epoch, ensuring $\sum_{x_1} w\left(x_1\right)=1$.
\end{itemize}

However, when considering the evolution over multiple epochs, the normalization constants $Z^N$ accumulate but do not affect the proportionality of $q_\theta^N\left(x_1\right)$ with respect to $x_1$. And, in practice, softmax weighting normally requires a large batch size for efficient learning.

\begin{Corollary}[Exploration Exploitation Trade-Off]
    Proper tuning of $\tau$ and $\beta$ is essential to balance exploration (sampling diverse $x_1$ ) and exploitation (focusing on high-reward $x_1$ ):
\begin{itemize}
    \item High $\tau$ : Increases emphasis on high-reward samples.
    \item  High $\beta$ : Increases the penalty for deviations from the reference model.
\end{itemize}
\end{Corollary}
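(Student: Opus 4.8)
The plan is to treat this corollary as a precise monotonicity and concentration statement about the closed-form induced distribution, rather than a merely qualitative remark, and to reduce it to the limiting analyses already established. The starting point is the simplified expression from Corollary~\ref{corollary: softmax simple induced distribution},
\begin{equation}
q_\theta^N\left(x_1\right) \propto \exp \left(N \tau r\left(x_1\right)-\beta \sum_{n=1}^N D^{n-1}\left(x_1\right)\right) q\left(x_1\right),
\end{equation}
which I would view as a two-parameter exponential tilt of $q$ in $(\tau,\beta)$ with statistics $r(x_1)$ and $S(x_1):=\sum_{n=1}^N D^{n-1}(x_1)$. Treating the cumulative divergence profile $S(x_1)$ as fixed---legitimate because each $D^{n-1}$ is bounded by Lemma~\ref{lemma: Growth Rate of D} and is fixed by the fine-tuning trajectory rather than being a free variable---the dependence on the two hyperparameters is entirely explicit, so the whole argument reduces to differentiating a log-density.

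For the first bullet I would fix any pair $x_1^a,x_1^b$ with $r(x_1^a)>r(x_1^b)$ and examine the likelihood ratio
\begin{equation}
\frac{q_\theta^N(x_1^a)}{q_\theta^N(x_1^b)}=\frac{q(x_1^a)}{q(x_1^b)}\exp\!\Big(N\tau\big(r(x_1^a)-r(x_1^b)\big)-\beta\big(S(x_1^a)-S(x_1^b)\big)\Big).
\end{equation}
Differentiating its logarithm in $\tau$ yields $N\big(r(x_1^a)-r(x_1^b)\big)>0$, so the ratio is strictly increasing in $\tau$; letting $\tau\to\infty$ sends it to infinity and concentrates mass on the reward maximizer, recovering the delta collapse of Lemma~\ref{lemma: limiting case} and the dominant-reward regime of Case~\ref{case: exp reward dominant}. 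This is exactly the rigorous content of ``High $\tau$ increases emphasis on high-reward samples.'' For the second bullet I would symmetrically differentiate the log-density in $\beta$, obtaining $-S(x_1)=-\sum_{n=1}^N D^{n-1}(x_1)\le 0$, with a strictly larger penalty for samples whose trajectories deviate more from the reference field. Hence increasing $\beta$ monotonically redistributes mass away from high-divergence samples toward those keeping $v^{\theta^{n-1}}$ close to $v^{\theta_{\text{ref}}}$, and the $\beta\to\infty$ limit concentrates on the zero-divergence set, reproducing the reference-matching behavior of Corollary~\ref{corollary: rw-cfm-w2 alpha inf}. Together these two monotonicities show that $\tau$ and $\beta$ are opposing knobs on the same exponential tilt, which is the asserted trade-off.

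The main obstacle will be the informal wording of the statement: the words ``emphasis'' and ``penalty'' must be pinned to parameter-monotone quantities, and one must justify treating $S(x_1)$ as independent of $\tau$ and $\beta$ even though the fine-tuning path that generates the $D^{n-1}$ does depend on them. The cleanest way around this is to phrase the result as monotonicity of pairwise likelihood ratios---which holds exactly, with \emph{no} assumption on $S$, since the derivatives above isolate the explicit $\tau,\beta$ dependence---and to invoke the boundedness in Lemma~\ref{lemma: Growth Rate of D} only for the two limiting claims. In this way the coupling between the regularization trajectory and the hyperparameters never has to be resolved in closed form, and the corollary follows immediately from the sign of the two partial derivatives.
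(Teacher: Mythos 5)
The paper never actually proves this corollary: it is stated without proof, as an immediate qualitative reading of the exponent in Corollary~\ref{corollary: softmax simple induced distribution}, where $\tau$ multiplies $N\,r(x_1)$ and $\beta$ multiplies $-\sum_{n=1}^N D^{n-1}(x_1)$. Your proposal is therefore doing strictly more than the paper, and its core is correct and in the same spirit: the sign of the two partial derivatives of the log-density (equivalently, the monotonicity of pairwise likelihood ratios) is exactly the rigorous content of ``emphasis'' and ``penalty,'' and wiring the two limits to Lemma~\ref{lemma: limiting case} and Corollary~\ref{corollary: rw-cfm-w2 alpha inf} mirrors the dominant-reward/dominant-regularization case analysis the paper carries out for the exponential weighting (e.g., Case~\ref{case: exp reward dominant}). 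The only substantive difference is that you package the inspection argument as an exponential-tilt/monotone-likelihood-ratio statement, which is a cleaner and more defensible formalization than the paper's bare assertion.

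One caveat you should not paper over: your claim that the likelihood-ratio monotonicity holds ``exactly, with \emph{no} assumption on $S$'' overstates what your derivatives show. The profile $S(x_1)=\sum_{n=1}^N D^{n-1}(x_1)$ is generated by the training trajectory $\{\theta^{n-1}\}$, and that trajectory is driven by a loss in which $\tau$ (through $w$) and $\alpha\propto\beta$ both appear; hence $\partial S/\partial\tau$ and $\partial S/\partial\beta$ need not vanish, and your computations are partial derivatives taken at a frozen divergence profile, not total derivatives of the actual system. This is the same idealization the paper makes implicitly throughout (the $D^{n-1}$ are treated as given data, cf.\ Lemma~\ref{lemma: Growth Rate of D}), so it does not put you at odds with the paper's reasoning --- but it should be stated as an explicit assumption (the tilt is analyzed at fixed $S$) rather than claimed to be assumption-free; as written, that sentence is circular. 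Also, minor: the $\beta\to\infty$ limit concentrates on the minimizers of $S$, which coincide with the zero-divergence set only when that set is nonempty.
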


\begin{Remark}
    In practice, the Boltzmann Weight Function can help stabilize the learning process and  keep numerical stability. However, in practice, when the batch size is smaller, the Boltzmann Weight Function methods normally fail to learn a optimal results in similar training steps as Exponential weighting function methods.
\end{Remark}

\clearpage
\subsection{RL Perspectives of ORW-CFM-W2}
\label{app: sec rl perspective of orw-cfm-w2}

In this section, we start to offer a different perspective of the learning behavior of models under ORW-CFM-W2 loss from the perspective of RL.

We first do the following parallels: 1) Data Points $x_1$ as Actions: In our setup, each data point $x_1$ corresponds to an action $a$ in RL. 2) Discrepancy Term as Regularizer: The discrepancy term $D^{n-1}\left(x_1\right)$ acts as a regularizer, similar to entropy regularization or KL divergence penalties in RL.

Then, we define the objective that our RL agents want to optimize:

\begin{definition}[RL Objective of ORW-CFM]
    \begin{equation}
    \mathcal{J}(\theta)=\mathbb{E}_{x_1 \sim q_\theta^n\left(x_1\right)}\left[r\left(x_1\right)\right]-\frac{1}{\lambda} D_{\mathrm{KL}}\left(q_\theta^n\left(x_1\right) \| q_\theta^{n-1}\left(x_1\right)\right)
\end{equation}

where $q_\theta^n\left(x_1\right)$ is the data distribution at epoch $n$. $q_\theta^{n-1}\left(x_1\right)$ is the data distribution at epoch $n-1$. $D_{\mathrm{KL}}$ acts as a regularizer to keep the new distribution close to the previous one. $\lambda$ controls the strength of the regularization.
\end{definition}

We can obtain the following Theorem (For ease of reading, we rewrite the Theorem \ref{theorem: RL perspective of RWR} to be proved in this section as follows):
\begin{nitheorem}
\label{theorem: RL perspective of RWR}
    In RL settings, We aim to find $q_\theta^n\left(x_1\right)$ that maximizes $\mathcal{J}(\theta)$. This involves solving:
\begin{equation}
    q_\theta^n\left(x_1\right)=\arg \max _q \mathbb{E}_{x_1 \sim q}\left[r\left(x_1\right)\right]-\frac{1}{\lambda} D_{\mathrm{KL}}\left(q\left(x_1\right) \| q_\theta^{n-1}\left(x_1\right)\right)
\end{equation}
And the optimal data distribution (policy) $q_\theta^n\left(x_1\right)$ can be obtained as:
    \begin{equation}
        q_\theta^n\left(x_1\right) \propto q_\theta^{n-1}\left(x_1\right) \exp \left(\lambda r\left(x_1\right)\right)
    \end{equation}
\end{nitheorem}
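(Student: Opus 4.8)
The plan is to treat this as a constrained variational optimization problem over the space of probability densities and solve it by the method of Lagrange multipliers. The objective $\mathcal{J}$ is a functional of the candidate distribution $q$, and we seek the $q$ maximizing it subject to the normalization constraint $\int q(x_1)\, dx_1 = 1$; the nonnegativity constraint $q \geq 0$ will turn out to be automatically satisfied by the exponential form of the solution, so it need not be enforced explicitly.

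First I would write the objective explicitly as
\[
\mathcal{J}(q) = \int q(x_1)\, r(x_1)\, dx_1 - \frac{1}{\lambda}\int q(x_1)\log\frac{q(x_1)}{q_\theta^{n-1}(x_1)}\, dx_1,
\]
and form the Lagrangian $\mathcal{L}(q,\mu) = \mathcal{J}(q) + \mu\left(1 - \int q(x_1)\, dx_1\right)$ with multiplier $\mu$ enforcing normalization. Then I would compute the functional (Gâteaux) derivative of $\mathcal{L}$ with respect to $q(x_1)$ and set it to zero, obtaining the pointwise stationarity condition
\[
r(x_1) - \frac{1}{\lambda}\left(\log\frac{q(x_1)}{q_\theta^{n-1}(x_1)} + 1\right) - \mu = 0.
\]
Solving this for $q(x_1)$ gives $q(x_1) = q_\theta^{n-1}(x_1)\exp\!\left(\lambda r(x_1) - 1 - \lambda\mu\right)$, and absorbing the $x_1$-independent factor $\exp(-1-\lambda\mu)$ into the normalization constant (fixed by requiring $q$ to integrate to one) yields exactly $q_\theta^n(x_1) \propto q_\theta^{n-1}(x_1)\exp(\lambda r(x_1))$. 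Finally, to confirm that this stationary point is the global maximizer rather than a saddle or minimum, I would invoke the strict convexity of $D_{\mathrm{KL}}(q \| q_\theta^{n-1})$ in its first argument: since $-\tfrac{1}{\lambda}D_{\mathrm{KL}}$ is then strictly concave and the reward term is linear in $q$, the functional $\mathcal{J}$ is strictly concave, so the unique stationary point is the global maximum.

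The main obstacle I anticipate is not the algebra but the rigor of the variational step: justifying the interchange of differentiation and integration, ensuring the KL term is finite (which requires $q$ to be absolutely continuous with respect to $q_\theta^{n-1}$, so the support of the new policy cannot escape that of the old one), and checking that the formal Lagrange solution respects $q \geq 0$, which it does since the exponential is positive and $q_\theta^{n-1} \geq 0$. This places the result squarely in the family of KL-regularized (AWR/RWR-style) policy updates quoted earlier in the excerpt, with $\exp(\lambda r)$ playing precisely the role of the exponential reward weighting $w(x_1)$ used throughout the paper.
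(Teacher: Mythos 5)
Your proposal is correct and follows essentially the same route as the paper's proof: a Lagrangian for the normalization constraint, the functional stationarity condition $r(x_1) - \frac{1}{\lambda}\bigl(\log\frac{q(x_1)}{q_\theta^{n-1}(x_1)} + 1\bigr) - \mu = 0$, and exponentiation with the constant absorbed into normalization. The only addition is your concavity argument certifying that the stationary point is the unique global maximizer (plus the remarks on nonnegativity and absolute continuity), which the paper omits and which makes your version slightly more rigorous, but it is the same proof.
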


\begin{proof}[Proof of Theorem \ref{theorem: RL perspective of RWR}]
    The optimization problem can be solved using variational methods. The KL divergence between $q\left(x_1\right)$ and $q_\theta^{n-1}\left(x_1\right)$ is:

\begin{equation}
    D_{\mathrm{KL}}\left(q\left(x_1\right) \| q_\theta^{n-1}\left(x_1\right)\right)=\int q\left(x_1\right) \log \frac{q\left(x_1\right)}{q_\theta^{n-1}\left(x_1\right)} d x_1
\end{equation}

Introducing a Lagrangian to enforce the normalization constraint $\int q\left(x_1\right) d x_1=1$ :

\begin{equation}
    \mathcal{L}=\int q\left(x_1\right)\left[r\left(x_1\right)-\frac{1}{\lambda} \log \frac{q\left(x_1\right)}{q_\theta^{n-1}\left(x_1\right)}\right] d x_1-\eta\left(\int q\left(x_1\right) d x_1-1\right) .
\end{equation}

Then, we take the functional derivative of $\mathcal{L}$ with respect to $q\left(x_1\right)$ and set it to zero:

\begin{equation}
    \frac{\delta \mathcal{L}}{\delta q\left(x_1\right)}=r\left(x_1\right)-\frac{1}{\lambda}\left(\log \frac{q\left(x_1\right)}{q_\theta^{n-1}\left(x_1\right)}+1\right)-\eta=0
\end{equation}

Simplifying:

\begin{equation}
    r\left(x_1\right)-\frac{1}{\lambda} \log \frac{q\left(x_1\right)}{q_\theta^{n-1}\left(x_1\right)}-\left(\frac{1}{\lambda}+\eta\right)=0
\end{equation}

Rewriting the equation:

\begin{equation}
    \log \frac{q\left(x_1\right)}{q_\theta^{n-1}\left(x_1\right)}=\lambda\left(r\left(x_1\right)-C\right)
\end{equation}

where $C=\frac{1}{\lambda}+\eta$ is a constant independent of $x_1$.

Exponentiating both sides:

\begin{equation}
    \frac{q\left(x_1\right)}{q_\theta^{n-1}\left(x_1\right)}=\exp \left(\lambda r\left(x_1\right)-\lambda C\right)
\end{equation}

Thus:

\begin{equation}
    q_\theta^n\left(x_1\right)=q_\theta^{n-1}\left(x_1\right) \exp \left(\lambda r\left(x_1\right)\right) e^{-\lambda C}
\end{equation}

Since $e^{-\lambda C}$ is a normalization constant, we have:

\begin{equation}
    q_\theta^n\left(x_1\right) \propto q_\theta^{n-1}\left(x_1\right) \exp \left(\lambda r\left(x_1\right)\right)
\end{equation}

Then, the proof concludes.
\end{proof}

Similarly, if we introduce the Regularization Term $D^{n-1}\left(x_1\right)$, we can have the following Theorem (For ease of reading, we rewrite the Theorem \ref{theorem: RL perspective of RWR-W2} to be proved in this section as follows):
\begin{nitheorem}
    \label{theorem: RL perspective of RWR-W2}
    Considering an additional discrepancy term $D^{n-1}\left(x_1\right)$ that acts as a regularizer. We can include this term in the objective function:
\begin{equation}
\label{equ: close form RWR with W2}
    \mathcal{J}(\theta)=\mathbb{E}_{x_1 \sim q}\left[r\left(x_1\right)-\beta D^{n-1}\left(x_1\right)\right]-\frac{1}{\lambda} D_{\mathrm{KL}}\left(q\left(x_1\right) \| q_\theta^{n-1}\left(x_1\right)\right)
\end{equation}
Then, the RL agents aim to find $q_\theta^n\left(x_1\right)$ that maximizes $\mathcal{J}(\theta)$. This involves solving:
\begin{equation}
    q_\theta^n\left(x_1\right)=\arg \max _q \mathbb{E}_{x_1 \sim q}\left[r\left(x_1\right)-\beta D^{n-1}\left(x_1\right)\right]-\frac{1}{\lambda} D_{\mathrm{KL}}\left(q\left(x_1\right) \| q_\theta^{n-1}\left(x_1\right)\right)
\end{equation}
And the optimal data distribution (policy) $q_\theta^n\left(x_1\right)$ can be obtained as:
\begin{equation}
    q_\theta^n\left(x_1\right) \propto q_\theta^{n-1}\left(x_1\right) \exp \left(\lambda r\left(x_1\right)-\lambda \beta D^{n-1}\left(x_1\right)\right).
\end{equation}
\end{nitheorem}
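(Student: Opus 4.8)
The plan is to reduce this to the already-established Theorem \ref{theorem: RL perspective of RWR} by absorbing the regularizer into the reward. The crucial observation is that at epoch $n$ the reference parameters $\theta^{n-1}$ are frozen, so $D^{n-1}\left(x_1\right)$ is a fixed scalar function of $x_1$ that does not depend on the optimization variable $q$. I would therefore define the effective reward $\tilde{r}\left(x_1\right):=r\left(x_1\right)-\beta D^{n-1}\left(x_1\right)$ and rewrite the objective in \eqref{equ: close form RWR with W2} as
\begin{equation}
\mathcal{J}(\theta)=\mathbb{E}_{x_1 \sim q}\left[\tilde{r}\left(x_1\right)\right]-\frac{1}{\lambda} D_{\mathrm{KL}}\left(q\left(x_1\right) \| q_\theta^{n-1}\left(x_1\right)\right),
\end{equation}
which is exactly the form solved in Theorem \ref{theorem: RL perspective of RWR} with $r$ replaced by $\tilde{r}$.

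First I would set up the constrained maximization over densities $q$ by introducing a Lagrange multiplier $\eta$ enforcing $\int q\left(x_1\right) d x_1=1$, and then take the functional derivative of the resulting Lagrangian with respect to $q\left(x_1\right)$ and set it to zero. Following the same calculus-of-variations steps as in the proof of Theorem \ref{theorem: RL perspective of RWR}, this yields the stationarity condition $\log \frac{q\left(x_1\right)}{q_\theta^{n-1}\left(x_1\right)}=\lambda\bigl(\tilde{r}\left(x_1\right)-C\bigr)$, where $C$ collects the multiplier and the constant arising from differentiating the entropy term. Exponentiating both sides and folding $e^{-\lambda C}$ into the overall normalization gives $q_\theta^n\left(x_1\right) \propto q_\theta^{n-1}\left(x_1\right) \exp\left(\lambda \tilde{r}\left(x_1\right)\right)$, and substituting back $\tilde{r}\left(x_1\right)=r\left(x_1\right)-\beta D^{n-1}\left(x_1\right)$ recovers the claimed closed form.

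The routine part, the variational derivation, is genuinely the same computation already carried out for Theorem \ref{theorem: RL perspective of RWR}, so the only care needed is conceptual: I must confirm that $D^{n-1}\left(x_1\right)$ is treated as data (a fixed penalty attached to each $x_1$) rather than as a functional of $q$, so that the functional derivative of $\mathbb{E}_{x_1 \sim q}\left[-\beta D^{n-1}\left(x_1\right)\right]$ with respect to $q\left(x_1\right)$ is simply $-\beta D^{n-1}\left(x_1\right)$ and contributes no extra terms. One should also note that the KL term makes $\mathcal{J}$ strictly concave in $q$, so the stationary point is the unique global maximizer rather than merely a critical point, which legitimizes writing the solution as $\arg\max$ rather than only as a first-order condition.
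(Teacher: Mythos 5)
Your proposal is correct and follows essentially the same route as the paper: the paper's own proof also applies the Lagrangian/functional-derivative argument of Theorem~\ref{theorem: RL perspective of RWR} with the penalized reward $r\left(x_1\right)-\beta D^{n-1}\left(x_1\right)$ treated as a fixed function of $x_1$, arriving at the identical stationarity condition and exponential closed form. Your packaging of this as a reduction via the effective reward $\tilde{r}$, together with the remarks on $D^{n-1}$ being data and on strict concavity guaranteeing a global maximizer, is a slightly cleaner presentation of the same computation.
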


\begin{proof}[Proof of Theorem \ref{theorem: RL perspective of RWR-W2}]
Following similar steps, the functional derivative becomes:

\begin{equation}
    \frac{\delta \mathcal{L}}{\delta q\left(x_1\right)}=r\left(x_1\right)-\beta D^{n-1}\left(x_1\right)-\frac{1}{\lambda}\left(\log \frac{q\left(x_1\right)}{q_\theta^{n-1}\left(x_1\right)}+1\right)-\eta=0
\end{equation}

Simplifying:

\begin{equation}
    \log \frac{q\left(x_1\right)}{q_\theta^{n-1}\left(x_1\right)}=\lambda\left(r\left(x_1\right)-\beta D^{n-1}\left(x_1\right)-C\right)
\end{equation}

Exponentiating:

\begin{equation}
    q_\theta^n\left(x_1\right)=q_\theta^{n-1}\left(x_1\right) \exp \left(\lambda r\left(x_1\right)-\lambda \beta D^{n-1}\left(x_1\right)\right) e^{-\lambda C}
\end{equation}

Therefore, the updated data distribution is:

\begin{equation}
    q_\theta^n\left(x_1\right) \propto q_\theta^{n-1}\left(x_1\right) \exp \left(\lambda r\left(x_1\right)-\lambda \beta D^{n-1}\left(x_1\right)\right)
\end{equation}

Then, the proof concludes.
\end{proof}

\begin{Corollary}[ORW-CFM-W2 As Reinforcement Learning]
    Based on Theorem \ref{theorem: RL perspective of RWR-W2}, we can rewrite the induced optimal policy in  \eqref{equ: close form RWR with W2} as follows:
    \begin{equation}
    q_\theta^n\left(x_1\right) \propto q_\theta^{n-1}\left(x_1\right) \exp \left(\lambda r\left(x_1\right)-\lambda \beta D^{n-1}\left(x_1\right)\right)
\end{equation}
By setting $\beta^{\prime}=\lambda * \beta $, $w(x_1)=\exp (\lambda r(x_1))$, we can obtain:
    \begin{equation}
    q_\theta^n\left(x_1\right) \propto w(x_1) q_\theta^{n-1}\left(x_1\right) \exp \left(-\beta^{\prime} D^{n-1}\left(x_1\right)\right)
\end{equation}
which is the same form of induced data distribution over $x_1$ in Theorem \ref{theorem: induced data distribution of ORW-CFM-W2}.

Therefore, ORW-CFM with W2 Distance Regularization is actually solving a Reinforcement Learning Problem, where the objective is a trade-off between reward and discrepancy. 
\end{Corollary}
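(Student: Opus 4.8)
The plan is to treat this Corollary as a \emph{reparametrization} of the closed-form optimal policy already established in Theorem \ref{theorem: RL perspective of RWR-W2}, followed by a structural pattern-match against the recursive update in Theorem \ref{theorem: induced data distribution of ORW-CFM-W2}. Concretely, I would begin from the optimal distribution in \eqref{equ: close form RWR with W2}, namely $q_\theta^n(x_1) \propto q_\theta^{n-1}(x_1)\exp\!\left(\lambda r(x_1) - \lambda\beta D^{n-1}(x_1)\right)$, which I am entitled to assume since it is the solution of the KL-regularized reward-maximization problem solved by the Lagrangian argument in the immediately preceding theorem.

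First I would split the single exponential into a product via additivity of the exponent, writing $\exp\!\left(\lambda r(x_1) - \lambda\beta D^{n-1}(x_1)\right) = \exp\!\left(\lambda r(x_1)\right)\exp\!\left(-\lambda\beta D^{n-1}(x_1)\right)$. Next I would introduce the two identifications $w(x_1) := \exp(\lambda r(x_1))$ and $\beta' := \lambda\beta$, so that the first factor becomes exactly the exponential reward-weighting function used throughout the paper and the second becomes $\exp(-\beta' D^{n-1}(x_1))$. Substituting these back yields $q_\theta^n(x_1) \propto w(x_1)\,q_\theta^{n-1}(x_1)\,\exp(-\beta' D^{n-1}(x_1))$, which I would then place side by side with the recursive form of Theorem \ref{theorem: induced data distribution of ORW-CFM-W2}: both carry the reward-weighting $w(x_1)$, the previous-epoch distribution $q_\theta^{n-1}(x_1)$ as a base measure, and the exponentiated W2-discrepancy penalty. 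Since the suppressed normalization constants in both statements are independent of $x_1$, equality of the unnormalized densities forces equality of the normalized distributions, establishing the equivalence and hence the reinforcement-learning interpretation.

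I do not expect a genuine analytical obstacle, since the computational content is purely algebraic. The one point requiring care is bookkeeping of the constants and the conceptual alignment of the two derivations: Theorem \ref{theorem: induced data distribution of ORW-CFM-W2} was obtained through an energy-based / importance-sampling heuristic (with $L_{\mathrm{cfm}}$ taken negligible), whereas here the same form arises from exact variational optimization of a reward-minus-discrepancy objective. I must confirm that the RL inverse-temperature $\lambda$ plays the role of the scaling constant $\gamma$ implicit in the CFM route (through $\beta = \gamma\alpha$), and, crucially, that it is the KL term taken against the \emph{previous-epoch} distribution $q_\theta^{n-1}$ — rather than against a fixed reference $q$ — that produces the multiplicative $q_\theta^{n-1}$ prefactor and therefore the recursion. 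Verifying this alignment of roles (reward as $w$, discrepancy as regularizer, previous policy as prior) is what actually licenses the conclusion that ORW-CFM-W2 is solving a KL-regularized reward-versus-discrepancy RL problem.
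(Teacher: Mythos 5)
Your proposal is correct and matches the paper's own argument: the paper likewise takes the closed-form optimum of Theorem \ref{theorem: RL perspective of RWR-W2}, splits the exponential, identifies $w(x_1)=\exp(\lambda r(x_1))$ and $\beta'=\lambda\beta$, and concludes by matching the resulting recursion to Theorem \ref{theorem: induced data distribution of ORW-CFM-W2}. Your added checks — that the KL anchor being the previous-epoch distribution $q_\theta^{n-1}$ (not a fixed reference) is what yields the multiplicative prefactor, and that $\lambda$ plays the role of the scaling constant $\gamma$ — are exactly the right bookkeeping and are consistent with, though more explicit than, the paper's presentation.
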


The update rule for $q_\theta^n\left(x_1\right)$ mirrors the policy update in RL methods like AWR, where the new policy is proportional to the old policy multiplied by the exponentiated advantage.

 The discrepancy $D^{n-1}\left(x_1\right)$ acts as a penalty term, discouraging the model from deviating too far from the \textbf{reference model}, similar to how KL divergence or entropy regularization is used in RL \citep{awr,ppo} or RLHF \citep{ddpo,dpo} to maintain policy stability and keep the fine-tuned model close to pre-trained model.

\begin{Remark}[Connection to AWR]
AWR \citep{awr} also updates the policy by weighting actions according to the exponentiated advantage. Similarly, our update rule similarly weights data points $x_1$ according to $\exp \left(\lambda * r\left(x_1\right)\right)$.
\end{Remark}

\begin{Remark}[Connection to PPO]
PPO \citep{ppo} uses a clipped objective to prevent large policy updates. In our setup, the regularization term $D^{n-1}\left(x_1\right)$ prevents the data distribution from changing too drastically. Besides, $D^{n-1}\left(x_1\right)$ can also used to limit the distance between ref model and fine-tuned model.
\end{Remark}
\begin{Remark}[Exploration and Exploitation Trade-Off] The term $\exp \left(\lambda * r\left(x_1\right)\right)$ encourages the model to prioritize data points with higher rewards. While the regularization term $\exp \left(-\beta^{\prime} D^{n-1}\left(x_1\right)\right)$ ensures the model does not deviate excessively from the reference, promoting stability and diversity (e.g., avoid the policy collapses \citep{sac})
\end{Remark}

\begin{Corollary}[Learning Flow Matching as Reinforcement Learning]
    In general, by adopting a reinforcement learning perspective, we have derived the evolution of the data distribution $q_\theta^n\left(x_1\right)$ under self-training with the RW-CFM-W2 loss. The derivation shows that:

\begin{equation}
    q_\theta^n\left(x_1\right) \propto q_\theta^{n-1}\left(x_1\right) \exp \left(\lambda * r\left(x_1\right)-\beta^{\prime} D^{n-1}\left(x_1\right)\right)
\end{equation}

This update rule mirrors policy updates in RL methods like Advantage Weighted Regression, where the new policy is updated based on the exponentiated advantage while incorporating a regularization term to ensure stability.
\end{Corollary}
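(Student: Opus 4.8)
The plan is to treat this as a constrained optimization over the space of probability densities $q$ and to solve it by the calculus of variations, exactly paralleling the argument already used for Theorem \ref{theorem: RL perspective of RWR}. The only genuinely new ingredient relative to that earlier theorem is the additive penalty $-\beta D^{n-1}(x_1)$, which I expect to propagate through the derivation as a simple shift of the effective reward, $r(x_1)\mapsto r(x_1)-\beta D^{n-1}(x_1)$. The crucial structural observation is that $D^{n-1}(x_1)$ is evaluated at the frozen parameters $\theta^{n-1}$ and at the fixed reference model, so it is a \emph{known} function of $x_1$ and not a functional of the decision variable $q$; this is what makes the shift legitimate.

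First I would form the Lagrangian enforcing the normalization constraint $\int q(x_1)\,dx_1=1$:
\begin{equation}
\mathcal{L}=\int q(x_1)\left[r(x_1)-\beta D^{n-1}(x_1)-\frac{1}{\lambda}\log\frac{q(x_1)}{q_\theta^{n-1}(x_1)}\right]dx_1-\eta\left(\int q(x_1)\,dx_1-1\right).
\end{equation}
Next I would compute the pointwise functional derivative $\delta\mathcal{L}/\delta q(x_1)$ and set it to zero: the reward and penalty terms contribute $r(x_1)-\beta D^{n-1}(x_1)$, the entropy/KL term contributes $-\tfrac{1}{\lambda}\left(\log(q/q_\theta^{n-1})+1\right)$, and the multiplier contributes $-\eta$. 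Solving the resulting stationarity condition for $\log(q/q_\theta^{n-1})$ and exponentiating gives
\begin{equation}
q_\theta^n(x_1)=q_\theta^{n-1}(x_1)\exp\!\left(\lambda r(x_1)-\lambda\beta D^{n-1}(x_1)\right)e^{-\lambda C},
\end{equation}
with $C=\tfrac{1}{\lambda}+\eta$ independent of $x_1$; absorbing $e^{-\lambda C}$ into the normalization yields the claimed proportionality. The subsequent corollaries then follow immediately by the substitutions $\beta'=\lambda\beta$ and $w(x_1)=\exp(\lambda r(x_1))$, recovering the recursive form of Theorem \ref{theorem: induced data distribution of ORW-CFM-W2}.

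The main obstacle I anticipate is not the stationarity computation, which is routine, but justifying that the critical point is in fact the global maximizer and a legitimate density. I would establish this by noting that the objective is strictly concave in $q$: the term $\mathbb{E}_q[r-\beta D^{n-1}]$ is linear in $q$, while $-\tfrac{1}{\lambda}D_{\mathrm{KL}}(q\,\|\,q_\theta^{n-1})$ is strictly concave because the KL divergence is convex in its first argument. Concavity over the convex feasible set (the probability simplex) then guarantees that the unique stationary point is the global maximizer, and positivity of the exponential together with $q_\theta^{n-1}\ge 0$ guarantees the solution is a valid nonnegative density. A secondary point I would check is finiteness of the implied normalizing constant $\int q_\theta^{n-1}(x_1)\exp(\lambda r(x_1)-\lambda\beta D^{n-1}(x_1))\,dx_1$, which I would secure under the same integrability assumption on the weighting and boundedness of $D^{n-1}$ invoked earlier (cf. Lemma \ref{lemma: finite Z} and Lemma \ref{lemma: Growth Rate of D}).
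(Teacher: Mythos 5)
Your proposal is correct and follows essentially the same route as the paper: the paper establishes this corollary as an immediate consequence of Theorem \ref{theorem: RL perspective of RWR-W2}, whose proof is exactly your Lagrangian/functional-derivative argument (treat $D^{n-1}(x_1)$ as a fixed shift of the reward, set $\delta\mathcal{L}/\delta q = 0$, exponentiate, absorb the multiplier into the normalization, then substitute $\beta'=\lambda\beta$ and $w(x_1)=\exp(\lambda r(x_1))$). Your additional checks --- strict concavity of the objective to certify the stationary point is the global maximizer, and finiteness of the normalizing constant --- are sound refinements the paper leaves implicit, but they do not change the approach.
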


\clearpage

\section{Experimental Details}
\label{sec:app Experiment Details}

\subsection{Implementation Details}
\label{sec:app Implementation Details}

To ensure reproducibility, we built our code upon the open-source flow matching code base called TorchCFM \citep{otcfm,torchcfm2} to demonstrate that our method is general enough to be applied  to fine-tuning existing flow matching based methods, which provides efficient and theoretical sound tools for training continuous normalizing flow models. TorchCFM offers a simulation-free training objective for Conditional Flow Matching (CFM), enabling conditional generative modeling and accelerating both training and inference processes.

We utilized the existing unconditional flow matching training baselines provided by TorchCFM on datasets such as MNIST and CIFAR-10. Our implementations of the RW-CFM, ORW-CFM, and ORW-CFM-W2 methods are based on the TorchCFM codebase, and we adopt the same network architecture—a U-Net \citep{unet}—for the image generation tasks. The pre-trained models used in our experiments are trained with the standard CFM loss as defined in Equation~\eqref{equ: cfm loss} on CIFAR-10 and MNIST datasets.

For the image compression task, we directly adopt the reward function from DDPO \citep{ddpo}. In the even-number generation task, we trained a binary classifier to distinguish between even and odd digits, which serves as the reward signal. For the text-image alignment task, we employ the CLIP model from OpenAI \citep{clip} to calculate the similarity between text and images; the CLIP model is pre-trained and provides a probability score representing the alignment between the generated image and the given text prompt.

The reward functions for each task can be found in App. \ref{app: reward functions}. And the hyper-parameters can be found in App. \ref{app: Hyper-parameters}.

\subsection{Resources Used}
\label{app: Resources Used}

All experiments were conducted on a system with one worker equipped with an 8-core CPU and, an NVIDIA A10 GPU, and memory of 32 GB. The MNIST task cost about 4-8 hours for training, while image compression and text-image alignment task costs about 2-3 days for training.

\clearpage

\section{Hyper-parameters}
\label{app: Hyper-parameters}

In general, in this paper, we propose a general and easy-to-use RL  method for fine-tuning flow matching generative models. Our algorithm is directly built upon the flow matching training code in TorchCFM \citep{otcfm}. To facilitate reproducibility and highlight the effectiveness of our proposed RL fine-tuning algorithm, we maintain the same flow matching training parameters and the calculation of $u_t$ in OT paths as in the TorchCFM examples. We have conducted detailed ablation experiments in the main text and appendix on the two most important hyperparameters in our method: the entropy control parameter $\tau$ and the W2 distance regularization parameter $\alpha$.

ORW-CFM-W2 method involves only a few hyperparameters, and we have carefully ablated those that affect the convergence behavior in our experiments of Sec. \ref{sec: exp}, namely $\tau$ and $\alpha$. Specifically, $\tau$ controls the convergence speed of the fine-tuning method, while $\alpha$ has a significant impact on the convergence point.

Massive experiments demonstrate that the parameter $\tau$ effectively controls the convergence speed of the policy model. In the absence of W2 regularization, $\tau$ directly influences the rate at which the policy collapses to a delta distribution, as clearly demonstrated in the theoretical analysis of our exponential form weighting form as \ref{theorem: exp case}.

Interestingly, previous methods such as DDPO \citep{ddpo} did not include KL regularization terms or other explicit distance constraints, making it difficult to control the divergence between the optimized policy and the pre-trained reference model. As a result, these methods often struggle to avoid over-optimization and policy collapse. Therefore, when employing our method, we strongly recommend setting $\alpha > 0$ to provide sufficient W2 regularization and prevent policy collapse.

\begin{Remark}
    In practice, we encourage users/readers to first use a small $\alpha$ to fine-tune the model with our ORW-CFM-W2 method, along with a relatively large $\tau$. Then, different values of $\alpha$ can be tried to customize the optimal generative models that fit the user's needs for reward and diversity. 
\end{Remark} 

For all experiments, we use a batch size of 64 and employ a separate test process to evaluate our performance.

In general, the proposed ORW-CFM-W2 method effectively fine-tunes flow matching models. We can control the convergence speed through $\tau$ and adjust the diversity of convergence points and the final reward through $\alpha$. This provides sufficient flexibility for customizing the convergence policy to meet user needs in various scenarios and enables us to achieve stable and efficient RL fine-tuning using a simple reward-weighted method.

\clearpage

\section{Reward Functions}
\label{app: reward functions}

In this section, we will describe the reward function we used in our experiments to demonstrate the effectiveness and generality of our method.

\subsection{Target Image Generation}

In the MNIST target image generation task, our objective is to fine-tune the model to generate images of even numbers. We used a pre-trained binary classifier $C(x)$, where $x$ is the generated image, to determine whether the image corresponds to an even or odd number. The classifier outputs the probability of the image being even or odd:

\begin{equation}
    C_{\text {even }}(x) = p(\operatorname{even} \mid x), \quad C_{\text {odd }}(x) = p(\text { odd } \mid x)
\end{equation}

The reward function is then defined as the difference between the classifier's probabilities of the image being even versus odd:

\begin{equation}
    r(x)=p(\text { even } \mid x)-p(\operatorname{odd} \mid x)
\end{equation}

Where $p($ even $\mid x)=C_{\text {even }}(x)$ is the probability that the generated image $x$ is classified as an even digit (i.e., digits $\{0,2,4,6,8\}$ ), and $p($ odd $\mid x)=C_{\text {odd }}(x)$ is the probability that the generated image $x$ is classified as an odd digit (i.e., digits $\{1,3,5,7,9\}$ ).

This reward encourages the model to maximize $p(\operatorname{even} \mid x)$ while minimizing $p(\operatorname{odd} \mid x)$, guiding the model to generate even digits more frequently. The fine-tuning objective can be formalized as:

\begin{equation}
    \max _\theta \mathbb{E}_{x \sim q_\theta(x)}[r(x)]=\mathbb{E}_{x \sim q_\theta(x)}\left[C_{\text {even }}(x)-C_{\text {odd }}(x)\right]
\end{equation}

This reward function enables the model to be fine-tuned to produce even numbers without needing explicit labels or a dataset directly composed of even digits.

Furthermore, it is worth noting that for this type of binary classifier-guided reward, we can easily modify the task to generate the opposite class of images by simply using the negative of the reward function. For example, to generate odd digits, we define the reward as $r(x)=p$ (odd $\mid$ $x)-p($ even $\mid x)$. The results for generating odd digits are provided in App. \ref{app: gen odd mnist}.

\subsection{Image Compression}
 For the image compression task, we followed the reward function from DDPO \citep{ddpo}, where the goal was to either minimize  the file size of the images after JPEG compression. The reward $r(x)$ is proportional to the compression rate. We controlled the balance between compression and diversity using the regularization parameter $\alpha$, which induces varying degrees of divergence between the fine-tuned model and the reference model. A lower $\alpha$ prioritizes compressibility, leading the model to generate images that occupy minimal storage space after compression, while higher $\alpha$ increases diversity in the generated outputs.

\subsection{Text-Image Alignment Using CLIP}
For the text-image alignment task, we utilized the CLIP model, which computes the similarity between text prompts and generated images. The reward function in this case was based on the probability score generated by the CLIP model. Given an image $x$ and a text description $t$, the CLIP model outputs a probability $p_{\text {clip }}(x, t)$, which indicates the alignment between the image and the text. The reward function for this task was defined as:

\begin{equation}
    r(x, t)=p_{\text {clip }}(x, t)
\end{equation}

This reward allows the model to fine-tune its output images to better match the provided text prompts, optimizing for higher similarity as measured by the CLIP model. In practice, we will use a difference form in probability to promote training like $r(x, \text{"cat"})=p_{\text {clip }}(x, \text{"an image of cat"})-p_{\text {clip }}(x, \text{"not an image of cat"})$.

\clearpage

\section{Additional Experimental Results}

\subsection{Aligning Small-Scale Flow Matching Models}
\label{app: Aligning Small-Scale Flow Matching Models}

We first evaluate by guiding an unconditional generative model pre-trained on CIFAR-10 \citep{torchcfm2} to maximize the text-image similarity score based on CLIP \citep{clip}, specifically aligning the generated images with a given text prompt via reward (e.g., "an image of cat"). As shown in Figure \ref{fig: text cifar alpha}, without W2 regularization (i.e., $\alpha=0$), the ORW-CFM method tends to fall into a greedy policy that maximizes reward but sacrifices diversity. However, with W2 regularization, our agent achieves a more diverse convergent optimal policy while maintaining similar final performance.

\begin{figure}[!ht]
\vspace{-0.1in}
	\centering
   \subfigure[$\alpha=0$]{
 \includegraphics[width=0.2\linewidth]{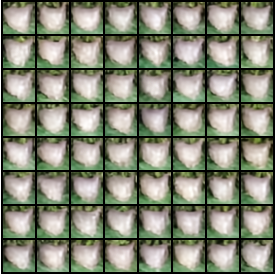}
    }
  \subfigure[$\alpha=1$]{
 \includegraphics[width=0.2\linewidth]{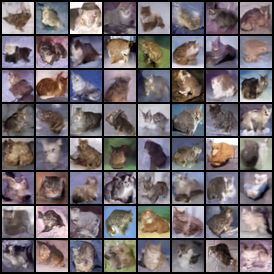}
    }
      \subfigure[Reward Curve]{
\includegraphics[width=0.25\linewidth,height=0.125\textheight]{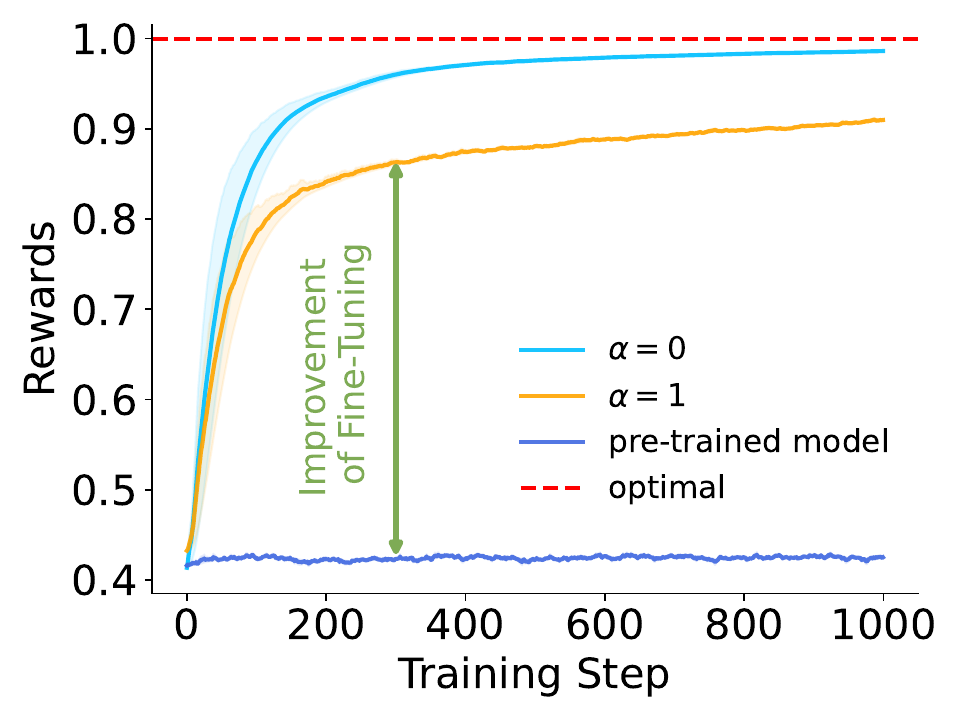}
    }
    \subfigure[W2 Curve]{
\includegraphics[width=0.25\linewidth,height=0.125\textheight]{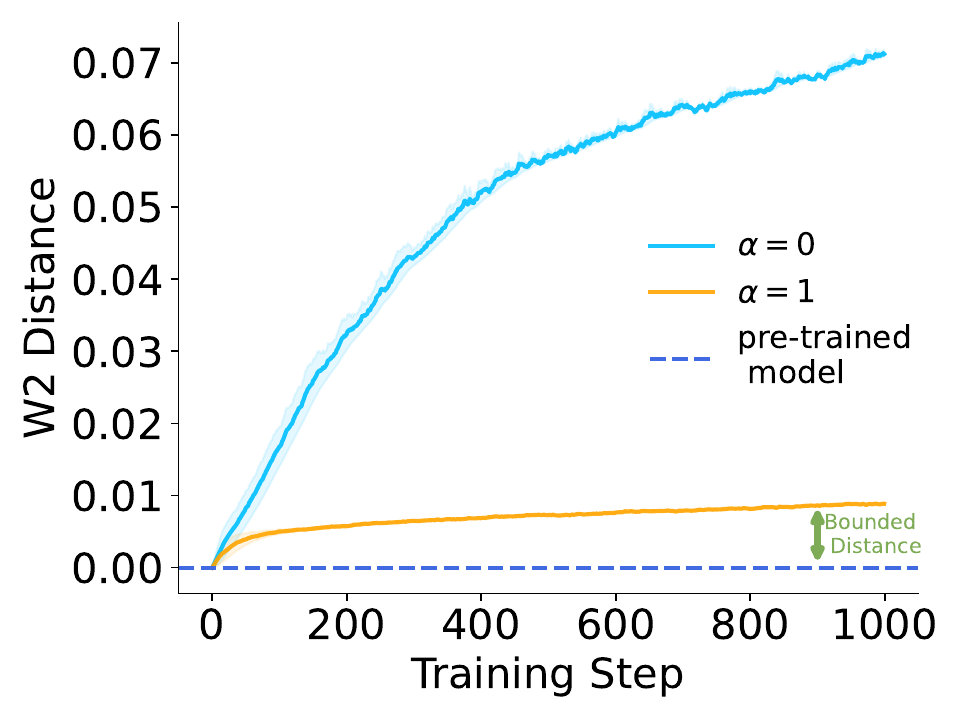}}
    
	\caption{Diversity Control with different $\alpha$ while $\tau=1$ and reward is Text-Image Similarity Probability Score (e.g., $p_{\text {clip }}(x, \text{"an image of cat"})$ for a cat image). W2 distance is estimated by its upper bound (see App. \ref{app sec: W2 distance}). In practice, we will use a difference form in probability to promote training like $r(x, \text{"cat"})=p_{\text {clip }}(x, \text{"an image of cat"})-p_{\text {clip }}(x, \text{"not an image of cat"})$.}
 \label{fig: text cifar alpha}
 \vspace{-0.1in}
\end{figure}

\subsection{Reward Shaping}
\label{app: gen odd mnist}
To demonstrate the adaptability of our model to various reward objectives, we can directly modify the reward function to change the fine-tuning target. For instance, we transformed the original even-preference reward function, $r = p(even) - p(odd)$, into $r = p(odd) - p(even)$ to generate odd numbers instead. The results show that our method successfully fine-tunes the flow model according to the new reward objective, leading to the generation of only odd numbers.

We set $w(x)$ as $w(x) = \exp(\tau \cdot r(x))$ with $\tau = 1$ (except for the last one with $\tau=0$ as control group), and experimented with different values of $\alpha$ to control the divergence between the fine-tuned model and the reference model, which induces diversity in the convergent fine-tuned model. As illustrated in Fig. \ref{fig: odd gen alpha control}, without W2 regularization, our method converges to the optimal generative policy, though with limited diversity. However, as the W2 regularization increases with higher values of $\alpha$, our model still performs the odd-number generation task effectively, while also showing increased diversity in the generated samples.

\begin{figure}[!ht]
	\centering
   \subfigure[$\alpha=0$]{
 \includegraphics[width=0.23\linewidth]{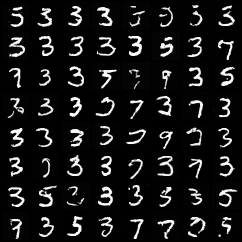}
    }
  \subfigure[$\alpha=0.1$]{
 \includegraphics[width=0.23\linewidth]{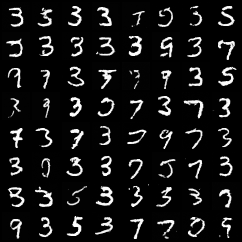}
    }
 \subfigure[$\alpha=0.2$]{
 \includegraphics[width=0.23\linewidth]{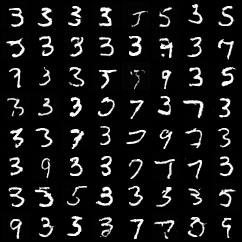}
    }
       \subfigure[$\alpha=0.3$]{
 \includegraphics[width=0.23\linewidth]{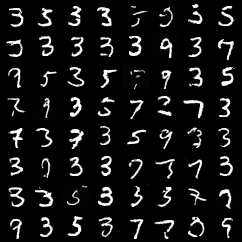}
    }

    \subfigure[$\alpha=0.4$]{
 \includegraphics[width=0.23\linewidth]{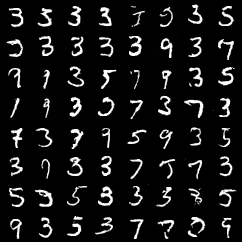}
    }
  \subfigure[$\alpha=0.5$]{
 \includegraphics[width=0.23\linewidth]{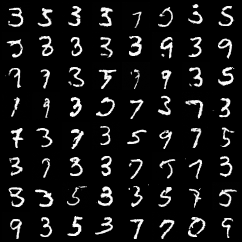}
    }
 \subfigure[$\alpha=0.6$]{
 \includegraphics[width=0.23\linewidth]{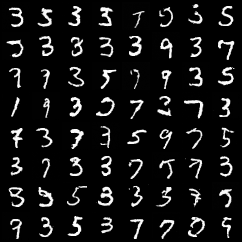}
    }
       \subfigure[$\alpha=0.7$]{
 \includegraphics[width=0.23\linewidth]{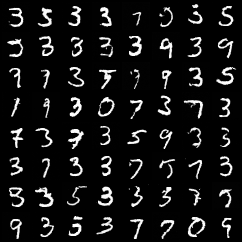}
    }

    \subfigure[$\alpha=0.8$]{
 \includegraphics[width=0.23\linewidth]{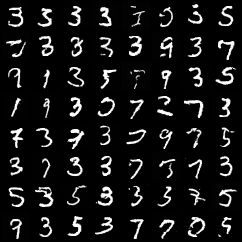}
    }
  \subfigure[$\alpha=0.9$]{
 \includegraphics[width=0.23\linewidth]{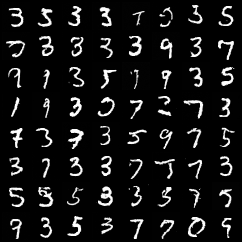}
    }
 \subfigure[$\alpha=1$]{
 \includegraphics[width=0.23\linewidth]{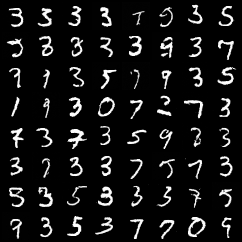}
    }
       \subfigure[$\tau=0$]{
 \includegraphics[width=0.23\linewidth]{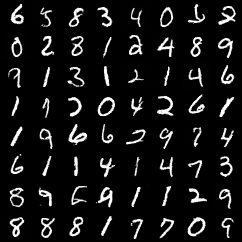}
    }
    \label{fig: odd gen alpha control}
    \caption{Odd Number Generation in MNIST.}
\centering
\end{figure}

\subsection{More Text-Image Alignment Results}
In this section, we provide more text-image alignment results:

\begin{figure}[!ht]
	\centering
   \subfigure[$\alpha=0$]{
 \includegraphics[width=0.3\linewidth]{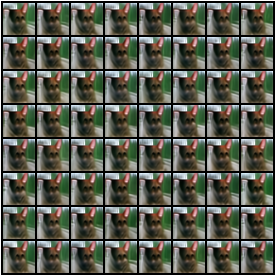}
    }
      \subfigure[$\alpha=1$]{
\includegraphics[width=0.3\linewidth]{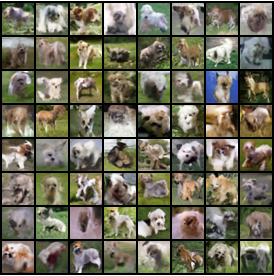}
    }
	\caption{Diversity Control with different $\alpha$ while $\tau=1$ and reward is Text-Image Similarity Probability Score (e.g., $p_{\text {clip }}(x, \text{"an image of dog"})$ for a dog image). In practice, we will use a difference form in probability to promote training like $r(x, \text{"dog"})=p_{\text {clip }}(x, \text{"an image of dog"})-p_{\text {clip }}(x, \text{"not an image of dog"})$.}
\end{figure}

\begin{figure}[!ht]
	\centering
   \subfigure[$\alpha=0$]{
 \includegraphics[width=0.3\linewidth]{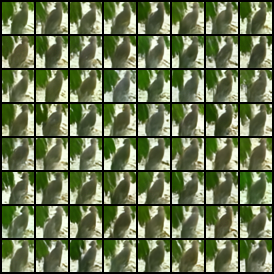}
    }
      \subfigure[$\alpha=1$]{
\includegraphics[width=0.3\linewidth]{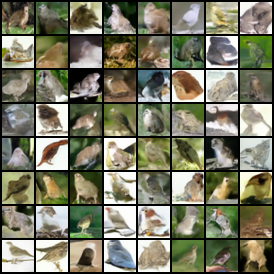}
    }
	\caption{Diversity Control with different $\alpha$ while $\tau=1$ and reward is Text-Image Similarity Probability Score (e.g., $p_{\text {clip }}(x, \text{"an image of bird"})$ for a bird image). In practice, we will use a difference form in probability to promote training like $r(x, \text{"bird"})=p_{\text {clip }}(x, \text{"an image of bird"})-p_{\text {clip }}(x, \text{"not an image of bird"})$.}
\end{figure}

\clearpage

\section{Algorithm Pseudocode}
\label{app: Pseudocode}

In fact, we can implement the proposed fine-tuning algorithm by making minor changes based on the conditional flow matching implementation in OT-CFM \citep{otcfm}. In this section, we will demonstrate the Pseudocode of each fine-tuning algorithm proposed in this paper. For the origin CFM pipeline we refer the readers to \citep{otcfm}.

\begin{algorithm}[H]
\caption{Reward Weighted Conditional Flow Matching (RW-CFM in Theorem \ref{theorem: rwr-cfm})}
\begin{algorithmic}
\REQUIRE Pre-trained model $v_{\theta_{\text{ref}}}$ and induced sample distribution $q_{\theta_{\text{ref}}}(x_1)$, efficiently samplable $p_t(x|x_1)$,  computable $u_t(x|x_1)$, initial network $v_{\theta}$, reward function $r(x_1)$, weighting function $w(x_1)=\mathcal{F}(r(x_1))$ where $w(x_1) \propto r(x_1)$, entropy hyper-parameter $\tau$, $\mathcal{F}$ can be $\exp$ or $\operatorname{Softmax}$
\STATE $\theta \gets \theta_{\text{ref}}$
\WHILE{Training}
    \STATE $x_1 \sim q_{\theta_{\text{ref}}}(x_1)$; \quad $t \sim \mathcal{U}(0, 1)$; \quad $x \sim p_t(x|x_1)$ 
    \STATE $r \gets r(x_1)$ ;\quad $w = \mathcal{F}(\tau * r)$
    \STATE $\mathcal{L}_{\text{RW-CFM}}(\theta) \gets $ \eqref{equ: RW-CFM loss}
    \STATE $\theta \gets \text{Update}(\theta, \nabla_{\theta} \mathcal{L}_{\text{RW-CFM}}(\theta))$
\ENDWHILE
\RETURN $v_{\theta}$
\end{algorithmic}
\end{algorithm}

\begin{algorithm}[H]
\caption{Online Reward Weighted Conditional Flow Matching (ORW-CFM in Theorem \ref{theorem: Online Reward Weighted CFM})}
\begin{algorithmic}
\REQUIRE Parameterized efficiently samplable $q_{\theta}(x_1), p_t(x|x_1)$,  computable $u_t(x|x_1)$, initial network $v_{\theta}$, reward function $r(x_1)$, weighting function $w(x_1)=\mathcal{F}(r(x_1))$ where $w(x_1) \propto r(x_1)$, entropy hyper-parameter $\tau$, $\mathcal{F}$ can be $\exp$ or $\operatorname{Softmax}$, pre-trained model $v_{\theta_{\text{ref}}}$ and induced sample distribution $q_{\theta_{\text{ref}}}(x_1)$
\STATE  $\theta \gets \theta_{\text{ref}}$; \quad $q_{\theta}(x_1) \gets q_{\theta_{\text{ref}}}(x_1)$
\WHILE{Training}
    \STATE $x_1 \sim q_{\theta}(x_1)$; \quad $t \sim \mathcal{U}(0, 1)$; \quad $x \sim p_t(x|x_1)$ 
   \STATE $r \gets r(x_1)$ ;\quad $w = \mathcal{F}(\tau * r)$
    \STATE $\mathcal{L}_{\text{ORW-CFM}}(\theta) \gets$ \eqref{equ: orw-cfm loss}
    \STATE $\theta \gets \text{Update}(\theta, \nabla_{\theta} \mathcal{L}_{\text{ORW-CFM}}(\theta))$
\ENDWHILE
\RETURN $v_{\theta}$
\end{algorithmic}
\end{algorithm}

\begin{algorithm}[H]
\caption{Online Reward Weighted Conditional Flow Matching with Wasserstein-2 Regularization (ORW-CFM-W2 in Theorem \ref{theorem: induced data distribution of ORW-CFM-W2})}
\begin{algorithmic}
\REQUIRE Parameterized efficiently samplable $q_{\theta}(x_1), p_t(x|x_1)$,  computable $u_t(x|x_1)$, initial network $v_{\theta}$, reward function $r(x_1)$, weighting function $w(x_1)=\mathcal{F}(r(x_1))$ where $w(x_1) \propto r(x_1)$, entropy hyper-parameter $\tau$ and W2 hyper-parameter $\alpha$, $\mathcal{F}$ can be $\exp$ or $\operatorname{Softmax}$, pre-trained model $v_{\theta_{\text{ref}}}$ and induced sample distribution $q_{\theta_{\text{ref}}}(x_1)$
\STATE  $\theta \gets \theta_{\text{ref}}$; \quad $q_{\theta}(x_1) \gets q_{\theta_{\text{ref}}}(x_1)$
\WHILE{Training}
    \STATE $x_1 \sim q_{\theta}(x_1)$; \quad $t \sim \mathcal{U}(0, 1)$; \quad $x \sim p_t(x|x_1)$
    \STATE $r \gets r(x_1)$ ;\quad $w = \mathcal{F}(\tau * r)$
    \STATE $\mathcal{L}_{\text{ORW-CFM-W2}}(\theta) \gets$ \eqref{equ: orw-cfm-w2 loss}
    \STATE $\theta \gets \text{Update}(\theta, \nabla_{\theta} \mathcal{L}_{\text{ORW-CFM-W2}}(\theta))$
\ENDWHILE
\RETURN $v_{\theta}$
\end{algorithmic}
\end{algorithm}

\clearpage

\section{Discussion and Limitation}
\label{app: Discussion and Limitation}

Current RL fine-tuning methods, like DPO \citep{dpo} or many reward-based RLHF methods for large language models (LLM) \citep{offline,ReFT,Rest} normally rely on a well-collected or filtered offline dataset, requiring massive manual assistance and high costs, which is far from automated fine-tuning. However, traditional online fine-tuning methods are usually too complicated, with too many hyper-parameters, and it is difficult for us to find the relationship between each hyperparameter and the convergent policy, making the overoptimization problem hard to handle.

In fact, the existing LLM model and large text-image generative models has a very strong data sampling capability, but traditional online RL fine-tuning methods may easily cause the model to forget old knowledge, completely collapse to a local greedy policy, and fail to jump out of the local optimum. However, if we can control the collapse speed of the policy (via $\tau$) and the distance between the converged policy and the reference policy (via $\alpha$), then the online RL fine-tuning method can not only converge to a good and diverse policy without the need for additional manually labeled/filtered data, but also avoid over-optimization \citep{ddpo}.

In general, one of the primary motivations of our paper is to propose an easy-to-use and controllable reinforcement learning algorithm for fine-tuning continuous flow matching generative models toward arbitrary reward objectives. 
Although flow matching has demonstrated powerful generative capabilities in models like Stable Diffusion 3 \citep[SD3]{sd3}, an easy-to-use and theoretically comprehensive RL fine-tuning method for continuous flow matching models has not been widely studied. Previous methods are either offline \citep{offline}, rely on filtered datasets and cannot be applied to arbitrary reward models \citep{dpo}, or lack an explicit divergence bound between the learned policy and the reference model, which may be incapable of addressing over-optimization and policy collapse issues \citep{ddpo}. Furthermore, most methods have not been extensively applied or studied in continuous flow matching fine-tuning tasks. To address these limitations, we propose an online reward-weighted conditional flow matching with the Wasserstein-2 regularization (ORW-CFM-W2) method based on reward-weighted RL \citep{rwr,awr}. Our goal is to achieve a \textit{controllable RL fine-tuning algorithm} that allows us to control both the convergence speed and the convergence point (e.g., managing the trade-off between reward and diversity, see Fig. \ref{fig: reward-distance trade-off cifar com}).

To this end, we conducted a detailed analysis of the optimal policy distribution induced by our algorithm and experimentally verified the influence of the entropy coefficient $\tau$ and the regularization coefficient $\alpha$ on the convergence speed and convergence point. 

Interestingly, we established the similarity between our method and traditional RL methods with KL regularization in policy updates. We found that $\tau$ effectively controls the speed (or range) of the policy updates, thereby affecting the convergence speed, and when the W2 regularization is absent, it influences the rate of policy collapse. The coefficient $\alpha$ directly affects the diversity of the final convergent policy by controlling the divergence from the reference model—assuming, of course, that the pre-trained model is not a purely greedy policy.

In practice, we recommend using the ORW-CFM-W2 method with at least $\alpha > 0$ during fine-tuning to effectively avoid issues such as policy collapse.

Additionally, since our method does not make any assumptions about the reward function, it can readily adapt to new tasks by simply adjusting the reward, \textbf{without the need to modify other hyper-parameters or collect new datasets manually}. For example, in the MNIST dataset, we can encourage the generation of images of odd digits by defining the reward as $- (p(\text{even}) - p(\text{odd}))$, thereby penalizing even digits and rewarding odd ones (See App. \ref{app: gen odd mnist}). In other words, our method shifts the focus from fine-tuning tasks to designing more interesting reward functions, enabling the creation of various flow matching generative models that meet specific user needs without carefully designed offline dataset.

It is important to mention that there are still many unresolved issues in RL for flow matching fine-tuning tasks, such as data efficiency, more general reward shaping methods, off-policy online fine-tuning methods, and online fine-tuning of large text-image generative models like SD3 \citep{sd3}. These topics warrant further investigation in future work.

\clearpage

\end{document}